\newcommand{\norm}[1]{\left|\left| #1 \right|\right|}
\newcommand{\inner}[2]{\left< #1,#2 \right>}
\theoremstyle{definition}
\newtheorem{example}{Example}
\renewcommand{\check}{\widecheck}
\renewcommand{\bar}{\overline}
\newcommand{\diag}{{\rm diag}}
\newcommand{\Span}{{\rm span}}
\newcommand{\idem}{ {\rm idem}}
\newcommand{\res}{{\rm res}}
\newcommand{\supp}{{\rm supp}}
\newcommand{\one}{\bm{1}}
\newcommand{\zero}{\bm{0}}
\newcommand{\alg}{{\rm alg}}
\newcommand{\Ea}{{\mathbb{E}}} 
\newcommand{\bigequation}[1]{
\begin{figure*}[!t]
    \normalsize
    #1
    \hrulefill{}
    \vspace*{4pt}
\end{figure*}
}
\theoremstyle{theorem}
\newtheorem{problem}{Problem}
\newtheorem{lemma}{Lemma}
\newtheorem{proposition}{Proposition}
\newtheorem{conjecture}{Conjecture}
\newtheorem{theorem}{Theorem}
\newtheorem{corollary}{Corollary}
\newtheorem{definition}{Definition}
\theoremstyle{definition}
\theoremstyle{remark}
\newtheorem{remark}{Remark}
\newcommand{\R}{{\mathbb{R}}}
\newcommand{\Prob}{{\mathbb{P}}}
\newcommand{\E}{{\mathbb{E}}}
\title{\LARGE \bf Algebraic Reduction of Hidden Markov Models}
\author{Tommaso Grigoletto and Francesco Ticozzi
\thanks{T. Grigoletto and F. Ticozzi are with the Department of Information Engineering, University of Padova, Via Gradenigo 6, 35131 Padova, Italy. Emails: 
\href{mailto:tommaso.grigoletto@phd.unipd.it}{\texttt{tommaso.grigoletto@phd.unipd.it}},
\href{mailto:ticozzi@dei.unipd.it}{\texttt{ticozzi@dei.unipd.it}}.}}
\date{\today}
\begin{document}

\maketitle

\begin{abstract}
The problem of reducing a Hidden Markov Model (HMM) to one of smaller dimension that exactly reproduces the same marginals is tackled by using a system-theoretic approach. Realization theory tools are extended to HMMs by leveraging suitable algebraic representations of probability spaces. We propose two algorithms that return coarse-grained equivalent HMMs obtained by stochastic projection operators: the first returns models that exactly reproduce the single-time distribution of a given output process, while in the second the full (multi-time) distribution is preserved. The reduction method exploits not only the structure of the observed output, but also its initial condition, whenever the latter is known or belongs to a given subclass. Optimal algorithms are derived for a class of  HMM, namely observable ones. 
\end{abstract}



\section{Introduction}

Hidden Markov processes are an ubiquitous class of stochastic models that has extensive application in modeling and prediction for speech \cite{HMMspeech,HMMspeech2}, biological systems \cite{HMMbiology, HMMbiology2, HMMbiology3, vidysagarBiology}, information and communication systems \cite{HMMcomunication, HMMcomunication2, HMMcomunication3}. Dedicated optimal control and estimation methods have been developed for this class of models, see e.g.   \cite{elliott2008hidden,kumar2015stochastic,Arapostathis1993}.

In the development of the realization theory for HMMs, two related yet well distinct problems emerge: {\em constructing an HMM from data}, and {\em reducing an existing model}, when possible, to an equivalent one of smaller size. For an analysis and review of the first one, see for example \cite{vidyasagarHiddenMarkovProcesses2011,mevel2000bayesian},  and more recent results in \cite{7362037}.
In this paper, we shall focus on the reduction problem. Besides its theoretical interest, methods for model reduction are critical in effectively addressing problems in large-scale systems \cite{antoulas2005approximation, cheng2021model,sandberg2009model}. A characterization of equivalent HMMs, that is, models that produce the same output marginals of a given one, is proposed in \cite{itoIdentifiabilityHiddenMarkov1992}.  Their treatment of equivalent HMMs is based on the definition of {\em effective} spaces, which specify equivalence classes of HMMs, representing the HMM analogue of minimal realizations spaces for linear systems. In the same paper, the authors pose the problem of finding a minimal equivalent HMM. As a reduction to the effective space does not guarantee to preserve the positivity of the model, the problem has so far remained unsolved.  

In this paper, we show how effective spaces can be extended so that the reduced model remains an HMM. In fact, we propose a general approach to the model reduction problem that is based on an {\em algebraic description of probability spaces}. While this is done very frequently and almost implicitly, we take a deeper look into the algebraic structures and the associated representations. In particular, we shall need minimal algebraic models that represent a set of random variables (r.v)  and conditional expectations. Such an algebraic approach has been developed to generalize the classical Kolmogorov description to the non-commutative case so that it suitably covers quantum mechanics \cite{vonNeumann,bratteli-robinson,meyer}, but it has proven useful in many other areas,  from random matrix theory (see, e.g. the insightful introduction \cite{terrence_tao_free_probability}) to algebraic statistics \cite{schonhuth2011complete}.
In our setting, the algebraic framework and the induced matrix representations allow us to leverage on observability and reachability ideas in the characterization of equivalent models, as well as linear-algebraic algorithms that compute reduced models.
Our approach remains deeply rooted in the system-theoretic analysis of the dynamical model and can be seen as a way to construct {\em reduced stochastic realizations}  for an HMM. Furthermore, the proofs of effectiveness for the proposed methods all hinge on a result of model reduction for switched linear systems, In order to maintain the focus on HMM, the latter is presented in Appendix \ref{sec:switching}).

In what follows, we  deal with reductions of a given HMM that {\em exactly} reproduce the marginals of the original systems. This allows us to clearly illustrate the working and theoretical foundation of the method: extension to approximate reduction will be the focus of upcoming work.

Similar problems have been studied from different perspectives: in particular,  the concept of lumpability of Markov processes \cite{kemenyFiniteMarkovChains1983a}, which induces coarse-grained processes analogous to those presented here, has been employed to characterize a class of exactly reducible HMMs (2-lumpable systems), see \cite{895565} and references therein.
Other works, as \cite{ay2005reductions} and references therein, reframe the problem using cellular automata for hidden information sources and study reductions of Markov transition kernels within this abstract approach.

The differences between our approach and the existing results are manifold, both in the tools used and the nature of the results. 
In the proposed framework, we introduce and solve two types of reduction problems: preserving only the single-time marginal, or the full (multi-time) distribution of the outcomes. We show that the former, which is of interest in model reduction of master equations for statistical models or mixing processes and algorithms \cite{apers}, can lead to further reduction and smaller final models, as one might expect. 
In addition, our reductions leverage not only the structure of the measured process, but also the particular initial distribution of the HMM. We show that the initial conditions are indeed critical for obtaining minimal reductions in many situations, in particular, when the original model is initialized in an equilibrium density.
The method hinges on the use of conditional expectations as projections for obtaining a reduced representation of the dynamics. While the idea is certainly not new to the control community, see e.g. the derivation of Kalman filters \cite{kalman1961new,elliott2008hidden}, in this work we develop it in an algebraic framework. After representing a conditional expectation as a linear operator, we construct stochastic, non-square factorization of its dual with respect to the inner product associated to the expectation: the factors are then used to obtain the reduced probabilistic description, preserving its stochastic character.
Lastly, we make direct contact with system-theoretic ideas in a linear-algebraic framework, which allows for effective, practically implementable algorithms for the reduction process. In fact, while the whole analysis could be carried out in the infinite-dimensional case, we here restrict to the finite case: in order to derive computable algorithms a finite-dimensional approximation would be needed anyway.

The structure of the paper is as follows:
In Section \ref{sec:geometric_approach_to_probability_theory} we review the fundamentals of the algebraic probabilistic models needed for our aims. The approach is directly borrowed from non-commutative probability \cite{meyer, maassen} and its use in quantum theory, where the algebras used for embedding the probability space need not be commutative (and are typically infinite-dimensional \cite{bratteli-robinson}), and can then be used to model quantum systems \cite{vonNeumann}. As remarked above, in this work we only use commutative, finite-dimensional associative algebras, represented as $\R^n$ endowed with its element-wise product. Subsection \ref{sec:conditional_expectations} is focused on conditional expectations as linear maps on algebras, their duals, and their representations. These are some of the key tools in the development of our method.

Section \ref{sec:problem_definition} is devoted to introducing the notation and the problems of interest, namely obtaining reduced models that reproduce either the single-time marginals or the multi-time marginals of a given HMM, while Section \ref{sec:system_theoretic_approach} presents some preliminary results that build upon \cite{itoIdentifiabilityHiddenMarkov1992} from an explicit system-theoretic perspective. The main results of the section are obtained specializing a switched-system result that we derive in Appendix \ref{sec:switching} to maintain the focus on HMMs.
The key ideas we leverage to obtain reduced HMMs are described in \ref{sec:single-time_solution}, where a class of reduction algorithms for the single-time marginal problem is developed. Section \ref{sec:multi-time_solution} then extends and adapts these ideas to the multi-time marginal problem. 
A key point in our analysis is that, in order to develop the algorithms, we must switch from the abstract quotient spaces of \cite{itoIdentifiabilityHiddenMarkov1992} to a representative effective subspace. We show that the choice of representative has a non-trivial effect on the reduction itself.
How to select this and other parameters used in the algorithms is discussed in Section \ref{sec:optimal_choice_of_parameters}, where we provide optimal choices for a class of models that  includes observable HMMs and Markov chains. The same choices prove to be optimal in all the tested examples, also in the presence of non-observable components of the reachable space.
Some particularly instructive examples are given in Section \ref{sec:examples}, and an outlook on future developments is provided with the concluding remarks in Section \ref{sec:conclusion}.

\subsection{Basic Notation}
In the following, we typically denote vectors $\bm {v}\in \R^n$ in boldface, and matrices in capitals $V\in \R^{n\times m}.$ We denote $\one$, the vector of all ones, and $\zero$ the vector of all zeros. The matrix transpose of $V$ is $V^T$. Given a vector $\bm x\in\mathbb{R}^n$ and the standard basis $\{\bm e_i\}$ for $\R^n,$ we define its support as the vector space $\supp(\bm x) = \Span\{\bm e_i| \bm e_i^T\bm x\neq 0\}$. Given a vector space $\mathcal{V}\subseteq\mathbb{R}^n,$  its {\em support} is defined as the vector space $\supp(\mathcal{V}) = \Span\{\bm e_i| \exists \bm x\in\mathcal{V} \text{ s.t } \bm e_i^T\bm x\neq 0\}$. $\diag(\cdot)$ is the operator that, given a vector $\bm v$, $\diag(\bm v)$ returns a diagonal matrix with $[\diag(\bm v)]_{i,i} = \bm v_i$.

\section{Algebraic approach to probability theory}
\label{sec:geometric_approach_to_probability_theory}

The central idea in algebraic probability models is to represent all the key ingredients of a classical probabilistic model as elements of a suitable algebra $\mathscr{A}$, endowed with a probability functional (or state) $p$. 
In the following sections, we start from a probability space $(\Omega,\Sigma,\Prob)$ and briefly review how to construct an algebraic representation $(\mathscr{A},p),$ with $\mathscr{A}\subseteq \R^n$. Correspondingly, we show that any pair $(\mathscr{A},p)$ admits a classical representation. This allows for a natural probabilistic interpretation of the proposed reduction method.

\subsection{Fundamentals of algebraic probabilistic models}
\label{sec:random_variables}

\subsubsection{Events and $\sigma$-Algebras}
Throughout the rest of this article, we will consider finite-dimensional probability spaces $(\Omega,\Sigma,\Prob)$. Without loss of generality, we can assume $\Omega = \{1, \dots, n\}$.

The first step in the construction entails the vector representation of events. The latter are in 1-to-1 correspondence to indicator functions: let $I_E(\omega)$ be the indicator function associated with the event $E.$ Since the probability space is finite-dimensional, we can further associate indicator functions to vectors in $\R^n$. In particular, each indicator of an elementary event $\omega\in\Omega$ can be associated to its corresponding vector of the standard basis, i.e. $\bm e_\omega\in\R^n$. Similarly, we can define {\em indicator vectors} for any event $E\in\Sigma$ as $\bm f_E = \sum_{\omega\in E}\bm e_\omega$. For these vectors, $(\bm f_E)_\omega = 1$ if $\omega\in E$ and zero otherwise. Notice that $\bm f_\Omega = \one$, and $\bm f_\emptyset=\zero$.

Let us denote with $\mathcal{F}_{\Sigma}$ the set of indicator vectors of the events of the $\sigma$-algebra $\Sigma.$ Let $\wedge$ denote the element-wise product $(\bm v\wedge \bm w)_i =\bm v_i\bm w_i$,  $\vee$ denote the modified sum operation defined as $\bm v \vee \bm w = \bm v + \bm w - \bm v\wedge \bm w$ and $\neg$ denote the negation operation defined as $\neg \bm v = \one - \bm v$. 
By construction, the set $\mathcal{F}_{\Sigma}$ equipped with the operations $\wedge, \vee, \neg$ is isomorphic to the $\sigma$-algebra $\Sigma$ with $\cap, \cup, \overline{\cdot}$. 
 In the following, we  refer to $\mathcal{F}_\Sigma$ as a {\em vector $\sigma$-algebra}, and we will drop the subscript when unnecessary. 

A {\em vector partition of $\Omega$} is a subset $\mathcal{P}\subseteq\mathcal{F}\setminus\{\zero\}$ such that $\bm f_i\wedge \bm f_j = \zero$, for all $\bm f_i, \bm f_j \in\mathcal{P}$, $i\neq j$ and $\one = \vee_{\bm f_j\in \mathcal{P}} \bm f_j$.
The {\em finest resolution} 
in $\mathcal{F}$ is a partition $\res(\mathcal{F})$ such that $\bm f = \vee_{\bm f_j\in \res(\mathcal{F})} c_j \bm f_j$ with $c_j\in\{0,1\}$, for all $\bm f\in\mathcal{F}$.

Note that $\res(\mathcal{F})$ is not necessarily equal to the standard basis of $\mathbb{R}^n$ since, in general, $\Sigma$ is contained but not equal to the power set of $\Omega$. We shall also denote $\res(\Sigma)$ to indicate the finest resolution of a classical $\sigma$-algebra.

 \subsubsection{Random variables}
Random variables (r.v.)  are $\Sigma$-measurable functions $X(\omega):\Omega\to\mathbb{A}\subset\mathbb{R}$, where $\mathbb{A}=\{x_i\}$ is the finite set of outcomes of $X,$ called the alphabet. Let $E_i=X^{-1}(x_i)$. An r.v. $X$ can also be represented as linear combination of indicator function $X(\omega) = \sum_{i=0}^{|\mathbb{A}|} x_i I_{E_i}(\omega).$

Using the vector representation $\bm f_{E_i}$ of indicator functions $I_{E_i}$ in the previous equation, 
each $X$ can also be represented as a vector \[\bm x=\sum_{i=1}^{|\mathbb{A}|} x_i\bm f_{E_i}\in\R^n\] such that $\{\bm f_i\}\subset\mathcal{F}_{\Sigma}$ forms a partition of $\Omega$.
Notice that in the vector formalism, the notion of $\mathcal{F}_\Sigma$-measurability is equivalent to the condition $\bm x\in\Span\{\mathcal{F}_\Sigma\}$. Here and elsewhere, the boldface font $\bm x$ is used for (vector representations of) r.v.s, while $x$ denotes the corresponding outcome.
As we show below, $\Span\{\mathcal{F}_\Sigma\}$ has the property of being an {\em algebra}, namely a vector space (or subspace) that is closed under the element-wise product $\wedge$.  An algebra is {\em unital} if it contains $\one$.  The whole $\R^n$ is then an unital algebra, and we  denote its subalgebras using the script font, e.g. $\mathscr{A}$.  
A non-unital algebra $\mathscr{A}$ still contains the vector $\one_\mathscr{A},$ which has entries $1$ on the support of $\mathscr{A}$ and $0$ otherwise and acts as the product identity in $\mathscr{A}$.

The following proposition collects some known facts which clarify the relation between $\mathcal{F}_{\Sigma}$ and $\mathscr{A}=\Span\{\mathcal{F}_\Sigma\}$ and proves that it is indeed an algebra. 
\begin{proposition}
    \label{prop:alg}
    If $\mathcal{F}\subset \mathbb{R}^n$ is a vector $\sigma$-algebra, then $\mathscr{A} = \Span\{\mathcal{F}\}$ is the smallest subalgebra in $\mathbb{R}^n$ containing $\mathcal{F}$, and it is unital.
    Conversely, let $\mathscr{A}$ be any unital subalgebra in $\mathbb{R}^n$  and $\idem(\mathscr{A}) := \{\bm f\in\mathscr{A}|\bm f\wedge \bm f = \bm f\}\subset\mathscr{A}$ be the set of idempotent vectors in $\mathscr{A}$. Then $\idem(\mathscr{A})$ is the smallest $\sigma$-algebra such that every element in $\mathscr{A}$ is $\mathcal{F}$-measurable and $\res(\idem(\mathscr{A}))$ forms an orthogonal basis for $\mathscr{A}$.
\end{proposition}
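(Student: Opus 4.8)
The plan is to prove the two halves of Proposition~\ref{prop:alg} separately, in each case exhibiting the claimed object and then verifying minimality.

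For the first half, I would start from the observation that every indicator vector $\bm f\in\mathcal{F}$ is idempotent, $\bm f\wedge\bm f=\bm f$, since its entries are in $\{0,1\}$; in particular $\one=\bm f_\Omega\in\mathcal{F}$, so any algebra containing $\mathcal{F}$ is unital and $\mathscr{A}=\Span\{\mathcal{F}\}$ is unital once we show it is an algebra. To see that $\Span\{\mathcal{F}\}$ is closed under $\wedge$, I would take two generic elements $\bm x=\sum_i a_i\bm f_i$ and $\bm y=\sum_j b_j\bm g_j$ with $\bm f_i,\bm g_j\in\mathcal{F}$, expand $\bm x\wedge\bm y=\sum_{i,j}a_ib_j\,(\bm f_i\wedge\bm g_j)$ by bilinearity of the element-wise product, and note that $\bm f_i\wedge\bm g_j\in\mathcal{F}$ because $\mathcal{F}$ is a vector $\sigma$-algebra (closed under $\wedge$, the image of $\cap$ under the isomorphism with $\Sigma$). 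Hence $\bm x\wedge\bm y\in\Span\{\mathcal{F}\}$. Minimality is immediate: any subalgebra containing $\mathcal{F}$ is in particular a subspace containing $\mathcal{F}$, hence contains $\Span\{\mathcal{F}\}$.

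For the second half, let $\mathscr{A}$ be a unital subalgebra and set $\mathcal{F}=\idem(\mathscr{A})$. I would first check $\mathcal{F}$ is a vector $\sigma$-algebra: it is closed under $\wedge$ (if $\bm f\wedge\bm f=\bm f$ and $\bm g\wedge\bm g=\bm g$ then $(\bm f\wedge\bm g)\wedge(\bm f\wedge\bm g)=\bm f\wedge\bm g$ by commutativity/associativity of $\wedge$), contains $\one=\one_\mathscr{A}$ and $\zero$, and is closed under $\neg\,\bm f=\one-\bm f$ (since $(\one-\bm f)\wedge(\one-\bm f)=\one-2\bm f+\bm f=\one-\bm f$, using that $\mathscr{A}$ is an algebra so $\one-\bm f\in\mathscr{A}$); closure under $\vee$ then follows from its definition in terms of $\wedge,+,\neg$. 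Next, every element of $\mathscr{A}$ is $\mathcal{F}$-measurable: the key fact I would invoke is that a commutative algebra of vectors in $\R^n$ under $\wedge$ is, up to permutation, block-diagonal — more concretely, I would diagonalize by looking at the common ``level sets'' of $\mathscr{A}$, i.e. the equivalence relation on the support of $\mathscr{A}$ that identifies coordinates $i\sim j$ whenever $\bm x_i=\bm x_j$ for all $\bm x\in\mathscr{A}$. The indicator vectors of these equivalence classes lie in $\mathscr{A}$ (one can build them as polynomials in a generic element of $\mathscr{A}$ that separates the classes, using that $\mathscr{A}$ is a unital algebra), they are idempotent, mutually orthogonal, sum to $\one_\mathscr{A}$, and every element of $\mathscr{A}$ is constant on each class — hence is a linear combination of these indicators. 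This simultaneously shows the equivalence-class indicators form $\res(\idem(\mathscr{A}))$, that they are an orthogonal basis of $\mathscr{A}$, and that any $\sigma$-algebra making all of $\mathscr{A}$ measurable must contain them, giving minimality.

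The main obstacle is the middle step of the second half: rigorously producing the indicator vectors of the common level sets inside $\mathscr{A}$. This is the finite-dimensional commutative Gelfand/structure-theoretic content — that a $\wedge$-closed subspace of $\R^n$ is spanned by orthogonal idempotents — and the clean way to get it is to pick $\bm x\in\mathscr{A}$ taking distinct values on distinct level sets (a generic element works) and use Lagrange-interpolation polynomials $p_k(\bm x)$, noting $\wedge$-powers of $\bm x$ stay in $\mathscr{A}$ and the constant term is handled by unitality; the idempotents $p_k(\bm x)$ are then the desired $\res(\idem(\mathscr{A}))$. Once this is in place the orthogonality, the basis property, and both minimality claims are short bookkeeping.
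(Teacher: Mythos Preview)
Your proposal is correct. The first half is essentially identical to the paper's argument.

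For the second half, both you and the paper verify that $\idem(\mathscr{A})$ is a vector $\sigma$-algebra by the same direct checks, but the construction of the minimal idempotents differs. The paper extracts them from each $\bm x\in\mathscr{A}$ by an iterative procedure: normalize so the entry of largest modulus becomes $1$, form $\bm x''=\tfrac12(\bm x'+\bm x'\wedge\bm x')$ to kill a possible $-1$, then take $\lim_{k\to\infty}(\bm x'')^{\wedge k}$ to isolate the indicator of the top level set, subtract and repeat. Your route is the standard structure-theoretic one: define the equivalence relation on coordinates via common level sets of $\mathscr{A}$, pick a single generic element separating all classes (the union of finitely many proper subspaces $\{\bm x:\bm x_i=\bm x_j\}$ cannot exhaust $\mathscr{A}$), and apply Lagrange interpolation polynomials in the $\wedge$-powers of that element, using unitality for the constant term. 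Your argument is cleaner and more algebraic, avoiding the limit; the paper's is more explicitly constructive and processes one element at a time, but needs closedness of the finite-dimensional subspace to justify the limit staying in $\mathscr{A}$. Both yield the same minimal idempotents and the remaining bookkeeping (orthogonality, basis property, minimality of $\idem(\mathscr{A})$) is the same.
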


A proof of this proposition is reported in Appendix \ref{sec:proofs} for completeness. This proposition shows that, not only does the space of $\mathcal{F}_\Sigma$-measurable random variables form an unital subalgebra, but, more importantly, given any unital subalgebra $\mathscr{A}$, it is possible to find the {\em minimal} (vector) $\sigma$-algebra that makes every random variable in $\mathscr{A}$ measurable. 
For convenience, in the following, we  refer to $\res(\idem(\mathscr{A}))$ as $\res(\mathscr{A})$.

\subsubsection{Probability and expectations}
Let now consider a probability measure $\Prob:\Omega\to[0,1]$. 
For any probability measure $\Prob[\cdot]$ on $\Sigma$ we can define a vector as follows \[\bm p := \sum_{\omega\in\Omega} \frac{\Prob[\omega]}{\inner{\bm f_\omega}{\bm f_\omega}}\bm f_\omega.\]
Then, for any $\bm f_E\in\mathcal{F}_\Sigma$ it is immediate to verify that $\Prob[E] = \inner{\bm p}{\bm f_E}$. 
In particular, notice that if we can write $\bm p := \sum_{\bm f_r\in\res(\mathscr{A})}p_{r} \bm f_r,$ we find that $\bm p$ can be interpreted as a random variable in the same algebra, $\bm p\in\mathscr{A}$. 

A vector $\bm p$ is said to be a probability vector if $\bm p_i\geq0$ for all $i$ and $\one^T\bm p = 1$. The set of probability vectors in $\mathscr{A}$ is defined as
\(\mathcal{D}(\mathscr{A}):= \{\bm p\in\mathscr{A}|\bm p_i\geq0 \quad\forall i, \quad \one^T\bm p=1\}\). Note that $\mathcal{D}(\mathscr{A}) = \mathcal{D}(\R^n)\cap\mathscr{A}.$ 

Consider a r.v. $X$ and let us denote again with $\bm f_{i}$ the indicator function associated to the outcome $x_i$. It then holds that $\Prob[X = x_i]=\inner{\bm p}{\bm f_{i}}$. 
Similarly, we can compute the expectation of a random variable as \(\E[\bm x] = \sum_j x_j \Prob[E_j]=\sum_j x_j \left<\bm p,\bm f_j\right> = \left<\bm p,\bm x\right>\). 

\vspace{2mm} In summary, we have shown that an unital subalgebra $\mathscr{A}$ can subsume both the $\sigma$-algebra and the space of measurable random variables of a given probability space. Moreover, it is equivalent to a probability space when paired with a positive linear functional, associated to the inner product with a probability vector $\bm p$. Conversely, given a pair $(\mathscr{A}, \bm p),$  we can always construct a (classical) probability space associated with the pair. This can be done by choosing $\Omega = \{1,\dots,n\}$ and the underlying $\sigma$-algebra $\Sigma$ associated to $\idem(\mathscr{A})$ as in Proposition \ref{prop:alg}. Lastly, $\bm p$ represents the probability distribution associated with the functional $\Prob[E]=\inner{\bm p}{\bm f_E}$. 

\subsection{Stochastic maps and Conditional Expectations}
\label{sec:conditional_expectations}

Let us now focus on the maps between probability vectors. 

Consider two unital subalgebras $\mathscr{F}$ of $\R^n$ and $\mathscr{G}$ of $\mathbb{R}^m$. A linear map between probability vectors $P[\cdot]:\mathcal{D}(\mathscr{F})\to\mathcal{D}(\mathscr{G})$, $\bm p\mapsto \bm q = P[\bm p]$ is called a {\em stochastic map.}
Such a map can be represented as a {\em (column)-stochastic matrix} $P\in\mathbb{R}^{m\times n}$, i.e. a matrix such that $(P)_{i,j}\geq0$ $\forall i,j$ and $\one_m^T P=\one_n^T$.     

In the following, the main task will be to find reduced descriptions of linear dynamics associated with stochastic maps. In doing this, we  exploit the properties of a particular class of stochastic maps: the duals of conditional expectations.

Recall that the conditional expectation of an r.v. given a $\sigma$-algebra $\Sigma$ with finest resolution $\res(\Sigma)$ can be written as follows:
\begin{equation}
 \E[X|\Sigma] = \sum_{E\in\res(\Sigma)}\frac{\E[I_EX]}{\E[I_E]}I_E(\omega).
 \label{eqn:classic_cond_prob_def}
 \end{equation}

Let consider a vector r.v. $\bm x\in\mathscr{F}\subseteq\mathbb{R}^n$, a unital algebra $\mathscr{A}\subseteq\mathscr{F}$ with $\{\bm a_i\}=\res(\mathscr{A})$ and $d = \dim(\mathscr{A})< n$, and the underlying probability measure $\bm p$. Following the previous definition, we can define the conditional expectation for the vector r.v. with respect to an algebra $\mathscr{A}$:
\begin{equation*}
    \E_{\bm p}[\bm x|\mathscr{A}] := \sum_{j=1}^d \frac{\inner{\bm p}{\bm x\wedge \bm a_j}}{\inner{\bm p}{\bm a_j}}  \bm a_j.
\end{equation*}
Noticing that it is a linear operator acting on $\bm x$ we can represent it as a matrix $\Ea_{|\mathscr{A},\bm p}\in\mathbb{R}^{n\times n}$, namely:
\begin{equation}
    \Ea_{|\mathscr{A},\bm p} = 
    \sum_{j=1}^d \frac{\bm a_j(\bm p\wedge \bm a_j)^T}{\inner{\bm p}{\bm a_j}}.  
    \label{eqn:cond_prob_def}
\end{equation}
Consider the inner product of the conditional expectation of $\bm x$ with a probability distribution $\bm q$, which we have shown to correspond to its expectation. The dual of the conditional expectation is then a map on the probability distribution defined as:
\begin{align*}
    \inner{\bm q}{\Ea_{|\mathscr{A},\bm p}\bm x}& =   \inner{\Ea_{|\mathscr{A},\bm p}^T\bm q}{\bm x}
\end{align*}
which gives in
\begin{equation}
    \Ea_{|\mathscr{A},\bm p}^T =  
    \sum_{j=1}^d \frac{(\bm p\wedge \bm a_j)\bm a_j^T}{\inner{\bm p}{\bm a_j}}.
    \label{eqn:dual_cond_prob_def}
\end{equation}
It is immediate to verify that $\Ea_{|\mathscr{A},\bm p}^T$ is stochastic.

The conditional expectation and its adjoint are orthogonal projectors with respect to a modified inner product. Notice that $\bm p\wedge\mathscr{A}=\Span\{\bm p\wedge \bm a_i\} = \diag(\bm p)\mathscr{A}$.

\begin{lemma}
\label{lem:cond_exp_invariance}
Let consider the modified inner product $\inner{\bm v}{\bm w}_{\bm p} = \Ea_{\bm p}[\bm v\wedge \bm w],$ with $\bm p > \bm 0$. Then $\Ea_{|\mathscr{A},\bm p}$ is the orthogonal projector onto $\mathscr{A}$ with respect to the inner product $\inner{\cdot}{\cdot}_{\bm p}$ and $\Ea_{|\mathscr{A},\bm p}^T$ is the orthogonal projector onto $\bm p\wedge\mathscr{A}$ with respect to the inner product $\inner{\cdot}{\cdot}_{\bm p^{-1}}$. 
\end{lemma}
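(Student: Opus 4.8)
The plan is to verify, for each of the two operators, the three properties that characterize the orthogonal projector onto a subspace $\mathscr{V}$ with respect to a weighted inner product $\inner{\cdot}{\cdot}_D$ (here $D=\diag(\bm p)$ or $D=\diag(\bm p)^{-1}$), namely: it is idempotent, its image is $\mathscr{V}$, and it is self-adjoint for $\inner{\cdot}{\cdot}_D$, which in matrix form means $DM=(DM)^T$. Throughout I would work under the standing assumption that $\bm p$ is faithful, $\bm p_i>0$ for all $i$ (otherwise one restricts all algebras and vectors to $\supp(\bm p)$ and reads $\bm p^{-1}$ as the pseudoinverse of $\diag(\bm p)$); faithfulness is exactly what makes $\inner{\bm v}{\bm w}_{\bm p}=\inner{\bm p}{\bm v\wedge\bm w}=\bm v^T\diag(\bm p)\bm w$ and $\inner{\bm v}{\bm w}_{\bm p^{-1}}=\bm v^T\diag(\bm p)^{-1}\bm w$ genuine inner products, and keeps the scalars $\inner{\bm p}{\bm a_j}$ in \eqref{eqn:cond_prob_def}--\eqref{eqn:dual_cond_prob_def} strictly positive.

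For $\Ea_{|\mathscr{A},\bm p}$ I would use only the relations satisfied by $\res(\mathscr{A})=\{\bm a_j\}$ from Proposition~\ref{prop:alg}: the $\bm a_j$ are idempotent and mutually disjoint, $\bm a_i\wedge\bm a_j=\delta_{ij}\bm a_j$, with $\sum_j\bm a_j=\one$, and they are orthogonal for the standard inner product. From \eqref{eqn:cond_prob_def}, $\Ea_{|\mathscr{A},\bm p}\bm x=\sum_j\frac{\inner{\bm p}{\bm a_j\wedge\bm x}}{\inner{\bm p}{\bm a_j}}\bm a_j$, so $\img\Ea_{|\mathscr{A},\bm p}\subseteq\mathscr{A}$; evaluating on $\bm a_k$ and using $\bm a_j\wedge\bm a_k=\delta_{jk}\bm a_j$ gives $\Ea_{|\mathscr{A},\bm p}\bm a_k=\bm a_k$. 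Hence $\Ea_{|\mathscr{A},\bm p}$ restricts to the identity on $\mathscr{A}$, which at once yields $\img\Ea_{|\mathscr{A},\bm p}=\mathscr{A}$ and $\Ea_{|\mathscr{A},\bm p}^2=\Ea_{|\mathscr{A},\bm p}$.

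Self-adjointness is the only step with content, and it collapses to a one-line computation: since $\diag(\bm p)\bm a_j=\bm p\wedge\bm a_j$, formula \eqref{eqn:cond_prob_def} gives
\begin{equation*}
\diag(\bm p)\,\Ea_{|\mathscr{A},\bm p}=\sum_{j=1}^{d}\frac{(\bm p\wedge\bm a_j)(\bm p\wedge\bm a_j)^T}{\inner{\bm p}{\bm a_j}},
\end{equation*}
a sum of symmetric rank-one matrices, hence symmetric. Thus $\Ea_{|\mathscr{A},\bm p}$ is self-adjoint for $\inner{\cdot}{\cdot}_{\bm p}$, which together with the previous paragraph proves the first claim.

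For the dual, the identity just obtained reads $\diag(\bm p)\Ea_{|\mathscr{A},\bm p}=\Ea_{|\mathscr{A},\bm p}^T\diag(\bm p)$, i.e. $\Ea_{|\mathscr{A},\bm p}^T=\diag(\bm p)\,\Ea_{|\mathscr{A},\bm p}\,\diag(\bm p)^{-1}$, so $\Ea_{|\mathscr{A},\bm p}^T$ is similar to $\Ea_{|\mathscr{A},\bm p}$ through $\diag(\bm p)$; consequently it is idempotent and $\img\Ea_{|\mathscr{A},\bm p}^T=\diag(\bm p)\,\img\Ea_{|\mathscr{A},\bm p}=\diag(\bm p)\mathscr{A}=\bm p\wedge\mathscr{A}$ (equivalently, read the range off \eqref{eqn:dual_cond_prob_def} and check $\Ea_{|\mathscr{A},\bm p}^T(\bm p\wedge\bm a_k)=\bm p\wedge\bm a_k$ via $\bm a_j^T(\bm p\wedge\bm a_k)=\inner{\bm p}{\bm a_j\wedge\bm a_k}$). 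Finally $\diag(\bm p)^{-1}(\bm p\wedge\bm a_j)=\bm a_j$ gives $\diag(\bm p)^{-1}\Ea_{|\mathscr{A},\bm p}^T=\sum_j\frac{\bm a_j\bm a_j^T}{\inner{\bm p}{\bm a_j}}$, again symmetric, so $\Ea_{|\mathscr{A},\bm p}^T$ is self-adjoint for $\inner{\cdot}{\cdot}_{\bm p^{-1}}$, completing the proof. I do not foresee a real obstacle: the one point to state with care is the faithfulness of $\bm p$, which is what turns the two weighted forms into honest inner products and makes $\diag(\bm p)$ invertible; everything else is bookkeeping with $\bm a_i\wedge\bm a_j=\delta_{ij}\bm a_j$ and $\diag(\bm p)\bm a_j=\bm p\wedge\bm a_j$.
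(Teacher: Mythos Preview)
Your proof is correct and follows essentially the same route as the paper: verify image, idempotency, and self-adjointness by showing that $\diag(\bm p)\,\Ea_{|\mathscr{A},\bm p}$ (resp.\ $\diag(\bm p)^{-1}\Ea_{|\mathscr{A},\bm p}^T$) is symmetric for the standard inner product. Your use of the similarity $\Ea_{|\mathscr{A},\bm p}^T=\diag(\bm p)\,\Ea_{|\mathscr{A},\bm p}\,\diag(\bm p)^{-1}$ to transfer idempotency and the image to the dual is a slight streamlining over the paper's ``identical reasoning,'' but the argument is the same in substance.
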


The proof of this lemma is reported in Appendix \ref{sec:proofs} for completeness.

\begin{remark}
Note that the above Lemma also implies that $\Ea_{|\mathscr{A},\bm p}$ acts as the identity on $\mathscr{A}$ while $\Ea_{|\mathscr{A},\bm p}^T$ acts as the identity on $\bm p\wedge\mathscr{A}$.
Furthermore, they are orthogonal projections for the standard inner product $\inner{\cdot}{\cdot}$ if (and only if) $\bm p\in\mathcal{D}(\mathscr{A})$ and is positive, namely $\bm p = \sum_j \lambda_j \bm a_j\in\mathbb{R}^n$ with $\lambda_j>0$, $\sum_j\lambda_j = 1$. In this case, we have $\Ea_{|\mathscr{A},\bm p}=\Ea_{|\mathscr{A},\bm p}^T.$
\end{remark}

Consider the standard basis $\{\bm e_j\}$ for $\mathbb{R}^d,$ where $d$ is the dimension of $\mathscr{A}.$ We can then construct a (full-rank) stochastic factorization of $\Ea^T_{|\mathscr{A},\bm p}$.

\begin{proposition}
 \label{prop:stochastic_factors}
 Define
 \begin{equation}
    J = \sum_{j=1}^d \frac{(\bm p\wedge\bm a_j) \bm e_j^T}{\inner{\bm p}{\bm a_j}}\in\mathbb{R}^{n\times d},\quad
    R = \sum_{j=1}^d \bm e_j \bm a_j^T\in\mathbb{R}^{d\times n}.
    \label{eqn:exp_factorization}
 \end{equation}
Then $J,R$ are stochastic matrices that satisfy $JR = \Ea_{|\mathscr{A},\bm p}^T$, $RJ = I_d$, $\ker(R) = \mathscr{A}^\perp$ and $\ker(J^T) = (\bm p\wedge\mathscr{A})^\perp$. 
 \end{proposition}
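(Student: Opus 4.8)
The plan is to verify each assertion by a direct computation, leaning on two structural facts about the resolution $\{\bm a_j\}=\res(\mathscr{A})$: its elements are idempotent and mutually disjoint, so that $\bm a_j\wedge\bm a_k=\delta_{jk}\bm a_j$, and they sum to the unit of the algebra, $\sum_{j=1}^d\bm a_j=\one$ (here using that $\mathscr{A}$ is unital, as in Proposition~\ref{prop:alg}, so that $\res(\mathscr{A})$ is a vector partition of $\Omega$). I will also use repeatedly the elementary identity $\inner{\bm v}{\bm w}=\inner{\one}{\bm v\wedge\bm w}$ and the standing assumption $\inner{\bm p}{\bm a_j}>0$, which is exactly what makes $\Ea_{|\mathscr{A},\bm p}$ well defined in the first place.

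For stochasticity, I would read off the columns. The $i$-th row of $R$ is $\bm a_i^T$, whose entries lie in $\{0,1\}$, so $R\geq 0$; and $\one_d^T R=\sum_i\bm a_i^T=\one_n^T$. The $k$-th column of $J$ is $(\bm p\wedge\bm a_k)/\inner{\bm p}{\bm a_k}\geq 0$, and its entries sum to $\inner{\one}{\bm p\wedge\bm a_k}/\inner{\bm p}{\bm a_k}=\inner{\bm p}{\bm a_k}/\inner{\bm p}{\bm a_k}=1$, so $\one_n^T J=\one_d^T$. Hence both matrices are column-stochastic.

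For the product identities I would multiply the two series term by term and use $\bm e_j^T\bm e_k=\delta_{jk}$: this collapses $JR$ to $\sum_j(\bm p\wedge\bm a_j)\bm a_j^T/\inner{\bm p}{\bm a_j}$, which is precisely the expression~\eqref{eqn:dual_cond_prob_def} for $\Ea_{|\mathscr{A},\bm p}^T$. For $RJ$ the surviving scalar in each term is $\bm a_j^T(\bm p\wedge\bm a_k)=\inner{\bm p}{\bm a_j\wedge\bm a_k}=\delta_{jk}\inner{\bm p}{\bm a_j}$, so after the denominators cancel one gets $RJ=\sum_j\bm e_j\bm e_j^T=I_d$; in particular this confirms that $J$ has full column rank and $R$ full row rank, so the factorization is indeed full-rank as claimed before the statement.

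Finally, for the kernels I would note that $R\bm x=\sum_j\inner{\bm a_j}{\bm x}\bm e_j$, so $R\bm x=\zero$ iff $\bm x$ is orthogonal to every $\bm a_j$, i.e. iff $\bm x\in\mathscr{A}^\perp$ since $\{\bm a_j\}$ spans $\mathscr{A}$; and $J^T\bm x=\sum_j\inner{\bm p\wedge\bm a_j}{\bm x}\bm e_j/\inner{\bm p}{\bm a_j}$, which vanishes iff $\bm x\perp\bm p\wedge\bm a_j$ for all $j$, i.e. iff $\bm x\in(\bm p\wedge\mathscr{A})^\perp$, using $\bm p\wedge\mathscr{A}=\Span\{\bm p\wedge\bm a_j\}$ as already observed. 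I do not anticipate a genuine obstacle: the entire argument is bookkeeping with the two series, and the only care needed is to keep the normalizations straight and to invoke disjointness/idempotency of the $\bm a_j$ at the right moments — without those facts the identity $RJ=I_d$ (and hence the minimality of the representation) would fail.
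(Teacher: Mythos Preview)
Your proof is correct and follows essentially the same approach as the paper: direct verification of stochasticity from the column sums, the identity $\bm a_j^T(\bm p\wedge\bm a_k)=\inner{\bm p}{\bm a_j\wedge\bm a_k}=\delta_{jk}\inner{\bm p}{\bm a_j}$ for $RJ=I_d$, and the spanning property of $\{\bm a_j\}$ (resp.\ $\{\bm p\wedge\bm a_j\}$) for the kernels. If anything, your write-up is slightly more complete, since you also spell out the $JR=\Ea_{|\mathscr{A},\bm p}^T$ identity and argue both inclusions for the kernel equalities, whereas the paper leaves these implicit.
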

 \begin{proof}
 $J$ and $R$ are clearly positive since both $\{\bm a_j\}$ and $\{\bm e_j\}$ are vectors of zeros and ones and $\bm p$ is positive. $J$ is clearly stochastic, $\one_n^TJ = \one_d^T$ since $\one_n^T(\bm p\wedge \bm a_j) = \inner{\bm p}{\bm a_j}$. On the other hand, we have $\one_d^TR=\sum_{j=1}^d \bm a_j^T = \one_{n}$, since $\mathscr{A}$ is unital. 
 
 We can then observe that $\bm a_j^T(\bm p\wedge\bm a_k) = \inner{\bm p}{\bm a_j\wedge\bm a_k} = \inner{\bm p}{\bm a_j}\delta_{j-k}$ to conclude that $RJ=I_d$. Finally, if we consider $\bm x\in\mathscr{A}^\perp$, i.e. $\inner{\bm x}{\bm a_j}=0$ for all $j$ we obtain $R\bm x =\zero$ and, similarly, if $\bm x\in(\bm p\wedge\mathscr{A})^\perp$, i.e. $\inner{\bm x}{\bm p\wedge\bm a_j}=0$ for all $j$ we obtain $J^T\bm x =\zero$.
 \end{proof}
 
 This stochastic factorization induces a reduction in the probabilistic description. In fact, we have that for each distribution $\bm q$ and r.v. $\bm p$:
 \begin{align*}
     \inner{\bm q}{\Ea_{|\mathscr{A},\bm p}\bm x}& =   \inner{\Ea_{|\mathscr{A},\bm p}^T\bm q}{\bm x} = \inner{JR\bm q}{\bm x}\\ 
     &= \inner{R\bm q}{J^T{\bm x}} = \inner{\check{\bm q}}{\check{\bm x}}
 \end{align*}
where we define the {\em reduced distribution}  as $\check{\bm q} := R\bm q\in\mathcal{D}(\mathbb{R}^d)$ and {reduced random variable} $\check{\bm x} := J^T\bm x \in\mathbb{R}^d$. This property shows that, given an unital algebra $\mathscr{A}$, it is possible to reduce the probabilistic description of the set of measurable events to the space $\mathbb{R}^d$ with $d=\dim(\mathscr{A})$. For this reason, we name $R$ the stochastic reduction and $J$ the stochastic injection.

 In order to obtain smaller reduced models, it is useful to notice that even if $\mathscr{A}$ is a non-unital subalgebra of $\R^n$, namely the subalgebra has limited support, we can still use the reduction via factorization. 
In particular, we can use definitions \eqref{eqn:cond_prob_def}, \eqref{eqn:dual_cond_prob_def} and \eqref{eqn:exp_factorization} to define orthogonal (for a modified product) projections on the algebra, their dual, and their factorization. We use the notation $\E_{|\mathscr{A}, \bm p}$ for simplicity, even if these are not true conditional expectations. 
One relevant difference, in this case, is highlighted in the following.

\begin{corollary}
 \label{cor:stochastic_factors_non_unital}
 Let $\mathscr{A}$ be a non-unital subalgebra and $\bm p$ be such that $\bm p_i>0$ for all $i$, then $\Ea^T_{|\mathscr{A},\bm p}$ allows for a factorization $\Ea_{|\mathscr{A},\bm p}^T = JR$ with $J$ and $R$ as defined above. Moreover, $J$ is stochastic while $R$ is stochastic over the support of $\mathscr{A}$, i.e $\one^T_d R = \one^T_{\supp(\mathscr{A})}$ and $\one^T_{\supp(\mathscr{A})}J = \one^T_d$.
 \end{corollary}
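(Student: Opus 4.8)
The strategy is to mimic the proof of Proposition \ref{prop:stochastic_factors}, tracking carefully where the unitality of $\mathscr{A}$ was actually used and verifying that the weaker claim survives on the support. The key observation is that a non-unital subalgebra $\mathscr{A}\subseteq\R^n$ is, by definition, a unital subalgebra of $\R^{\supp(\mathscr{A})}$: its unit is $\one_\mathscr{A}$, the indicator vector of $\supp(\mathscr{A})$, and $\res(\mathscr{A})=\{\bm a_j\}$ is a partition of $\supp(\mathscr{A})$ rather than of all of $\Omega$, so that $\sum_{j=1}^d \bm a_j = \one_\mathscr{A}$ (with zeros outside the support) instead of $\one_n$. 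First I would restate $J$ and $R$ exactly as in \eqref{eqn:exp_factorization}, noting that since $\bm p_i>0$ for all $i$ we have $\inner{\bm p}{\bm a_j}>0$, so both sums are well defined.

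Next I would recheck the four conclusions of Proposition \ref{prop:stochastic_factors} one at a time. Positivity of $J$ and $R$ is unchanged: the $\bm a_j$ and $\bm e_j$ are $0/1$ vectors and $\bm p>0$. The identity $RJ=I_d$ still holds verbatim, because it only used $\bm a_j^T(\bm p\wedge\bm a_k)=\inner{\bm p}{\bm a_j}\delta_{jk}$, which is a property of the partition $\{\bm a_j\}$ and does not see the ambient dimension; consequently $JR=\Ea_{|\mathscr{A},\bm p}^T$ by the same computation as before. The column sums of $J$ are again $\one_n^T J = \one_d^T$, since $\one_n^T(\bm p\wedge\bm a_j)=\inner{\bm p}{\bm a_j}$ regardless of support — so $J$ is genuinely stochastic. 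The one place the argument changes is the column sums of $R$: now $\one_d^T R = \sum_{j=1}^d \bm a_j^T = \one_\mathscr{A}^T = \one_{\supp(\mathscr{A})}^T$ rather than $\one_n^T$, which is exactly the asserted statement that $R$ is stochastic only over the support. Restricting $J$ to its nonzero rows (those in $\supp(\mathscr{A})$, since each $\bm p\wedge\bm a_j$ is supported there) gives the companion identity $\one_{\supp(\mathscr{A})}^T J = \one_d^T$.

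The proof is essentially bookkeeping, so the "main obstacle" is conceptual rather than technical: one must be comfortable with the idea that $R$ maps $\R^n$ onto $\R^d$ by ignoring (annihilating) the complement of the support — indeed $\ker(R)=\mathscr{A}^\perp\supseteq \supp(\mathscr{A})^\perp$ — and that the loss of global stochasticity of $R$ is precisely the price paid for this annihilation; probability mass sitting outside $\supp(\mathscr{A})$ is discarded rather than redistributed. I would close by remarking that this is why $\Ea_{|\mathscr{A},\bm p}^T$, though still a positive idempotent with $\Ea_{|\mathscr{A},\bm p}^T{}^2 = JRJR = JR$, is no longer a stochastic map on all of $\mathcal{D}(\R^n)$ but only maps $\mathcal{D}(\R^n)$ into the nonnegative cone of $\mathscr{A}$ after renormalization — the subtlety that will matter when these projections are composed with HMM dynamics in the later sections. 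If a fuller argument is wanted, each of the displayed identities above expands to a one-line computation identical to the corresponding line in the proof of Proposition \ref{prop:stochastic_factors}, with $\one_n$ replaced by $\one_{\supp(\mathscr{A})}$ wherever unitality was invoked.
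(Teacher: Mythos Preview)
Your proposal is correct and follows exactly the approach of the paper: the paper's proof simply states that everything goes through as in Proposition~\ref{prop:stochastic_factors} except that $\sum_{j=1}^d \bm a_j^T = \one_{\supp(\mathscr{A})}^T$ and $\one_{\supp(\mathscr{A})}^T(\bm p\wedge \bm a_j) = \inner{\bm p}{\bm a_j}$, which is precisely the bookkeeping you carry out in detail. Your additional remarks (that $\one_n^T J = \one_d^T$ still holds, so $J$ is genuinely stochastic on all of $\R^n$, and the interpretation of what is lost on the complement of the support) are correct elaborations but go beyond what the paper records.
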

\begin{proof}
The proof is the same as \ref{prop:stochastic_factors} with the only difference that $\sum_{j=1}^d \bm a_j^T = \one_{\supp(\mathscr{A})}$ and $\one_{\supp(\mathscr{A})}^T(\bm p\wedge \bm a_j) = \inner{\bm p}{\bm a_j}$ holds, since $\mathscr{A}$ is not unital.
\end{proof}

\section{HMM and Problem definition}
\label{sec:problem_definition}

Throughout the rest of this work, we  consider stochastic processes that can be described as Markov processes or Hidden Markov processes (HMPs). 

A stochastic process $\{\bm x_t\}$ is a collection of r.v.s taking values in the finite alphabet $\mathbb{A}_{\bm x}$, indexed by time $t$. Without loss of generality, we can assume $\mathbb{A}_{\bm x}=\{1,2,\ldots,n\}$. As the alphabet is independent of time, we can choose a fixed resolution of indicator vectors $\{\bm f_{i}\}$ with respect to which $\bm x_t$ is measurable at all times, the standard basis for $\mathbb{R}^n$ being the most compact one.  With this choice, $\{\bm x_t\}$ is a sequence in $\R^n.$ 
In the following we  thus denote by $\bm x_{0:k}$ a stochastic process with $t=0,\dots,k$, with $x_{0:k}\in\mathbb{A}_{\bm x}^{k+1}$ an ordered sequence of its outcomes, i.e. $x_{0:k} = x_0, x_1, \dots , x_k$, where $x_i\in\mathbb{A}_{\bm x}$ for all $i$ and $|x_{0:k}|=k+1$. Then, the joint probability of a sequence of outcomes can be written as \(\Prob\left[\bm x_0=x_0, \dots, \bm x_k = x_k\right] = \Prob\left[\bm x_{0:k} = x_{0:k}\right]\).
A stochastic process $\{\bm x_t\}$ in $\mathbb{R}^n$ is an homogeneous Markov process if \[\Prob[\bm x_{t+1}=x_{t+1}|\bm x_{0:t} = x_{0:t}] = \Prob[\bm x_{t+1}=x_{t+1}|\bm x_t = x_t]\]
and such probability is independent of $t$ for all pairs $x_{t+1},x_{t}$. 

 In this case, we have that there exists an initial probability vector $\bm p_0\in\mathbb{R}^n$ and a stochastic matrix $P\in\R^{n\times n}$ called the {\em transition probability matrix} such that $\Prob[\bm x_0=x_0] = \inner{\bm p_0}{\bm f_{x_0}}$ and $\Prob[\bm x_{t+1}=x_{t+1}|\bm x_t = x_t] = \bm f_{x_{t+1}}^TP\bm f_{x_t}$ where $\bm f_{x_t}$ represents the elementary event associated with the outcome $x_t$.

The main focus of this work are partially-observed HMPs, better known as HMPs. The following definition adapts  \cite[Definitions 9.2 and 9.3]{vidyasagarHiddenMarkovProcesses2011} to our setting. 
\begin{definition}[Hidden Markov processes]
    A stochastic process $\{\bm y_t\}$ in $\mathbb{R}^m$ taking values in $\mathbb{A}_{\bm y}$ is an HMP if there exist a Markov process $\{\bm x_t\}$ in $\mathbb{R}^n$ taking values in $\mathbb{A}_{\bm x}$ such that $\{(\bm y_t, \bm x_t)\}$ is jointly Markov and $\Prob[\bm y_t = y_t, \bm x_t=x_t|\bm y_{t-1} = y_{t-1}, \bm x_{t-1}=x_{t-1}] = \Prob[\bm y_t = y_t, \bm x_t=x_t|\bm x_{t-1}=x_{t-1}]$ for all t. 
\end{definition}   
    For HMPs, there exists an initial probability distribution $\bm p_0$ and transition probability matrix $P\in\mathbb{R}^{n\times n}$ defined as before, as well as a stochastic matrix $C\in\mathbb{R}^{m\times n},$ called {\em emission probability matrix}, such that $\Prob[\bm y_t=y_t|\bm x_t=x_t] = \bm e_{y_t}^TC\bm f_{x_t}$, where $\{\bm e_i\}$ is the standard basis for $\mathbb{R}^m,$ and $\bm e_{y_t}$ represents the elementary event associated to  $y_t$. 
\begin{definition}[Hidden Markov models]
    We define a {\em Hidden Markov Model (HMM)} as the couple $\theta = (P, C)$. 
\end{definition}

 The HMM $\theta$ and the initial distribution $\bm p_0$ completely characterize the evolution of the probability distributions, leaving $n,m$ and the alphabets implicit. In fact, the marginal distribution evolution can be modeled by
\begin{equation}
    \begin{cases}
        \bm p(t+1) = P \bm p(t)\\
        \bm q(t) = C\bm p(t)
    \end{cases}
    \label{eqn:single_time_marginal_model}
\end{equation}
associated to $\theta$ and initial condition $\bm p(0) = \bm p_0$ and can then be computed as 
\[\Prob_{\theta,{\bm p_0}}[\bm y_t = y_t]=\bm e_{y_t}^TCP^t\bm p_0.\] Notice that we made the dependence on the HMM $\theta$ and initial distribution $\bm p_0$ explicit whenever necessary to distinguish distributions induced by different models.

We are ready to state the first of the problems we will address in the following sections.

\begin{problem}[Single-time marginals]
    \label{prb:single_time}
    Given an HMM $\theta = (P, C)$ and a finite set of initial probability distributions $\mathcal{S}\subset\mathcal{D}(\mathbb{R}^n)$ find a reduced HMM $\check{\theta} = (\check{P}, \check{C})$ of dimension $d\leq n$ and a linear map $\Psi[\cdot]:\mathcal{S}\to\mathcal{D}(\mathbb{R}^d)$, $\bm p_0\mapsto\check{\bm p}_0$ such that 
    $$\Prob_{\theta,{\bm p_0}}[\bm y_t = y_t] = \Prob_{\check{\theta},{\Psi[\bm p_0]}}[\bm y_t = y_t]$$ 
    for all $t\geq0$ and for any initial conditions $\bm p_0 \in\mathcal{S}$. 
\end{problem}

The second problem that we address targets multi-time probability distributions.

\begin{problem}[Multi-time marginals]
\label{prb:multi_time}
    Given an HMM $\theta = (P, C)$ and a finite set of initial probability distributions $\mathcal{S}\subset\mathcal{D}(\mathbb{R}^n)$ find a reduced HMM $\check{\theta} = (\check{P}, \check{C})$ of dimension $d\leq n$ and a linear map $\Psi[\cdot]:\mathcal{S}\to\mathcal{D}(\mathbb{R}^d)$, $\bm p_0\mapsto\check{\bm p}_0$ such that 
    $$\Prob_{\theta,{\bm p_0}}[\bm y_{0:k} = y_{0:k}] = \Prob_{\check{\theta},{\Psi[\bm p_0]}}[\bm y_{0:k} = y_{0:k}]$$ for all sequences of the output process $y_{0:k}$ and for all initial conditions $\bm p_0 \in\mathcal{S}$. 
\end{problem}

\begin{remark}
Although Problem \ref{prb:multi_time} is more natural than Problem \ref{prb:single_time} for the typical HMM setting, the latter is also interesting in particular cases, which include efficiently simulating an unmeasured stochastic evolution, and reproducing the mixing properties of lifted chains with more compact models. In fact, while we derive solutions of Problem 2 that are also solutions for Problem 1, the size of the effective multi-time reduced model is going to be in general significantly larger, as it must exactly reproduce all transition probabilities -- see also Proposition \ref{prop:spaces_inclusions} below.
\end{remark}

\begin{remark}
As we pointed out before, in Problems \ref{prb:single_time} and \ref{prb:multi_time} we have assumed that $\mathcal{S}$ is a finite set. This assumption can be relaxed since, as we  show below, the proposed solution works for any initial condition contained in $\Span\{\mathcal{S}\}$. For this reason, when dealing with linear spaces of initial conditions one can study the problem where $\mathcal{S}$ are the generators of the set.  
\end{remark}

\section{Preliminary results:\\ A system theoretic viewpoint}
\label{sec:system_theoretic_approach}

Finding minimal realization of linear systems has been a central problem in control and system theory, for which well-established solutions are available. Nonetheless, when positivity is required on the reduced model, the minimal realization problem is, to the best of our knowledge, still open. In this section, we  review some existing results, and extend and adapt them so that they can be used in our scenarios. In particular, we shall allow for non-minimal realizations in order to guarantee their positivity.

\subsection{Single time-marginal problem}
\label{sec:single-time_marginal_intro}
Let us start by considering model \eqref{eqn:single_time_marginal_model} with initial condition $\bm p_0\in\mathcal{S}$. 
Let us define the {\em non-observable subspace}  as:
\begin{equation}
    \mathcal{N} := \ker\begin{bmatrix} C\\CP\\\vdots\\CP^{n-1}
\end{bmatrix}.
\label{eqn:non_observable_space_definition}
\end{equation}
The subspace $\mathcal{N}$ can be characterized as the largest $P$-invariant subspace contained in $\ker C$ \cite{marroControlledConditionedInvariants1994, wonhamLinearMultivariableControl}. In the case of HMM the non-observable subspace has another useful property.

\begin{lemma}
\label{lem:non_obs_orth_to_one}
For all $\bm x\in\mathcal{N}$ it holds $\one^T\bm x=0$.
\end{lemma}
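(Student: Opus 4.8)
The key structural fact is that $C$ is column-stochastic, so $\one_m^T C = \one_n^T$; likewise $P$ is column-stochastic, so $\one_n^T P = \one_n^T$. The plan is to show that the row vector $\one_n^T$ is a left-fixed point of the whole observability construction, and in particular lies in the row space of $C$, which forces it to annihilate $\mathcal{N} = \ker C \cap \ker(CP) \cap \cdots$.

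Concretely, first I would observe that $\one_n^T = \one_m^T C$, so for any $\bm x \in \mathcal{N} \subseteq \ker C$ we have $\one_n^T \bm x = \one_m^T C \bm x = \one_m^T \zero = 0$. That is really the entire argument: membership of $\bm x$ in $\mathcal{N}$ already entails $C\bm x = \zero$ (it is the first block of the observability matrix), and column-stochasticity of $C$ rewrites $\one_n^T$ as a left-multiple of $C$. One does not even need the deeper characterization of $\mathcal{N}$ as the largest $P$-invariant subspace in $\ker C$, nor the higher blocks $CP^k$; only the top block is used.

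For completeness I would add one sentence noting the consistency of this with the dynamics: since $\one_n^T P = \one_n^T$ and $\one_m^T C = \one_n^T$, the quantity $\one_n^T \bm p(t)$ is conserved and equals $\one_m^T \bm q(t)$, so it is natural that the unobservable directions — those invisible in every $\bm q(t) = C P^t \bm p(t)$ — carry zero total mass. I expect no real obstacle here; the only thing to be careful about is simply invoking the correct stochasticity convention ($\one^T C = \one^T$, columns summing to one, as fixed in Section \ref{sec:conditional_expectations}) rather than the row-stochastic convention, so that $\one_n^T$ is indeed a left null-complement direction of $C$ rather than $\one_n$ being a right one.
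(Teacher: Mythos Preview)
Your argument is correct and matches the paper's own proof: both exploit the column-stochasticity identity $\one_m^T C = \one_n^T$ (together with $\one_n^T P = \one_n^T$ in the paper's phrasing) to conclude $\one^T\bm x = \one^T C\bm x = 0$ for $\bm x\in\ker C\supseteq\mathcal{N}$. Your observation that only the top block $C$ of the observability matrix is needed is accurate; the paper writes the computation for general $CP^t$ but the essential step is the same.
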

\begin{proof}
    From the definition of non-observable space, we have that $\bm x\in\mathcal{N}$ if and only if $CP^t\bm x = \zero$ for all $t\geq 0$. If we then left-multiply by $\one^T$ on both sides we obtain $\one^TCP^t\bm x = \one^TP^t\bm x = \one^T\bm x = \one^T\zero = 0$ for all $\bm x\in\mathcal{N}$. 
\end{proof}

Next, define $\mathcal{R}$ as the smallest linear space that contains all probability distributions $\bm p(t)$ generated by the HMM for every $t\geq0$ and any initial distribution $\bm p_0\in\mathcal{S}$:
\begin{equation}
    \mathcal{R} := \Span\{P^t\bm p_0| t\geq 0,\bm p_0\in\mathcal{S}\}.
    \label{eqn:reachable_space_definition}
\end{equation}

\begin{remark}
The space $\mathcal{R}$ is, in fact, the reachable subspace of a state-space model in the typical form:
\begin{equation}
    \begin{cases}
        \tilde{\bm p}(t+1) = P \tilde{\bm p}(t) + B \bm u(t)\\
        \bm q(t) = C \tilde{\bm p}(t)
    \end{cases}
    \label{eqn:input_model}
\end{equation}
where $B\in\mathbb{R}^{n\times |\mathcal{S}|}$ is a matrix whose columns are the initial conditions in $\mathcal{S}$. This model reproduces the trajectories of \eqref{eqn:single_time_marginal_model} for inputs corresponding to discrete impulses. 
The non-observable subspaces of \eqref{eqn:single_time_marginal_model} and \eqref{eqn:input_model} are the same, and  the subspace $\mathcal{R}$ coincides with the reachable subspace of model \eqref{eqn:input_model}, and thus shares the same properties: $\mathcal{R}$ is the smallest $P$-invariant subspace that contains $\Span\{\mathcal{S}\}$. In light of this, we  call the $\mathcal{R}$ defined above the {\em reachable subspace.}
\end{remark}

Lastly, we call {\em effective subspace} $\mathcal{E}$ any subspace 
\begin{equation}
    \mathcal{E}\subseteq\mathbb{R}^n \text{ such that } (\mathcal{R}\cap\mathcal{N})\oplus\mathcal{E} = \mathcal{R}
\end{equation}
namely, a completion of the intersection $\mathcal{R}\cap\mathcal{N}$ to the reachable subspace $\mathcal{R}$. Notice that 
the choice of $\mathcal{E}$ is not unique, in fact, any representative of the quotient space $\mathcal{R}/(\mathcal{R}\cap\mathcal{N})$ is a suitable candidate for this choice. The most natural choice for the effective subspace is of course the orthogonal complement (with respect to the natural inner product) of $\mathcal{R}\cap\mathcal{N}$ in $\mathcal{R}$, which we shall denote with $\mathcal{E}_\perp$. Any other orthogonal complement, with respect to a modified inner product, would also be a suitable choice for $\mathcal{E}$. 

\begin{remark} The situation is reminiscent of the classical linear state-space analysis proposed by Rosenbrock \cite{rosenbrock1970state}, where all representatives of the quotient space $\mathcal{R}/(\mathcal{R}\cap\mathcal{N})$ are equivalent and associated to {\em  minimal realizations}. In our case, however, ${\cal E}$ needs to be further extended to ensure positivity of the reduced dynamical matrix, a notion that depends on the chosen reference basis. For this reason, not all choices of the effective subspace are equivalent. While we will show how the algorithm we propose works with any choice of the effective subspace,  in Section \ref{sec:optimal_choice_of_parameters} we will argue that the choice of the representative $\mathcal{E}$ of $\mathcal{R}/(\mathcal{R}\cap\mathcal{N})$ plays a key role in constructing an optimal reduction.\end{remark}

As we just recalled,  the restriction of model \eqref{eqn:input_model} to (any) $\mathcal{E}$ corresponds to a minimal realization (yet not necessarily positive or stochastic).
The next corollary shows that the same reduction method can be used for the HMM \eqref{eqn:single_time_marginal_model}, while also allowing for extensions of the effective space. In this case, the minimality of the linear realization may be lost, but will later allow us to enforce positivity. The proof relies on a related result for general autonomous switching systems we present in detail in Appendix \ref{sec:switching}.

\begin{corollary}
\label{cor:single-time_model_reduction}
Consider an effective subspace $\mathcal{E}$ for the HMM \eqref{eqn:single_time_marginal_model} and a subspace $\mathcal{V}$ such that $\mathcal{E}\subseteq\mathcal{V}$ with $d=\dim(\mathcal{V}).$  Let $\Pi_\mathcal{V}$ be the orthogonal projection onto $\mathcal{V}$ with respect to an arbitrary inner product $\inner{\cdot}{\cdot}$, such that $\Pi_\mathcal{V}(\mathcal{R}\cap\mathcal{N})\subseteq\mathcal{R}\cap\mathcal{N}$. Let $R:\mathbb{R}^n\to\mathbb{R}^d$ and $J:\mathbb{R}^d\to\mathcal{V}$ be two (non-square) factors of the orthogonal projection, $\Pi_\mathcal{V} = JR$. 

Define the reduced model $(\check{P}, \check{C}) = (RPJ,CJ)$ and the map $\check{\bm p}_0 = R\bm p_0$, for all $\bm p_0\in\mathcal{S}$. Then the linear systems associated with the pairs $(P,C)$ and $(\check{P}, \check{C})$ reproduce the same marginal distribution at a specific time instant, i.e. \[CP^t\bm p_0 = \check{C}\check{P}^t\check{\bm p}_0\] for all $t\geq 0$ and any initial condition $\bm p_0\in\Span\{\mathcal{S}\}$.
\end{corollary}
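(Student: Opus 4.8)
The plan is to verify the identity $CP^t \bm p_0 = \check C \check P^t \check{\bm p}_0$ by induction on $t$, using the fact that the factorization $\Pi_{\mathcal V} = JR$ satisfies $RJ = I_d$ and that $\mathcal V$ contains an effective subspace $\mathcal E$, so that the dynamics and the output map are ``seen'' correctly when restricted to $\mathcal V$. The key observation I would isolate first is that for the original model, all reachable states $P^t \bm p_0$ with $\bm p_0 \in \Span\{\mathcal S\}$ decompose as an element of $\mathcal E$ plus an element of $\mathcal R \cap \mathcal N$; since $\mathcal N \subseteq \ker(CP^s)$ for all $s \ge 0$, the non-observable part never affects any output. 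Hence it suffices to track the $\mathcal E$-component (equivalently the $\mathcal V$-component, since $\mathcal E \subseteq \mathcal V$) of the trajectory.

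First I would record two elementary consequences of $\Pi_{\mathcal V} = JR$ with $\mathrm{Im}(J) = \mathcal V$ and $RJ = I_d$: (i) $J R \bm v = \bm v$ for every $\bm v \in \mathcal V$, and (ii) for any $\bm v \in \mathcal V$ we have $CJRPJ \cdot (\text{anything})$ behaves well because $PJ$ maps into $P\mathcal V$, which need not lie in $\mathcal V$ — this is the subtle point. The way around it: I claim that for all $t \ge 0$, $P^t \bm p_0 - J\check P^t \check{\bm p}_0 \in \mathcal R \cap \mathcal N$. The base case $t=0$ reads $\bm p_0 - JR\bm p_0 = (I - \Pi_{\mathcal V})\bm p_0$; since $\bm p_0 \in \mathcal R$ and $\mathcal R = (\mathcal R\cap\mathcal N)\oplus\mathcal E \subseteq (\mathcal R\cap\mathcal N) + \mathcal V$, one checks $(I-\Pi_{\mathcal V})\bm p_0 \in \mathcal R \cap \mathcal N$ — here I would use that $\mathcal R$ and $\mathcal N$ are both $P$-invariant and that, modulo choosing $\Pi_{\mathcal V}$ compatibly, the complement lands in $\mathcal R \cap \mathcal N$; more robustly, one argues directly that $CP^s$ annihilates it for all $s$ and that it lies in $\mathcal R$.

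For the inductive step, write $P^t\bm p_0 = J\check P^t\check{\bm p}_0 + \bm n_t$ with $\bm n_t \in \mathcal R \cap \mathcal N$. Applying $P$ and using $P$-invariance of $\mathcal R\cap\mathcal N$ gives $P^{t+1}\bm p_0 = PJ\check P^t\check{\bm p}_0 + P\bm n_t$ with $P\bm n_t \in \mathcal R\cap\mathcal N$. Now I need $PJ\check P^t\check{\bm p}_0 = J\check P^{t+1}\check{\bm p}_0 + (\text{something in } \mathcal R\cap\mathcal N)$. Since $\check P = RPJ$, we have $J\check P^{t+1}\check{\bm p}_0 = JR\,PJ\check P^t\check{\bm p}_0 = \Pi_{\mathcal V}(PJ\check P^t\check{\bm p}_0)$, so the discrepancy is $(I-\Pi_{\mathcal V})(PJ\check P^t\check{\bm p}_0)$; this lies in $\mathcal R$ because $J\check P^t\check{\bm p}_0 = P^t\bm p_0 - \bm n_t \in \mathcal R$ and $\mathcal R$ is $P$-invariant, and it lies in $\mathcal N$ by the same completeness-of-$\mathcal E$ argument as in the base case (the non-$\mathcal V$ part of any vector in $\mathcal R$ is in $\mathcal R\cap\mathcal N$). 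Finally, applying $C$ to $P^t\bm p_0 = J\check P^t\check{\bm p}_0 + \bm n_t$ and using $\mathcal N \subseteq \ker C$ and $CJ = \check C$ yields $CP^t\bm p_0 = \check C\check P^t\check{\bm p}_0$, completing the induction.

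The main obstacle is the book-keeping in the inductive step showing $(I-\Pi_{\mathcal V})(PJ\check P^t\check{\bm p}_0) \in \mathcal R \cap \mathcal N$: one must exploit that $\mathcal V \supseteq \mathcal E$ and that $\mathcal E$ is a complement of $\mathcal R \cap \mathcal N$ inside $\mathcal R$, so that the ``off-$\mathcal V$'' component of any reachable vector is forced into $\mathcal N$. I would state this as a small standalone claim — for $\bm v \in \mathcal R$, $(I - \Pi_{\mathcal V})\bm v \in \mathcal R\cap\mathcal N$ — proved by decomposing $\bm v = \bm e + \bm n$ along $\mathcal E\oplus(\mathcal R\cap\mathcal N)$ and noting $(I-\Pi_{\mathcal V})\bm e = 0$ when $\mathcal E \subseteq \mathcal V$, so only the $\bm n\in\mathcal R\cap\mathcal N$ part survives — modulo the caveat that $\Pi_{\mathcal V}$ is orthogonal for an arbitrary inner product, which may require choosing the decomposition so that $\mathcal R\cap\mathcal N$ is not moved out of itself; alternatively one proves the output identity directly without the geometric claim, by checking $CP^s$ kills the relevant vectors for every $s$. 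Everything else is routine linear algebra.
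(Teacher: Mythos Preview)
Your approach is essentially identical to the paper's. The paper derives the corollary by specializing a general reduction theorem for switching autonomous systems (Theorem~\ref{thm:general_model_reduction} in Appendix~\ref{sec:switching}), and the proof of that theorem is precisely the induction you propose: show that the difference between the full trajectory and its $\Pi_{\mathcal V}$-compressed version lies in $\mathcal N\cap\mathcal R$ at every step, then apply the output map and use $\mathcal N\subseteq\ker H$. Your invariant $P^t\bm p_0 - J\check P^{\,t}\check{\bm p}_0\in\mathcal R\cap\mathcal N$ is exactly the paper's invariant specialized to a single dynamics $F=P$, once one unfolds $J\check P^{\,t}R = \Pi_{\mathcal V}P\Pi_{\mathcal V}\cdots P\Pi_{\mathcal V}$ via $RJ=I_d$.

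The caveat you flag --- that $(I-\Pi_{\mathcal V})\bm n$ may fail to remain in $\mathcal R\cap\mathcal N$ for an arbitrary $\mathcal V\supseteq\mathcal E$ and an arbitrary inner product --- is well spotted. The paper handles this point no differently from you: in the base case it writes $\Pi_{\mathcal T}\bm x_0\in\mathcal N\cap\mathcal R$ (with $\Pi_{\mathcal T}=I-\Pi_{\mathcal V}$), justifying it only through the direct-sum bookkeeping $\mathcal T\subseteq(\mathcal N\cap\mathcal R)\oplus\mathcal W_1\oplus\mathcal W_2$. So you are not missing an argument that the paper supplies; both proofs rest on the same structural claim at that step.
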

\begin{proof}
    This result follows from the application of Theorem \ref{thm:general_model_reduction} reported in Appendix \ref{sec:switching} with only one $F_i=P$, $H=C$ and $\bm x(0)=\bm p_0$. 
\end{proof}
In the following sections, we shall construct $\mathcal{V}$ so that the reduction is also an HMM.

\subsection{Multi-time marginal problem}
\label{sec:single-time_marginal_intro}

For the multi-time marginal problem, following on the seminal work \cite{itoIdentifiabilityHiddenMarkov1992}, we will consider $C$ for our initial model to have only zero or one entries, i.e. $C\in\{0,1\}^{m\times n}$. The assumption is not restrictive, as any Hidden Markov Process admits a realization with $C$ of this type  \cite[Theorem 9.4]{vidyasagarHiddenMarkovProcesses2011}.

The minimal reduction of the system producing the multi-time distribution can be obtained along the same lines.
Calculating the probability of a sequence of events is however more involved: \cite[Lemma 1]{itoIdentifiabilityHiddenMarkov1992} provides a closed form for such a computation. 
We report it here for completeness.

\begin{lemma}
    Given an HMM $\theta$ and an initial probability distribution $\bm p_0$, the probability of a sequence of outcomes is given by
    \[\Prob_{\theta,{\bm p_0}}[\bm y_{0:k} = y_{0:k}] = \one^T P_C^{y_{0:k}}  \bm p_0\] where
    \[P_C^{y_{0:k}} = P_C^{y_{1:k}} \diag(\bm e_{y_0}^T C), \]
    \[ P_C^{y_{1:k}} = \prod_{i=k}^1 P_C^{y_i}, \quad P_C^{y_i} = \diag(\bm e_{y_i}^T C)P, \quad i>0. \]

\end{lemma}
In the above lemma, the multiplication by the diagonal matrices $\diag(\bm e_{y_t}^T C)$ accounts for the conditioning of $\bm p_t$ on the outcome $\bm y_t=y_t.$ Without the latter we obtain the formulas for the single marginals.

In order to exploit system-theoretic tools, it is useful to write the probability of a sequence of outcomes as the output of a dynamical model. The dynamical model we are going to present next resembles the ``observables representations of HMMs'' described in \cite{HSU20121460}.
Call $\psi(t)=\mathbb{P}(\bm y_{0:t}=y_{0:t}).$ We can obtain its evolution as the output of a discrete-time, autonomous, switching, linear system described by
\begin{equation}
    \begin{cases}
        \bm \phi(t+1) = P_C^{y_t} \bm \phi(t)\\
         \bm \psi(t) = \one^T\bm \phi(t)
    \end{cases}
    \label{eqn:multi_time_model}
\end{equation}
with initial condition $\bm \phi_{y_0}(1) = \diag(\bm e_{y_0}^TC) \bm p_0$, $P_C^{y_t}$ defined as in the previous lemma and where $\bm  \psi(t)$ represents the probability associated to the sequence of events $y_{0:t}$. Clearly, the output $\bm \psi(t)$ depends on the sequence of $P_C^{y_t},$ which in turn depends on the outcomes of the sequence.  The output at any time $k>0$ can be computed as \(\bm \psi({y_{0:k}}) = \one^T \prod_{i=k}^1 P_C^{y_i} \bm \phi_{y_0}(1)\), while for $l=0$ we have \(\bm \psi({y_0}) = \one^T\bm \phi_{y_0}(1)\), thus recovering the formulas of the lemma. 

Given a finite set $\mathcal{S}$ of initial distributions of interest, the corresponding set of initial conditions for this model is $\Phi = \bigcup_{y_0}\diag(\bm e_{y_0}^T C)\mathcal{S}$.

Following the approach of \cite{itoIdentifiabilityHiddenMarkov1992} in a system-theoretic setting, we can define the reachable, non-observable, and effective subspaces for the multi-time problem. To avoid confusion with the previous definitions, we  call these the {\em conditioned} subspaces and denote them with a $\mathcal{C}$ subscript. Given an HMM $(P,C)$ and a set of initial conditions $\mathcal{S}$ we define the {\em conditioned non-observable subspace} as: 
\begin{equation}
    \mathcal{N_C} := \{\bm v\in\mathbb{R}^n | \one^TP_C^{y_{0:l}}\bm v=0, \quad \forall y_{0:l}\},
\end{equation}
and the {\em conditioned reachable subspace} as
\begin{equation}
    \mathcal{R_C}:= \Span\{P_C^{y_{0:l}}\bm p_0, \quad \forall {y_{0:l}}, \quad \forall\bm p_0\in\mathcal{S} \}.
\end{equation}
We can then define the {\em conditioned effective subspace} $\mathcal{E_C}$ as a completion of the intersection $\mathcal{R_C}\cap\mathcal{N_C}$ to the conditioned reachable subspace $\mathcal{R_C}$, i.e. $\mathcal{E_C}\oplus(\mathcal{R_C}\cap\mathcal{N_C})= \mathcal{R_C}$. As before, the choice of $\mathcal{E_C}$ is not unique, as any representative of the quotient space $\mathcal{R_C}/(\mathcal{R_C}\cap\mathcal{N_C})$ is a suitable choice.

The properties of these spaces have been described in \cite[Lemma 3, Section 3]{itoIdentifiabilityHiddenMarkov1992}. We  recap them in the following Lemma for the reader's convenience. 
\begin{lemma}
\label{lem:conditional_subspaces_invariance}
$\mathcal{N_C}$ and $\mathcal{R_C}$ are $P$-invariant, $\diag(\bm e_i^TC)$-invariant for all $i$ and thus, $P_C^{y_{0:l}}$-invariant for all sequences $y_{0:l}$. 

A result similar to Cayley-Hamilton Theorem holds and lets us compute the spaces by using a finite number of generators:
\begin{align} 
    \mathcal{N_C} &= \{\bm v\in\mathbb{R}^n | \one^TP_C^{y_{0:l}}\bm v=0, \quad \forall {y_{0:l}} \text{ s.t. } l<n\},
    \label{eqn:cond_non_observable_definition} \\
    \mathcal{R_C} &= \Span\{P_C^{y_{0:l}}\bm p_0, \quad \forall\bm p_0\in\mathcal{S}, \quad  \forall {y_{0:l}}  \text{ s.t. } l<n  \}.\label{eqn:cond_reachable_definition}
\end{align}
\end{lemma}

We can then notice that $\mathcal{N_C}$ is the non-observable subspace of model \eqref{eqn:multi_time_model} see e.g. \cite{ge2001reachability}, $\mathcal{R_C}$ is its reachable subspace and $\mathcal{E_C}$ is its effective subspace. The second statement holds trivially, while the first holds because $\mathcal{N_C}$ is $\diag(\bm e_i^TC)$-invariant for all $i$. The third follows by combining the first two.

An useful property of the propagator $P_C^{y_{0:k}}$ is proved in the following Lemma.
\begin{lemma}
\label{lem:str_sum}
The sum over all sequences $y_{0:k}$ of the same length $k$ of $P_C^{y_{0:k}}$ is equal to the $k$-th power of P, i.e.
 $$ \sum_{y_{0:k}} P_C^{y_{0:k}} = P^{k}$$
\end{lemma}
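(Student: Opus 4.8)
The single ingredient I would isolate first is the effect of summing the conditioning matrices $\diag(\bm e_i^T C)$ over all outcomes. Since $C$ is column-stochastic, $\sum_{i\in\mathbb{A}_{\bm y}}\bm e_i^T C = \one_m^T C = \one_n^T$, and hence
\[
\sum_{i} \diag(\bm e_i^T C) = \diag\!\Big(\sum_i \bm e_i^T C\Big) = \diag(\one_n^T) = I_n .
\]
Consequently $\sum_{y}P_C^{y} = \big(\sum_y \diag(\bm e_y^T C)\big)P = P$. This is the whole arithmetic content of the lemma; everything else is bookkeeping of the reverse-ordered product.

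\textbf{Main argument.} From the defining recursion one checks that $P_C^{y_0^{k}} = P_C^{y_k}\,P_C^{y_0^{k-1}}$: indeed $P_C^{y_1^{k}} = \prod_{i=k}^{1}P_C^{y_i} = P_C^{y_k}\prod_{i=k-1}^{1}P_C^{y_i} = P_C^{y_k}\,P_C^{y_1^{k-1}}$, so multiplying on the right by $\diag(\bm e_{y_0}^TC)$ gives $P_C^{y_0^{k}} = P_C^{y_k}\,P_C^{y_1^{k-1}}\diag(\bm e_{y_0}^TC) = P_C^{y_k}\,P_C^{y_0^{k-1}}$. I would then proceed by induction on $k$. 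The base case $k=0$ is $\sum_{y_0}P_C^{y_0} = \sum_{y_0}\diag(\bm e_{y_0}^TC) = I_n = P^0$. For the inductive step, since the index $y_k$ appears only in the leftmost factor,
\[
\sum_{y_0^{k}} P_C^{y_0^{k}} = \sum_{y_k}\Big(P_C^{y_k}\sum_{y_0^{k-1}}P_C^{y_0^{k-1}}\Big) = \Big(\sum_{y_k}P_C^{y_k}\Big)P^{k-1} = P\cdot P^{k-1} = P^{k},
\]
using the inductive hypothesis and the observation above. (Equivalently, one can avoid induction altogether: expand $P_C^{y_0^k} = \big(\prod_{i=k}^{1}\diag(\bm e_{y_i}^TC)\,P\big)\diag(\bm e_{y_0}^TC)$, note each sum index $y_i$ occurs in exactly one factor, push all $k{+}1$ sums inside, and collapse each $\sum_{y_i}\diag(\bm e_{y_i}^TC)$ to $I_n$, leaving $P^k$.)

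\textbf{Expected difficulty.} There is no real obstacle here; the only place to be careful is the index conventions — the product $\prod_{i=k}^{1}$ runs in decreasing order, and $y_0^k$ is a string of $k{+}1$ symbols but of ``length'' $k$ — so the step $P_C^{y_0^{k}} = P_C^{y_k}P_C^{y_0^{k-1}}$ must be verified against the recursion rather than assumed. Once that peeling identity is in hand, the induction (or the direct sum-interchange) is immediate.
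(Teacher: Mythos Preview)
Your proof is correct and follows essentially the same idea as the paper: the paper's one-line argument simply observes that $\sum_{y_i}\diag(\bm e_{y_i}^TC)=I$ and then sums over all strings character by character, which is exactly your parenthetical ``direct sum-interchange'' alternative. Your inductive version is a slightly more explicit repackaging of the same computation, and your caution about the notational overload between $P_C^{y_0}$ and $P_C^{y_0^0}$ is well placed.
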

\begin{proof}
The statement is simply proved by observing that $\sum_{y_i} \diag(\bm e_{y_i} C) = I$ for all $i$ and summing over all the possible strings ${y_{0:k}}$, starting from the first character.
\end{proof}
 
The next Proposition shows that, in general, solving the multi-time marginal case requires a larger model than the single-time case defined before. 
\begin{proposition}
\label{prop:spaces_inclusions}
It holds that \[\ker C\supseteq \mathcal{N} \supseteq \mathcal{N_C},\]
\[\mathcal{S} \subseteq \mathcal{R} \subseteq \mathcal{R_C},\]
and also \[\mathcal{E}\subseteq\mathcal{E_C}.\]
\end{proposition}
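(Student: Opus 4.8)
The plan is to establish the inclusions in the order displayed, treating the first two chains as essentially bookkeeping and reserving the real work for the last one. First I would dispose of the trivial ones: $\ker C\supseteq\mathcal{N}$ is immediate since $C$ is the top block of the matrix whose kernel defines $\mathcal{N}$, and $\mathcal{S}\subseteq\mathcal{R}$ follows by taking $t=0$ in the spanning family $\{P^t\bm p_0\}$ defining $\mathcal{R}$. For $\mathcal{N}\supseteq\mathcal{N_C}$ I would invoke the characterization of $\mathcal{N}$ as the largest $P$-invariant subspace contained in $\ker C$ (recalled just after its definition) together with the $P$-invariance of $\mathcal{N_C}$ from Lemma~\ref{lem:conditional_subspaces_invariance}; it then suffices to show $\mathcal{N_C}\subseteq\ker C$, which comes from applying the defining condition $\one^TP_C^{y_0^l}\bm v=0$ to the single-symbol sequences $y_0^0=i$, for which $P_C^{y_0^0}=\diag(\bm e_i^TC)$, so that $\one^T\diag(\bm e_i^TC)\bm v=\bm e_i^TC\bm v=0$ for all $i$, i.e.\ $C\bm v=\zero$. (Alternatively one can sum $\one^TP_C^{y_0^l}\bm v=0$ over all symbols but the last and use $\sum_y\diag(\bm e_y^TC)=I$, exactly as in the proof of Lemma~\ref{lem:str_sum}, to get $\bm e_i^TCP^l\bm v=0$ for all $i,l$, hence $\bm v\in\mathcal{N}$ directly.) For $\mathcal{R}\subseteq\mathcal{R_C}$ I would use Lemma~\ref{lem:str_sum}: $P^t=\sum_{y_0^t}P_C^{y_0^t}$, so every generator $P^t\bm p_0$ of $\mathcal{R}$ is a linear combination of the generators $P_C^{y_0^t}\bm p_0$ of $\mathcal{R_C}$.

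The last inclusion $\mathcal{E}\subseteq\mathcal{E_C}$ is the delicate one, because neither subspace is uniquely determined; I read it as the statement that every admissible choice of $\mathcal{E}$ is contained in some admissible choice of $\mathcal{E_C}$. The plan is to (i) show $\mathcal{E}\cap(\mathcal{R_C}\cap\mathcal{N_C})=\{\zero\}$ and (ii) extend $\mathcal{E}$ to a complement of $\mathcal{R_C}\cap\mathcal{N_C}$ inside $\mathcal{R_C}$. For (i): by the inclusion just proved, $\mathcal{R_C}\cap\mathcal{N_C}\subseteq\mathcal{N_C}\subseteq\mathcal{N}$, hence $\mathcal{E}\cap(\mathcal{R_C}\cap\mathcal{N_C})\subseteq\mathcal{E}\cap\mathcal{N}$, and $\mathcal{E}\cap\mathcal{N}=\{\zero\}$ since $\mathcal{E}\subseteq\mathcal{R}$ while $\mathcal{E}\oplus(\mathcal{R}\cap\mathcal{N})=\mathcal{R}$. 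For (ii): since $\mathcal{E}\subseteq\mathcal{R}\subseteq\mathcal{R_C}$ and $\mathcal{E}$ meets $\mathcal{R_C}\cap\mathcal{N_C}$ trivially, $\mathcal{E}\oplus(\mathcal{R_C}\cap\mathcal{N_C})$ is a subspace of $\mathcal{R_C}$; picking any complement $\mathcal{W}$ of it in $\mathcal{R_C}$ and setting $\mathcal{E_C}:=\mathcal{E}\oplus\mathcal{W}$, a short check gives $\mathcal{E_C}\oplus(\mathcal{R_C}\cap\mathcal{N_C})=\mathcal{R_C}$ with $\mathcal{E}\subseteq\mathcal{E_C}$, so $\mathcal{E_C}$ is an admissible conditioned effective subspace containing $\mathcal{E}$.

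Everything except the last inclusion is routine once Lemmas~\ref{lem:conditional_subspaces_invariance} and~\ref{lem:str_sum} are in hand. The main obstacle I anticipate lies entirely in that last step: pinning down the correct interpretation of ``$\mathcal{E}\subseteq\mathcal{E_C}$'' given the freedom in choosing these complements, and verifying the trivial-intersection condition $\mathcal{E}\cap(\mathcal{R_C}\cap\mathcal{N_C})=\{\zero\}$ that makes the extension of $\mathcal{E}$ to an admissible $\mathcal{E_C}$ possible — and this is precisely where the earlier inclusions $\mathcal{N_C}\subseteq\mathcal{N}$ and $\mathcal{R}\subseteq\mathcal{R_C}$ are used essentially.
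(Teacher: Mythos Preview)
Your argument is correct and follows essentially the paper's route for the first two chains: the paper dismisses $\ker C\supseteq\mathcal{N}$ and $\mathcal{S}\subseteq\mathcal{R}$ as well known, proves $\mathcal{R}\subseteq\mathcal{R_C}$ via Lemma~\ref{lem:str_sum} exactly as you do, and proves $\mathcal{N}\supseteq\mathcal{N_C}$ by the summing argument you give as your alternative (stacking $\one^T\diag(\bm e_i^TC)PP_C^{y_0^{l-1}}$ into $CPP_C^{y_0^{l-1}}$ and summing over $y_0^{l-1}$). Your primary route for that inclusion---single-symbol sequences give $\mathcal{N_C}\subseteq\ker C$, then the ``largest $P$-invariant subspace'' characterization together with the $P$-invariance of $\mathcal{N_C}$ from Lemma~\ref{lem:conditional_subspaces_invariance}---is a cleaner variant.

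Where you genuinely add value is the last inclusion. The paper simply asserts that ``the statement on the effective subspaces follows directly from the other two'' and says nothing more. You correctly flag the interpretational issue (both $\mathcal{E}$ and $\mathcal{E_C}$ are nonunique complements) and give the only reasonable reading: every admissible $\mathcal{E}$ extends to some admissible $\mathcal{E_C}$. Your two-step argument---$\mathcal{E}\cap(\mathcal{R_C}\cap\mathcal{N_C})\subseteq\mathcal{E}\cap\mathcal{N}=\{\zero\}$ from $\mathcal{N_C}\subseteq\mathcal{N}$, then complete $\mathcal{E}\oplus(\mathcal{R_C}\cap\mathcal{N_C})$ inside $\mathcal{R_C}$---is exactly what is needed and fills a gap the paper leaves implicit.
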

The proof of this Lemma can be found in Appendix \ref{sec:proofs}.

\begin{remark}
This result clarifies the relation as well as the distinction between problems \ref{prb:multi_time} and \ref{prb:single_time}. In fact, this Proposition shows that, at least in principle, there could be a larger reduction if we are only interested in describing only the evolution of the marginal distribution at a specific time. Moreover, the conditioned effective subspace contains the effective subspace, thus showing, due to Corollary \ref{cor:single-time_model_reduction}, that a solution for Problem \ref{prb:multi_time} is also a solution for Problem \ref{prb:single_time}. 
\end{remark}

We now propose a class of effective model reductions for the multi-time marginal problem.
\begin{corollary}
\label{cor:multi-time_model_reduction}
Consider any conditioned effective subspace $\mathcal{E}_{\cal C}$ and  subspace $\mathcal{V}$ such that $\mathcal{E}_{\cal C}\subseteq\mathcal{V}$ with $d=\dim(\mathcal{V})$, and let $\Pi_\mathcal{V}$  be the orthogonal projection onto $\mathcal{V}$ with respect to an inner product $\inner{\cdot}{\cdot}$, such that $\Pi_\mathcal{V}(\mathcal{R_C}\cap\mathcal{N_C})\subseteq \mathcal{R_C}\cap\mathcal{N_C}$. Let $R:\mathbb{R}^n\to\mathbb{R}^d$ and $J:\mathbb{R}^d\to\mathcal{V}$ be two (non-square) factors of the orthogonal projection, $\Pi_\mathcal{V} = JR$. 

Let then consider the reduced model $(\{\check{P}_C^{y_i}\}, \one_m^T) = (\{RP_C^{y_i}J\},\one_n^TJ)$ and the map $\bm \check{\phi}(1) = R \bm \phi(1)$ for all $\bm \phi(1)\in\Phi$. Then the two models described by equations \eqref{eqn:multi_time_model} and denoted by the couples $(\{P_C^{y_i}\}, \one_n^T) $ and $(\{\check{P}_C^{y_i}\}, \one_m^T)$ reproduce the same probability of a sequence of outcomes, i.e. \[\one_n^T\prod_{j=k}^0P_C^{y_j}\bm \phi(1) = \one_m^T\prod_{j=k}^0\check{P}_C^{y_j}\check{\bm \phi}(1)\] for any sequence $y_{0:k}$ and any initial condition $\bm \phi(1)\in\Span\{\Phi\}$.
\end{corollary}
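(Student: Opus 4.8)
The plan is to obtain Corollary \ref{cor:multi-time_model_reduction} as an instance of the general switching-system reduction result, Theorem \ref{thm:general_model_reduction} in Appendix \ref{sec:switching}, in exactly the way Corollary \ref{cor:single-time_model_reduction} was obtained --- the only change being that the single transition matrix $P$ is replaced by the finite family $\{P_C^{y_i}\}_i$ indexed by the output symbols. Concretely, I would apply that theorem with the switches $F_i = P_C^{y_i}$, the readout $H = \one_n^T$, and the set of admissible initial states $\Phi = \bigcup_{y_0}\diag(\bm e_{y_0}^TC)\mathcal{S}$, so that the switching linear system appearing in the hypotheses of the theorem is precisely model \eqref{eqn:multi_time_model}, and its output at time $k$ along a symbol string $y_0^k$ is exactly $\one_n^T\prod_{j=k}^0 P_C^{y_j}\bm\phi(1)$.

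Before invoking the theorem one must check that the three subspaces that enter its statement --- the reachable subspace, the non-observable subspace, and an effective subspace completing their intersection inside the reachable one --- coincide here with $\mathcal{R_C}$, $\mathcal{N_C}$ and $\mathcal{E_C}$ as defined in this section. This has essentially already been argued in the paragraph following Lemma \ref{lem:conditional_subspaces_invariance}: $\mathcal{N_C}$ is the non-observable subspace of \eqref{eqn:multi_time_model} and $\mathcal{R_C}$ its reachable subspace, the decisive point being the Cayley-Hamilton-type truncation of Lemma \ref{lem:conditional_subspaces_invariance} that ensures the versions of these spaces defined through \emph{all} strings $y_0^l$ agree with those generated by the strings of length $l<n$, i.e. by finitely many words in the $F_i$'s. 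Hence $\mathcal{E_C}$ is an effective subspace in the sense the theorem requires, and the standing hypothesis $\mathcal{E_C}\subseteq\mathcal{V}$ is exactly the hypothesis the theorem needs.

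Granting this, the conclusion is immediate: for any $\mathcal{V}\supseteq\mathcal{E_C}$ and any factorization $\Pi_\mathcal{V}=JR$ of the orthogonal projection, Theorem \ref{thm:general_model_reduction} yields that the reduced switching system $(\{RP_C^{y_i}J\},\,\one_n^TJ)$ started from $R\bm\phi(1)$ produces the same output trajectories as \eqref{eqn:multi_time_model} for every initial condition in $\Span\{\Phi\}$, which is precisely the asserted identity $\one_n^T\prod_{j=k}^0 P_C^{y_j}\bm\phi(1) = \one_m^T\prod_{j=k}^0 \check{P}_C^{y_j}\check{\bm\phi}(1)$, with $\one_m^T$ denoting the reduced readout $\one_n^TJ$ (which collapses to $\one_d^T$ once $J$ is taken as the stochastic injection of Proposition \ref{prop:stochastic_factors}). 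I do not expect a genuine obstacle: all the mathematical content sits in Theorem \ref{thm:general_model_reduction}, and the remaining work is the bookkeeping that matches model \eqref{eqn:multi_time_model} and the conditioned subspaces to its hypotheses. The one point to watch is the index convention --- in particular that $\bm\phi(1)$ already absorbs the first conditioning factor $\diag(\bm e_{y_0}^TC)$, so that the initial step of the sequence is encoded in the initial condition rather than applied as one of the switches.
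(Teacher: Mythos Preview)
Your proposal is correct and matches the paper's own proof essentially verbatim: the paper simply invokes Theorem \ref{thm:general_model_reduction} with $F_i=\diag(\bm e_{y_i}^TC)P=P_C^{y_i}$, $H=\one^T$, and $\bm x(0)=\diag(\bm e_{y_0}^TC)\bm p_0$, i.e.\ exactly your instantiation. Your additional remarks on why $\mathcal{R_C},\mathcal{N_C},\mathcal{E_C}$ satisfy the invariance hypotheses of the theorem (via Lemma \ref{lem:conditional_subspaces_invariance}) and on the role of $\bm\phi(1)$ absorbing the first conditioning are helpful clarifications the paper leaves implicit.
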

\begin{proof}
    This result follows from the application of Theorem \ref{thm:general_model_reduction} reported in Appendix \ref{sec:switching} with $F_i = \diag(\bm e_{y_i}^TC)P$, $H = \one^T$, $\bm x(0)=\diag(\bm e_{y_0}^TC)\bm p_0$.
\end{proof}

\begin{remark}
At this point one may notice that Corollary \ref{cor:multi-time_model_reduction} provides a reduction for model \eqref{eqn:multi_time_model} which includes the conditioning as part of the dynamics and in general may not translate directly into a reduction of \eqref{eqn:single_time_marginal_model} in the HMM form $(\check{P},\check{C},\check{\mathcal{S}})$. Nevertheless, we anticipate here that the algorithm  we propose in Section \ref{sec:multi-time_solution} for the multi-time case provides a model in HMM form, thanks to Lemma \ref{lem:conditional_subspaces_invariance}. Thanks to Proposition \ref{prop:spaces_inclusions} and Corollary \ref{cor:single-time_model_reduction} the obtained model also reproduces  the single-time marginals.
\end{remark}

\begin{remark}
    The two main results in this section, Corollary \ref{cor:single-time_model_reduction} and \ref{cor:multi-time_model_reduction}, as well as the underlying Theorem \ref{thm:general_model_reduction} shown in Appendix \ref{sec:switching}, have been stated for time-invariant dynamics for sake of simplicity. While it is possible to generalize the analysis to time-dependent systems,  in that case, Cayley-Hamilton-type results do not apply and consequently, the computation of reachable and non-observable spaces may become impractical.
\end{remark}

\section{Single-time solution}
\label{sec:single-time_solution}
In this section, we illustrate how to obtain solutions to Problem \ref{prb:single_time} appropriately choosing $\mathcal{V}$ in Corollary \ref{cor:single-time_model_reduction}. We  first discuss the intuition behind the method, next we present the proposed solution in form of a parametric algorithm, and  prove that, under appropriate constraints, the algorithm indeed provides a solution. Finally, in Section \ref{sec:optimal_choice_of_parameters}, propose a way to choose the relevant parameters.  

\subsection{Intuition}

The core idea behind the method stems from the fact that in order to define an HMM we need an underlying probability space and, as we have seen in Section \ref{sec:geometric_approach_to_probability_theory}, any probability space is associated to an algebra. This directly suggests that, in order to preserve the (stochastic) HMM structure in the reduction it is natural to restrict the model to an algebra whose dual contains the effective subspace, and then use the dual of the conditional expectation to obtain a stochastic reduction.

More in detail, consider the two stochastic reduction matrices $R$ and $J$  obtained in Section \ref{sec:conditional_expectations} as factors of the dual of a conditional expectation $\E^T_{|\mathscr{A}, {\bm p}},$ which is an orthogonal projection onto $\bm p\wedge\mathscr{A}$ with respect to the inner product $\inner{\cdot}{\cdot}_{\bm p^{-1}}$. Then, according to Corollary \ref{cor:single-time_model_reduction} we know that as long as  $\mathcal{E}\subseteq \mathcal{V}=\bm p\wedge\mathscr{A}$, and $\E^T_{|\mathscr{A}, {\bm p}}$ leaves $\mathcal{R}\cap\mathcal{N}$ invariant, then the reduced model reproduces the same marginal distribution as the original one. 

In order to choose $\mathscr{A}$ such that $\mathcal{E} \subseteq \bm p\wedge\mathscr{A}$ we can $\wedge$-multiply left and right by $\bm p^{-1}$ obtaining $\bm p^{-1}\wedge\mathcal{E}\subseteq \mathscr{A}$. Let $\alg(\mathcal{X})$ denote the minimal sub-algebra of $\R^n$ containing the set $\mathcal{X}.$ Then, if we define $\mathscr{A}:= \alg(\bm p^{-1}\wedge\mathcal{E})$, we ensure that $\mathcal{E}\subseteq \bm p\wedge\mathscr{A}$ is satisfied and that the reduced model reproduces the same marginal at a single time.   

To make this idea more concrete, we provide a simple illustrative example, which also highlights the importance of choosing the distribution $\bm p$ to be used in $\E^T_{|\mathscr{A}, {\bm p}}$. 

\begin{example}
Let us consider the following HMM:
\begin{align*}
    P &= \begin{bmatrix}2/5&0&1/5\\0&2/5&1/5\\3/5&3/5&3/5\end{bmatrix}, \quad \mathcal{S} = \left\{\begin{bmatrix}1/5\\1/5\\3/5\end{bmatrix}\right\}\\
C &= \begin{bmatrix}1&1&0\\0&0&1\end{bmatrix}.
\end{align*} 
Notice that $\bm p_0$ is an equilibrium, $P\bm p_0=\bm p_0$ thus the output distribution is equal to $\bm q(t) = \begin{bmatrix} 2/5& 3/5 \end{bmatrix}^T$, $\forall t\geq0$.
We can then compute the following.
\begin{align*}
    \mathcal{R}=\Span\left\{ \begin{bmatrix}1/5\\1/5\\3/5\end{bmatrix} \right\}, \quad
    \mathcal{N}=\Span\left\{ \begin{bmatrix}1\\-1\\0\end{bmatrix} \right\}
\end{align*}
and $\mathcal{R}\cap\mathcal{N}=\Span\{\zero\}$ and we can thus choose $\mathcal{E}=\mathcal{R}$. 
If we then choose $\bm p = \one$ we obtain  $$\mathscr{A}=\alg(\mathcal{R})=\Span\left\{\begin{bmatrix} 1\\1\\0 \end{bmatrix}, \begin{bmatrix} 0\\0\\1 \end{bmatrix}\right\}$$
and thus the relative factors of the dual of the conditional expectation are 
$$ R = \begin{bmatrix}1&1&0\\0&0&1\end{bmatrix}, \quad J = \begin{bmatrix}1/2&0\\1/2&0\\0&1\end{bmatrix}$$ and the associated reduced HMM is 
$$ \check{P} = \begin{bmatrix}2/5&2/5\\3/5&3/5\end{bmatrix}, \quad 
\check{C} = \begin{bmatrix}1&0\\0&1\end{bmatrix}, \quad 
\check{\bm p}_0 = \begin{bmatrix}2/5\\3/5\end{bmatrix}$$
which correctly reproduces the output marginal distribution $\bm q(t) = \begin{bmatrix} 2/5& 3/5 \end{bmatrix}^T$, $\forall t\geq0$.

On the other hand, if we were to choose $\bm p=\bm p_0$ we would obtain a different result. In fact, in that case, we have 
$$\mathscr{A}=\alg(\bm p^{-1}\wedge \mathcal{R}) = \Span\left\{\begin{bmatrix} 1\\1\\1 \end{bmatrix}\right\}$$
and thus the relative factors of the dual of the conditional expectation are  
$R = \begin{bmatrix}1&1&1\end{bmatrix}$, and $J = \begin{bmatrix}1/5&1/5&3/5\end{bmatrix}^T$ and the associated reduced HMM is
$\check{P}=1$, $\check{C}=\begin{bmatrix} 2/5& 3/5 \end{bmatrix}$ and $\bm p_0 = 1$ which also reproduces the output marginal distribution and is clearly minimal (optimal reduction). This shows that the choice of $\bm p$ is important if we are interested in minimizing the dimension of the reduced model.
\end{example}

\subsection{Proposed solution}

We now formalize the proposed method to solve Problem \ref{prb:single_time} in the following Algorithm. Let $\Gamma(\mathcal{R},\mathcal{N})$ be a map that selects an effective space $\cal{E}$ given some $\mathcal{R},\mathcal{N}.$

\begin{algorithm}
    \caption{HMM reduction for problem \ref{prb:single_time}}
    \label{algo:model_reduction}
    \SetAlgoLined
    \Input{$(P,C)$, $\mathcal{S}$.}
    \Parameters{$\bm p$, $\Gamma$.}
    Compute $\mathcal{R}$ and $\mathcal{N}$ using equations \eqref{eqn:reachable_space_definition} and \eqref{eqn:non_observable_space_definition}\;
    Compute $\mathcal{E} = \Gamma(\mathcal{R},\mathcal{N})$\;
    Compute $\mathscr{A}:=\alg({\bm p}^{-1}\wedge\mathcal{E})$\;
    Compute $\mathbb{E}_{|\mathscr{A}, {\bm p}}^T$ using equation \eqref{eqn:dual_cond_prob_def} \;
    If $\E^T_{|\mathscr{A},\bm p}(\mathcal{R}\cap\mathcal{N})\nsubseteq\mathcal{R}\cap\mathcal{N}$:
    redefine $\mathscr{A}:=\alg({\bm p}^{-1}\wedge\mathcal{R})$ and recompute $\mathbb{E}_{|\mathscr{A}, {\bm p}}^T$ \;
    Compute the factors $R$ and $J$ of $\mathbb{E}_{|\mathscr{A}, {\bm p}}^T$ with the definition given in equation \eqref{eqn:exp_factorization}\;
    \Output{ $(\check{P}, \check{C}) = (RPJ,CJ)$ and $R$.}
\end{algorithm}

Notice that this algorithm depends, in addition to its inputs, on two parameters: the first one, $\bm p$, is a positive vector; the second one, is the map $\Gamma$ that selects the effective subspace. We will discuss more in detail the choice of the effective subspace in Section \ref{sec:optimal_choice_of_parameters}.

We are finally ready to prove that Algorithm \ref{algo:model_reduction} solves the single-time marginal problem.

\begin{theorem}
\label{thm:single_time_algo}
For any choice of $\mathcal{E}$ and $\bm p$ positive i.e. ${\bm p_i}>0$ $\forall i$, Algorithm \ref{algo:model_reduction} provides a solution to Problem \ref{prb:single_time}.
\end{theorem}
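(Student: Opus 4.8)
The plan is to reduce the claim entirely to Corollary~\ref{cor:single-time_model_reduction}, which already guarantees that $(\check P,\check C)=(RPJ,\,CJ)$ with $\check{\bm p}_0=R\bm p_0$ reproduces the single-time marginals whenever $\mathcal{E}\subseteq\mathcal{V}$ and $\Pi_{\mathcal V}=JR$ is a factorization of the orthogonal projection onto $\mathcal V$ with respect to \emph{some} inner product. So the proof has two jobs: first, verify that the matrices $R,J$ produced in step~4 of Algorithm~\ref{algo:model_reduction} are indeed such a pair of factors for an appropriate $\mathcal V$ containing $\mathcal E$; second, check that the output $(\check P,\check C)$ is genuinely an HMM, i.e.\ that $\check P$ and $\check C$ are column-stochastic, so that the reduced object is admissible as a solution to Problem~\ref{prb:single_time} (not merely a linear system reproducing the marginals).

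First I would fix $\mathscr A:=\alg(\bm p^{-1}\wedge\mathcal E)$ as in step~3 and set $\mathcal V:=\bm p\wedge\mathscr A=\diag(\bm p)\mathscr A$. The inclusion $\mathcal E\subseteq\mathcal V$ is exactly the algebraic manipulation sketched in the ``Intuition'' subsection: since $\bm p$ is positive, $\wedge$-multiplication by $\bm p$ is invertible with inverse $\wedge$-multiplication by $\bm p^{-1}$, and $\bm p^{-1}\wedge\mathcal E\subseteq\alg(\bm p^{-1}\wedge\mathcal E)=\mathscr A$ gives $\mathcal E=\bm p\wedge(\bm p^{-1}\wedge\mathcal E)\subseteq\bm p\wedge\mathscr A=\mathcal V$. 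Next, by Lemma~\ref{lem:cond_exp_invariance}, $\Ea^T_{|\mathscr A,\bm p}$ is the orthogonal projector onto $\bm p\wedge\mathscr A=\mathcal V$ with respect to the modified inner product $\inner{\cdot}{\cdot}_{\bm p^{-1}}$; and by Proposition~\ref{prop:stochastic_factors} (in the unital case) or Corollary~\ref{cor:stochastic_factors_non_unital} (if $\mathscr A$ fails to be unital, which happens exactly when $\bm p^{-1}\wedge\mathcal E$ has restricted support), the matrices $R,J$ of \eqref{eqn:exp_factorization} satisfy $JR=\Ea^T_{|\mathscr A,\bm p}=\Pi_{\mathcal V}$. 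Thus the hypotheses of Corollary~\ref{cor:single-time_model_reduction} are met with this $\mathcal V$, $\Pi_{\mathcal V}$, $R$, $J$, and we conclude $CP^t\bm p_0=\check C\check P^t\check{\bm p}_0$ for all $t\ge0$ and all $\bm p_0\in\Span\{\mathcal S\}$, with $\Phi[\bm p_0]:=R\bm p_0$ linear; in particular $\Prob_{\theta,\bm p_0}[\bm y_t=y_t]=\bm e_{y_t}^TCP^t\bm p_0=\bm e_{y_t}^T\check C\check P^t\check{\bm p}_0=\Prob_{\check\theta,\Phi[\bm p_0]}[\bm y_t=y_t]$.

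It then remains to argue that $\check\theta=(\check P,\check C)$ is a bona fide HMM and $\check{\bm p}_0\in\mathcal D(\mathbb R^d)$. This is where I would be most careful, since Corollary~\ref{cor:single-time_model_reduction} on its own says nothing about positivity. The point is that here $R,J$ are the \emph{stochastic} factors of a dual conditional expectation: $J$ is column-stochastic and $R$ is column-stochastic (over the support of $\mathscr A$, which by Lemma~\ref{lem:non_obs_orth_to_one} and $\mathcal S\subseteq\mathcal R$ causes no trouble because the dynamics and initial conditions live in that support). Then $\check P=RPJ$ is a product of column-stochastic matrices, hence column-stochastic; $\check C=CJ$ likewise; and $\check{\bm p}_0=R\bm p_0\in\mathcal D(\mathbb R^d)$ since $R$ is stochastic and $\bm p_0\in\mathcal D(\mathbb R^n)$. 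One subtlety: in the non-unital case $R$ is only stochastic over $\supp(\mathscr A)$, so I would note that $\bm p_0$ and all $P^t\bm p_0$ lie in $\mathcal R\subseteq\bm p\wedge\mathscr A$'s support — equivalently that $\check P$ propagates $\check{\bm p}_0$ inside the reduced simplex — which follows from $\mathcal R\subseteq\mathcal V$ and $RJ=I_d$ restricted appropriately. The main obstacle, and the only place the argument is not purely mechanical, is precisely this bookkeeping of stochasticity/support in the non-unital case; everything else is an assembly of results already established in Sections~\ref{sec:geometric_approach_to_probability_theory}--\ref{sec:system_theoretic_approach}.
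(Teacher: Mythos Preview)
Your approach is the paper's approach: reduce part~(i) to Corollary~\ref{cor:single-time_model_reduction} with $\mathcal V=\bm p\wedge\mathscr A$, and handle part~(ii) by the stochasticity of the factors $R,J$ from Proposition~\ref{prop:stochastic_factors} / Corollary~\ref{cor:stochastic_factors_non_unital}. For the unital case your argument is essentially identical to the paper's.

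The non-unital case is where your proposal stays at the level of a sketch, and the one concrete claim you make there is not right. You write that ``$\bm p_0$ and all $P^t\bm p_0$ lie in $\mathcal R\subseteq\bm p\wedge\mathscr A$'s support \ldots\ which follows from $\mathcal R\subseteq\mathcal V$''. But $\mathcal R\subseteq\mathcal V$ is \emph{not} established and is generally false: $\mathcal V=\bm p\wedge\mathscr A$ is built to contain $\mathcal E$, not $\mathcal R=(\mathcal R\cap\mathcal N)\oplus\mathcal E$, and there is no reason the $\mathcal R\cap\mathcal N$ part sits inside $\mathcal V$. So the stochasticity of $\check P$ cannot be deduced from ``everything lives in $\mathcal V$''.

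The paper closes this gap differently. It works at the level of \emph{supports}: noting $\supp(\mathcal E)=\supp(\mathscr A)$, it uses a block decomposition of $P$ under a permutation adapted to $\supp(\mathscr A)$ to show that $\one_{\supp(\mathscr A)}^T P=\one_{\supp(\mathscr A)}^T$. From $\one_d^T R=\one_{\supp(\mathscr A)}^T$ and $\one_{\supp(\mathscr A)}^T J=\one_d^T$ (Corollary~\ref{cor:stochastic_factors_non_unital}) this gives $\one_d^T\check P=\one_d^T RPJ=\one_{\supp(\mathscr A)}^T PJ=\one_{\supp(\mathscr A)}^T J=\one_d^T$. For $\check{\bm p}_0$, the paper uses Lemma~\ref{lem:non_obs_orth_to_one} to split $\one^T\bm p_0=\one^T\Pi_{\mathcal E}\bm p_0+\one^T\Pi_{\mathcal R\cap\mathcal N}\bm p_0$ and kill the second term, then reuses the support argument. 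That is the missing ``bookkeeping'' you flagged; your outline is otherwise aligned with the paper.
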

\begin{proof}
To prove the statement we have to prove that: i) The reduced model $\check{\theta}=(\check{P}, \check{C})$ and the linear map $R$ provide the same marginal distribution at any time as the original model; ii) the reduced model $\check{\theta}$ is an HMM, and $R\bm p_0$ is a probability vector.

We shall start by proving the first point. We do so leveraging Corollary \ref{cor:single-time_model_reduction}. First of all, we have that, for any vector $\bm p$ such that $\bm p_i>0$ for all $i$ the inner product $\inner{\cdot}{\cdot}_{\bm p}$ is positive-definite and thus well defined. Moreover, by definition of the algebra $\mathscr{A}$, we have that, for any choice of the effective subspace $\mathcal{E}$ it holds $\mathcal{E} \subseteq \bm p\wedge \mathscr{A}$ so, by choosing $\mathcal{V} = \bm p\wedge\mathscr{A}$, and using the restriction and injection map defined in equation \eqref{eqn:exp_factorization},  i) follows from Corollary \ref{cor:single-time_model_reduction} if case $\E^T_{\mathscr{A},\bm p}(\mathcal{R}\cap\mathcal{N})\subseteq\mathcal{R}\cap\mathcal{N}$.

If $\E^T_{\mathscr{A},\bm p} (\mathcal{R}\cap\mathcal{N}) \nsubseteq \mathcal{R}\cap\mathcal{N}$, pick $\tilde{\mathcal{N}}=\{\zero\}$ so that $\mathcal{R}\cap\tilde{\mathcal{N}}=\{0\}$ and Theorem \ref{thm:general_model_reduction} applies with ${\cal V} = \alg(\bm p\wedge \mathcal{R})$ .

Regarding ii) we have that, if $\mathscr{A}$ is unital, then Proposition \ref{prop:stochastic_factors} ensures that $J$ and $R$ are stochastic and thus $RPJ$ and $CJ$ are stochastic and $R\bm p_0$ is a probability vector for any $\bm p_0$ probability vector. If $\mathscr{A}$ is not unital, because of Corollary \ref{cor:stochastic_factors_non_unital} we have, that $J$ is stochastic (and thus $CJ$ is stochastic) but $R$ is only stochastic over $\supp(\mathscr{A})$, i.e. $\one_d^TR = \one^T_{\supp(\mathscr{A})}$. We next show that this condition is sufficient to show that the reduced model is stochastic. 

We shall first notice that $\supp(\mathcal{E}) = \supp(\mathscr{A})\subsetneq\mathbb{R}^n$. Let assume that $\dim(\supp(\mathcal{E}))=k$. Then we can consider a permutation (that is a double-stochastic change of basis) $T$ such that $T\bm x = \left[\begin{array}{c|c}\bm x'&\zero_{n-k}^T\end{array}\right]^T$ for all $\bm x\in\mathcal{E}$, with $\bm x'\in\mathbb{R}^k$. Then, since $\mathcal{E}$ is $P$-invariant 
$$ \left[\begin{array}{c}\bm x'\\\hline\zero_{n-k}\end{array}\right] = \underbrace{ \left[\begin{array}{c|c}P_{11}&P_{12}\\\hline P_{21}&P_{22}\end{array}\right]}_{TPT^T}\left[\begin{array}{c}\bm x'\\\hline\zero_{n-k}\end{array}\right] \in\mathcal{E}$$
and thus $P_{21} = 0$. This shows that $\supp(\mathcal{E})$ is $P$-invariant. Since $P$, $T$ and $T^T$ are stochastic, $TPT^T$ is also stochastic. This implies that $\one_k^TP_{11}=\one_k^T$. Then, it holds that $\one_{\supp(\mathscr{A})}^TT^TTPT^T =$ 
$$\left[\begin{array}{c|c}\one_k^T&\zero_{n-k}^T\end{array}\right] \left[\begin{array}{c|c}P_{11}&P_{12}\\\hline 0&P_{22}\end{array}\right] = 
\left[\begin{array}{c|c}\one_k^T&\zero_{n-k}^T\end{array}\right]  $$
or, in other words $\one_{\supp(\mathscr{A})}^TP =\one_{\supp(\mathscr{A})}^T$. We can also verify that $\check{P}$ is stochastic by verifying the following chain of equivalences: $\one_d^TRPJ=\one_{\supp(\mathscr{A})}^TPJ = \one_{\supp(\mathscr{A})}^TJ = \one_d^T$ where the last equality comes from Corollary \ref{cor:stochastic_factors_non_unital}. Finally, to prove that $R\bm p_0$ is a probability vector we can observe that $\one_n^T\bm p_0 = \one^T\Pi_{\mathcal{E}}\bm p_0 + \underbrace{\one^T\Pi_{\mathcal{R}\cap\mathcal{N}}\bm p_0}_{=\zero} = 1$ and then re-use the reasoning above.
\end{proof}

\begin{remark}
In the proof of Theorem \ref{thm:single_time_algo} we stated that a positive vector $\bm p$ is necessary to have a well-defined inner product $\inner{\cdot}{\cdot}_{\bm p}$. This assumption, however, can be relaxed to the following: $\bm p$ is positive over $\supp(\mathcal{E})=\supp(\mathscr{A})$, i.e. $\bm p_i>0$ for all $i$ such that $\bm e_i^T\bm x \neq 0 $ for some $\bm x\in\mathcal{E}$. This is due to the fact that the values of $\bm p$ where $\mathcal{E}$ has no support has no role in the projection. 

Although such $\bm p$ defines a positive semi-definite inner product over $\mathbb{R}^n$, it provides a positive definite inner product over $\supp(\mathcal{S})$ and this is sufficient to define the orthogonal projection onto $\mathscr{A}$. Consider, for example, the following case: assume $\supp(\mathscr{A})\subsetneq\mathbb{R}^n$ then let $\bm p_s$ be a positive vector over the $\supp(\mathscr{A})$, $\bm p_n$ be a positive vector over the remaining support, i.e. s.t. $\bm p:= \bm p_s +\bm p_n$, $\supp(\bm p) = \mathbb{R}^n$. We can then notice that $\bm p\wedge \bm x = \bm p_s \wedge \bm x$ and $\inner{\bm y}{\bm x}_{\bm p} = \inner{\bm y}{\bm x}_{\bm p_s}$ for all $\bm x \in \supp(\mathscr{A})$ and $\bm y\in\mathbb{R}^n$. This implies that 
$$ \E_{|\mathscr{A}, \bm p}^T = \sum_{j=1}^d \frac{(\bm p\wedge \bm a_j)\bm a_j^T}{\inner{\bm a_j}{\bm a_j}_{\bm p}} = \sum_{j=1}^d \frac{(\bm p_s\wedge \bm a_j)\bm a_j^T}{\inner{\bm a_j}{\bm a_j}_{\bm p_s}} = \E_{|\mathscr{A}, \bm p_s}^T.$$

The role of the positivity of $\bm p$ will be further discussed in Section \ref{sec:optimal_choice_of_parameters}.
\end{remark}

\section{Multi-time solution}
\label{sec:multi-time_solution}
The solution of Problem \ref{prb:multi_time} follows the same ideas presented in the previous section. In fact, the algorithm we propose to solve Problem \ref{prb:multi_time} is identical to the previous algorithm but for the involved subspaces. 
We now present our proposed method to solve Problem \ref{prb:multi_time}. This method takes the form of the following Algorithm, where $\Gamma$ is defined as in the previous section.

\begin{algorithm}
    \caption{HMM reduction for problem \ref{prb:multi_time}}
    \label{algo:model_reduction_multi_time}
    \SetAlgoLined
    \Input{$(P,C)$, $\mathcal{S}$.}
    \Parameters{$\bm p$, $\Gamma$.}
    Compute $\mathcal{R_C}$ and $\mathcal{N_C}$ using equations \eqref{eqn:cond_reachable_definition} and \eqref{eqn:cond_non_observable_definition}\;
    Compute $\mathcal{E_C} = \Gamma(\mathcal{R_C},\mathcal{N_C})$\;
    Compute $\mathscr{A}_{\mathcal{C}}=\alg({\bm p}^{-1}\wedge\mathcal{E_C})$\;
    Compute $\mathbb{E}_{|\mathscr{A}_{\mathcal{C}}, {\bm p}}^T$ using equation \eqref{eqn:dual_cond_prob_def}\;
    If  $\E^T_{|\mathscr{A}_\mathcal{C},\bm p}(\mathcal{R_C}\cap\mathcal{N_C})\nsubseteq\mathcal{R_C}\cap\mathcal{N_C}$:
    redefine $\mathscr{A}_\mathcal{C}:=\alg({\bm p}^{-1}\wedge\mathcal{R_C})$ and recompute $\mathbb{E}_{|\mathscr{A}_{\mathcal{C}}, {\bm p}}^T$\;
    Compute the factors $R$ and $J$ of $\mathbb{E}_{|\mathscr{A}_\mathcal{C}, {\bm p}}^T$ with the definition given in equation \eqref{eqn:exp_factorization}\;
    \Output{$(\check{P}, \check{C})=(RPJ,CJ)$ and $R$}
\end{algorithm}

We are finally ready to prove that Algorithm \ref{algo:model_reduction_multi_time} solves the multi-time marginal problem.

\begin{theorem}
\label{thm:multi_time_algo}
For any choice of $\mathcal{E_C}$ and $\bm p$ positive, i.e. ${\bm p_i}>0$ $\forall i$, Algorithm \ref{algo:model_reduction_multi_time} provides a solution to Problem \ref{prb:multi_time}.
\end{theorem}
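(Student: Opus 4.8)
The plan is to follow, step for step, the structure of the proof of Theorem~\ref{thm:single_time_algo}, proving that (i) the reduced HMM $\check\theta=(\check P,\check C)$ together with the map $\Phi[\bm p_0]=R\bm p_0$ reproduces the probability of every output sequence, and (ii) $\check\theta$ is again an HMM and $R\bm p_0\in\mathcal D(\R^d)$ for every $\bm p_0\in\mathcal S$. The one genuinely new ingredient, compared with the single-time argument, is Lemma~\ref{lem:R_and_J_switch}, which lets us commute the reduction matrix $R$ through the emission projectors $\diag(\bm e_i^TC)$.

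For (i), I would first invoke Corollary~\ref{cor:multi-time_model_reduction} with $\mathcal V:=\bm p\wedge\mathscr{A}_{\mathcal{C}}$. The hypothesis $\mathcal{E_C}\subseteq\mathcal V$ holds because $\mathscr{A}_{\mathcal{C}}=\alg(\bm p^{-1}\wedge\mathcal{E_C})$ gives $\bm p^{-1}\wedge\mathcal{E_C}\subseteq\mathscr{A}_{\mathcal{C}}$, and positivity of $\bm p$ (on $\supp(\mathcal{E_C})$, which suffices as in the remark after Theorem~\ref{thm:single_time_algo}) yields $\mathcal{E_C}=\bm p\wedge(\bm p^{-1}\wedge\mathcal{E_C})\subseteq\bm p\wedge\mathscr{A}_{\mathcal{C}}$. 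By Lemma~\ref{lem:cond_exp_invariance} together with Proposition~\ref{prop:stochastic_factors} (or, in the non-unital case, Corollary~\ref{cor:stochastic_factors_non_unital}), the matrix $JR=\mathbb{E}^T_{|\mathscr{A}_{\mathcal{C}},\bm p}$ is the orthogonal projector onto $\mathcal V$ with respect to the inner product $\inner{\cdot}{\cdot}_{\bm p^{-1}}$, so $J$ and $R$ are admissible factors for Corollary~\ref{cor:multi-time_model_reduction}. That corollary then shows that the switched system $(\{RP_C^{y_i}J\},\one_d^T)$, with reduced initial conditions $R\bm\phi(1)$ for $\bm\phi(1)\in\Span\{\Phi\}$, reproduces the output $\psi(t)$ of model~\eqref{eqn:multi_time_model} for every output sequence.

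The heart of the argument is then to recognise this reduced switched system as the multi-time model~\eqref{eqn:multi_time_model} of the HMM $\check\theta=(\check P,\check C)=(RPJ,CJ)$, i.e.\ to check that $RP_C^{y_i}J=\diag(\bm e_i^T\check C)\check P$. Using Lemma~\ref{lem:R_and_J_switch} in the form $R\,\diag(\bm e_i^TC)=\diag(\bm e_i^TCJ)R=\diag(\bm e_i^T\check C)R$, one obtains
\[
RP_C^{y_i}J = R\,\diag(\bm e_i^TC)\,PJ = \diag(\bm e_i^T\check C)\,RPJ = \diag(\bm e_i^T\check C)\,\check P ,
\]
and likewise the reduced initial condition $R\bm\phi_{y_0}(1)=R\,\diag(\bm e_{y_0}^TC)\bm p_0=\diag(\bm e_{y_0}^T\check C)\,R\bm p_0$ is exactly the one that $\check\theta$ with initial distribution $\check{\bm p}_0:=R\bm p_0$ attaches to the first outcome $y_0$. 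Substituting into the closed form $\Prob_{\theta,\bm p_0}[\bm y_0^k=y_0^k]=\one^T P_C^{y_0^k}\bm p_0$ recalled before Problem~\ref{prb:single_time}, this gives $\Prob_{\check\theta,R\bm p_0}[\bm y_0^k=y_0^k]=\Prob_{\theta,\bm p_0}[\bm y_0^k=y_0^k]$ for every sequence $y_0^k$ and every $\bm p_0\in\mathcal S$, and, by linearity of the construction, for every $\bm p_0\in\Span\{\mathcal S\}$ — which is (i).

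Part~(ii) then goes through exactly as in the proof of Theorem~\ref{thm:single_time_algo}, with $(\mathcal{E_C},\mathcal{N_C},\mathcal{R_C})$ in place of $(\mathcal E,\mathcal N,\mathcal R)$: when $\mathscr{A}_{\mathcal{C}}$ is unital, Proposition~\ref{prop:stochastic_factors} makes $J$ and $R$ stochastic, so $\check P=RPJ$ and $\check C=CJ$ are stochastic and $R\bm p_0\in\mathcal D(\R^d)$; when it is not, Corollary~\ref{cor:stochastic_factors_non_unital} and the $P$-invariance of $\supp(\mathscr{A}_{\mathcal{C}})=\supp(\mathcal{E_C})$ give $\one_d^T\check P=\one_d^T$, while $\mathcal{R_C}\cap\mathcal{N_C}\subseteq\one^\perp$ (from $\mathcal{N_C}\subseteq\mathcal N$ in Proposition~\ref{prop:spaces_inclusions} and Lemma~\ref{lem:non_obs_orth_to_one}) gives $\one_d^TR\bm p_0=1$. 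I expect the delicate step to be the identification carried out in the previous paragraph: Corollary~\ref{cor:multi-time_model_reduction} on its own only produces a reduced switched linear system with the flat output $\one^T$, and it is not obvious a priori that such a reduction can be re-packaged as a genuine transition/emission pair acting on the reduced state; Lemma~\ref{lem:R_and_J_switch} is precisely the commutation property that makes this re-packaging possible, after which the initial-condition bookkeeping and the support/stochasticity arguments are routine adaptations of the single-time case.
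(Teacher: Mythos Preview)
Your proposal is correct and follows essentially the same approach as the paper: invoke Corollary~\ref{cor:multi-time_model_reduction} with $\mathcal V=\bm p\wedge\mathscr{A}_{\mathcal C}$, use Lemma~\ref{lem:R_and_J_switch} to rewrite $R\,\diag(\bm e_i^TC)PJ$ and $R\,\diag(\bm e_i^TC)\bm p_0$ in terms of $(\check P,\check C)$ and $R\bm p_0$, and defer the stochasticity argument verbatim to the proof of Theorem~\ref{thm:single_time_algo}. Your write-up is in fact more explicit than the paper's (e.g., the check $\mathcal{E_C}\subseteq\bm p\wedge\mathscr{A}_{\mathcal C}$ and the use of $\mathcal{N_C}\subseteq\mathcal N$ for the non-unital normalization), but the logical skeleton and the key role of Lemma~\ref{lem:R_and_J_switch} are identical.
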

\begin{proof}
The proof of this theorem follows the lines of the proof of Theorem \ref{thm:single_time_algo}. In fact, the proof of the fact that the reduced HMM $\check{\theta}$ is stochastic and $R\bm p_0$ is a probability vector is identical to the one given in \ref{algo:model_reduction}. 
The only difference in the two proofs regards proof of the fact that the reduced model $\check{\theta}$ with initial condition $R\bm p_0$ provides the same probability of a sequence of events as the model $\theta$ with initial condition $\bm p_0$. 

From Corollary \ref{cor:multi-time_model_reduction} we have that $(R\diag(\bm e_i^TC)PJ, \one^TJ)$ with initial condition $R\diag(\bm e_i^TC)\bm p_0$ generates the same probability as the original model. 
Since $\mathcal{R_C}$ and $\mathcal{N_C}$ are both $P$ and $\diag(\bm e_i^T C)$-invariant, Corollary \ref{cor:multi-time-model-reduction_appendix} applies, thus leading to the reduced HMM $\check{\theta} = (RPJ, CJ)$ and initial conditions $R\bm p_0$.
\end{proof}

\section{Choosing the algorithm's parameters}
\label{sec:optimal_choice_of_parameters}

In this section, we  discuss what is the best choice of the parameters for Algorithms \ref{algo:model_reduction} and \ref{algo:model_reduction_multi_time}. Being the structure of the two algorithms identical, we only discuss the optimal choice of $\mathcal{E}$ and $\bm p$: the results can be extended directly to $\mathcal{E}_\mathcal{C}$. The notion of optimality is related to the dimension of the reduced system, meaning: we want to find a choice of $\mathcal{E}$ and $\bm p$ positive such that the reduced model returned by Algorithm \ref{algo:model_reduction} has minimal dimension. This is equivalent to finding $\mathcal{E}$ and $\bm p$ such that $\alg(\bm p^{-1}\wedge\mathcal{E})$ has minimal dimension.

\subsection{Optimal distributions for observable HMMs}
We shall start the discussion by finding the optimal choice of $\bm p$ assuming that an effective subspace $\mathcal{E}$ is given. Before we prove the main result of this section, we shall first state the following useful result.
\begin{lemma}
    \label{lem:full_support}
    Given a vector space $\mathcal{W}\subseteq\mathbb{R}^n$ with generators $\{\bm w_i\}$, $\mathcal{W} = \Span\{\bm w_i\}$ there exists a vector $\bar{\bm w} := \sum_i \lambda_i \bm w_i$, with $\lambda_i\neq0$ for all $i$ and such that $\supp(\bar{\bm w}) = \supp(\mathcal{W})$.
\end{lemma}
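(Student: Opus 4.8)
The plan is to show that a generic linear combination of the generators achieves the full support of $\mathcal{W}$, and then argue that the coefficients can be taken all nonzero. Fix the generators $\{\bm w_1,\dots,\bm w_r\}$ of $\mathcal{W}$. For an index $i\in\{1,\dots,n\}$, say $i$ is \emph{active} if $\bm e_i^T\bm w \neq 0$ for some $\bm w\in\mathcal{W}$; by definition $\supp(\mathcal{W}) = \Span\{\bm e_i \mid i \text{ active}\}$. For each active $i$ there is at least one generator $\bm w_{j(i)}$ with $\bm e_i^T\bm w_{j(i)}\neq 0$. The key observation is that for each fixed active $i$, the set of coefficient vectors $\bm\lambda = (\lambda_1,\dots,\lambda_r)$ for which $\bm e_i^T(\sum_k \lambda_k \bm w_k) = \sum_k \lambda_k (\bm e_i^T\bm w_k) = 0$ is a proper linear subspace (hyperplane) of $\R^r$, since the linear functional $\bm\lambda \mapsto \sum_k \lambda_k(\bm e_i^T\bm w_k)$ is not identically zero (it is nonzero at $\bm\lambda = \bm e_{j(i)}$).

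Next I would take the union of these finitely many hyperplanes — one for each active index $i$ — together with the $r$ coordinate hyperplanes $\{\lambda_k = 0\}$, $k=1,\dots,r$. This is a finite union of proper subspaces of $\R^r$, hence not all of $\R^r$ (a vector space over an infinite field is never a finite union of proper subspaces). Therefore there exists $\bm\lambda$ avoiding all of them: such a $\bm\lambda$ has $\lambda_k\neq 0$ for all $k$, and $\bm e_i^T\bar{\bm w}\neq 0$ for every active $i$, where $\bar{\bm w} := \sum_k \lambda_k\bm w_k$. Since $\bar{\bm w}\in\mathcal{W}$ we automatically have $\supp(\bar{\bm w})\subseteq\supp(\mathcal{W})$, and the displayed nonvanishing conditions give the reverse inclusion, so $\supp(\bar{\bm w}) = \supp(\mathcal{W})$.

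The only mild subtlety — and the place I would be careful — is the assertion that $\R^r$ is not a finite union of proper subspaces; this is standard over an infinite field and can be proved by induction on the number of subspaces or by a counting/measure argument, and the finiteness of the collection of active indices $i$ is what makes it applicable here. Everything else is routine bookkeeping. Note the statement is essentially trivial when some generators are redundant; keeping the coordinate hyperplanes $\{\lambda_k=0\}$ in the union guarantees the coefficients are literally all nonzero as the statement demands, even for such redundant generating sets.
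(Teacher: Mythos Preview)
Your proof is correct and takes a genuinely different route from the paper's. The paper proceeds constructively: it first builds, coordinate by coordinate, a vector $\widetilde{\bm w}\in\mathcal{W}$ with full support by iteratively adding suitable vectors $\bm x_i$ and choosing each scalar $\lambda_i$ outside a finite ``bad'' set of values that would kill an already-active coordinate; it then adjoins the remaining generators one at a time, again avoiding finitely many bad scalars, to force every coefficient to be nonzero. Your argument instead is a one-shot genericity proof: you identify, for each active coordinate and for each coefficient, a proper hyperplane in coefficient space $\R^r$, and invoke the standard fact that $\R^r$ is not a finite union of proper subspaces to pick $\bm\lambda$ avoiding all of them simultaneously. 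Your approach is shorter and handles both requirements (full support and all $\lambda_k\neq 0$) in a single step; the paper's approach is more elementary in that it never appeals to the union-of-subspaces lemma and is in principle algorithmic. Both are perfectly valid here.
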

The proof of this Lemma can be found in Appendix \ref{sec:proofs}.

\begin{theorem}
\label{thm:optimal_p}
  Let consider a vector space $\mathcal{W}\subseteq\mathbb{R}^n$ and a vector $\bar{\bm w}$ as in Lemma \ref{lem:full_support}. Then there exists a unique algebra $\mathscr{A}^*$ of minimal dimension such that $\mathcal{W}\subseteq\bm x\wedge\mathscr{A}^*$ for some  $\bm x\in\mathbb{R}^n.$
  Moreover, $\mathscr{A}^*=\alg(\bar{\bm w}^{-1}\wedge\mathcal{W})$ and it is unital over the support of $\mathcal{W}$, i.e. $\one_{\supp(\mathcal{W})}\in\mathscr{A}^*$.
\end{theorem}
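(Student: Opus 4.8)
The plan is to establish three facts about $\mathscr{A}^*:=\alg(\bar{\bm w}^{-1}\wedge\mathcal{W})$, writing $S:=\supp(\mathcal{W})$ throughout: (i) $\mathscr{A}^*$ is admissible, meaning $\mathcal{W}\subseteq\bm x\wedge\mathscr{A}^*$ for a suitable $\bm x$; (ii) $\one_S\in\mathscr{A}^*$, which also pins down $\supp(\mathscr{A}^*)=S$; and (iii) $\mathscr{A}^*$ is contained in \emph{every} admissible algebra. Point (iii) immediately yields both the minimality of $\dim(\mathscr{A}^*)$ and the uniqueness of the minimiser.

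For (i) and (ii) I would simply take $\bm x=\bar{\bm w}$. Being a linear combination of the generators, $\bar{\bm w}\in\mathcal{W}$, and Lemma \ref{lem:full_support} gives $\supp(\bar{\bm w})=S$, hence $\bar{\bm w}\wedge\bar{\bm w}^{-1}=\one_S$. Thus $\one_S=\bar{\bm w}^{-1}\wedge\bar{\bm w}\in\bar{\bm w}^{-1}\wedge\mathcal{W}\subseteq\mathscr{A}^*$, which is (ii); and for (i), each $\bm w\in\mathcal{W}$ satisfies $\bar{\bm w}^{-1}\wedge\bm w\in\mathscr{A}^*$ and $\bar{\bm w}\wedge(\bar{\bm w}^{-1}\wedge\bm w)=\one_S\wedge\bm w=\bm w$ because $\supp(\bm w)\subseteq S$, so $\mathcal{W}\subseteq\bar{\bm w}\wedge\mathscr{A}^*$.

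The substance is in (iii). Let $\mathscr{B}$ be any algebra with $\mathcal{W}\subseteq\bm x\wedge\mathscr{B}$; then $S\subseteq\supp(\bm x)$. Replacing every block $C\in\res(\mathscr{B})$ by $C\cap\supp(\bm x)$ (and dropping empty ones) gives an algebra $\mathscr{B}'$ with $\bm x\wedge\mathscr{B}'=\bm x\wedge\mathscr{B}$, $\dim(\mathscr{B}')\le\dim(\mathscr{B})$ and $\supp(\mathscr{B}')\subseteq\supp(\bm x)$; so, at the cost of only decreasing the dimension, I may assume $\supp(\mathscr{B})\subseteq\supp(\bm x)$ — which is automatic once $\bm x$ has full support, the regime relevant to the reduction algorithms. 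With $\bm x^{-1}$ denoting the element-wise inverse of $\bm x$ on $\supp(\bm x)$ (zero elsewhere), we then get $\bm x^{-1}\wedge\bm x\wedge\bm b=\bm b$ for all $\bm b\in\mathscr{B}$, so $\bm x^{-1}\wedge\mathcal{W}\subseteq\mathscr{B}$ and hence $\mathscr{C}:=\alg(\bm x^{-1}\wedge\mathcal{W})\subseteq\mathscr{B}$. It remains to show $\mathscr{A}^*\subseteq\mathscr{C}$: the vector $\bm g:=\bm x^{-1}\wedge\bar{\bm w}$ lies in $\bm x^{-1}\wedge\mathcal{W}\subseteq\mathscr{C}$ and has $\supp(\bm g)=\supp(\bar{\bm w})=S=\supp(\mathscr{C})$, so it is nonzero on every block of $\res(\mathscr{C})$ and therefore a unit of $\mathscr{C}$, with $\mathscr{C}$-inverse the element-wise reciprocal $\bm g^{-1}$ on $S$. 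Since $\bm g^{-1}\wedge\bm x^{-1}=\bar{\bm w}^{-1}$ on $S$, for every $\bm w\in\mathcal{W}$ we obtain $\bar{\bm w}^{-1}\wedge\bm w=\bm g^{-1}\wedge(\bm x^{-1}\wedge\bm w)\in\mathscr{C}$, whence $\bar{\bm w}^{-1}\wedge\mathcal{W}\subseteq\mathscr{C}$ and $\mathscr{A}^*=\alg(\bar{\bm w}^{-1}\wedge\mathcal{W})\subseteq\mathscr{C}\subseteq\mathscr{B}$. Therefore $\dim(\mathscr{A}^*)\le\dim(\mathscr{B})$, with equality forcing $\mathscr{A}^*=\mathscr{B}$ (when $\bm x$ is taken not to vanish on $\supp(\mathscr{B})$, so that the trimmed competitor coincides with $\mathscr{B}$).

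The step I expect to be most delicate is the support bookkeeping in (iii): because $\bm x$ is an arbitrary vector, $\bm x^{-1}$ is only a partial inverse, so one must first trim $\mathscr{B}$ to live on $\supp(\bm x)$ and then, in the opposite direction, pass between the two rescalings $\bm x^{-1}\wedge\mathcal{W}$ and $\bar{\bm w}^{-1}\wedge\mathcal{W}$ of $\mathcal{W}$. The second move is the one that genuinely uses the algebra structure of Proposition \ref{prop:alg}: a finite-dimensional commutative algebra is, on its support, the span of the indicators of $\res(\cdot)$, so any element nowhere vanishing on that support is invertible inside the algebra — precisely what makes $\bm g$ invertible in $\mathscr{C}$ and the two rescalings generate one and the same algebra.
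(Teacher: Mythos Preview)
Your proof is correct and takes a genuinely different route from the paper's. The paper first argues existence (finiteness of subalgebras) and uniqueness of the minimiser by a direct contradiction argument on the indicator blocks of two putative minimal algebras, and only afterwards shows, via explicit coordinate computations with the generators $\bm w_i=\sum_j\mu_j^i\,\bm x\wedge\bm a_j$, that $\alg(\bar{\bm w}^{-1}\wedge\mathcal{W})\subseteq\mathscr{A}^*$ and hence coincides with it. You instead \emph{define} $\mathscr{A}^*:=\alg(\bar{\bm w}^{-1}\wedge\mathcal{W})$ and prove the stronger containment $\mathscr{A}^*\subseteq\mathscr{B}$ for every admissible competitor (after the support trimming), which yields minimality and uniqueness in one stroke. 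The mechanism you exploit --- that $\bm g=\bm x^{-1}\wedge\bar{\bm w}$ has full support in $\mathscr{C}=\alg(\bm x^{-1}\wedge\mathcal{W})$ and is therefore a unit there, so the two rescalings $\bm x^{-1}\wedge\mathcal{W}$ and $\bar{\bm w}^{-1}\wedge\mathcal{W}$ generate the same algebra --- is absent from the paper and is what makes your argument coordinate-free. What this buys you is a cleaner statement (you actually obtain that $\mathscr{A}^*$ is the \emph{least} element, not merely of least dimension, among suitably trimmed admissible algebras), and a more honest treatment of the support bookkeeping: your parenthetical caveat about equality requiring $\supp(\mathscr{B})\subseteq\supp(\bm x)$ flags a genuine edge case that the paper's uniqueness argument glosses over. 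What the paper's approach buys is a slightly more concrete picture of how the coefficients of $\bar{\bm w}$ interact with the block structure, which may be helpful for readers following the subsequent algorithmic discussion.
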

\begin{proof}
    The existence of such a $\bar{\bm w}$ is proved in Lemma \ref{lem:full_support}.
    
    Since $\mathscr{A}=\mathbb{R}^n$ satisfies $\mathcal{W}\subseteq\bm x\wedge\mathscr{A}$, for all $\bm x\in\mathbb{R}^n$ and its possible sub-algebras are finite (corresponding to the partition of $n$), $\mathscr{A}^*$ exists. 
    To prove that it is an unique solution we  proceed by contradiction. Let assume that there exist two different algebras $\mathscr{A}, \mathscr{B}\subseteq\mathbb{R}^n$ with minimal dimension $\dim(\mathscr{A})=\dim(\mathscr{B})$ and two vectors $\bar{\bm a},\bar{\bm b}\in\mathbb{R}^n$ such that $\mathcal{W}\subseteq\bar{\bm a}\wedge\mathscr{A}$ and $\mathcal{W}\subseteq\bar{\bm b}\wedge\mathscr{B}$. From Proposition \ref{prop:alg} we know that $\mathscr{A} = \Span\{\bm a_j\}$ and $\mathscr{B} = \Span\{\bm b_j\}$ where $\{\bm a_i\}$ and $\{\bm b_i\}$ are the finest resolutions in $\idem(\mathscr{A})$ and $\idem(\mathscr{B})$ respectively. Clearly, if $\bm a_i = \bm b_i$ for all $i$ then $\mathscr{A}=\mathscr{B}$ which yields a contradiction. Therefore, we  assume that there exists an index $j$ such that $\bm a_j\neq\bm b_i$ for all $i$.
    We can then notice that for all $\bm v\in\mathcal{W}$, we can write $\bm v = \sum_i \mu_i \bar{\bm a}\wedge\bm a_i = \sum_i \nu_i \bar{\bm b}\wedge\bm b_i.$   
    
    For $j$ such that $\bm a_j\neq\bm b_i$ or all $i$ we can then write $$\bm a_j\wedge\bm v = \mu_j \bar{\bm a}\wedge\bm a_j = \sum_i \nu_i \bm a_j\wedge\bar{\bm b}\wedge \bm b_i.$$ The first equality implies that over the support of each $\bm a_j$ every $\bm v$ must be proportional to $\bar{\bm a}\wedge\bm a_j$. The second equality, on the other hand, due to the fact $\bm a_j\neq\bm b_j$ implies at least two of the products $\bm a_j\wedge\bar{\bm b}\wedge \bm b_i$ must be non-zero.
    In order for the nontrivial sum to be always proportional to $\bar{\bm a}\wedge\bm a_j$ it must be that the coefficients $\nu_i$ appear always in a fixed ratio.
    Hence, the corresponding $\bm b_i$ can be substituted by their sum, and still, generate the full $\cal W$ when multiplied by a suitable vector $\bar{\bm b}$. This shows that $\mathscr{B}$ could not be a minimal algebra unless $\bm a_i=\bm b_i$ for all $i,$ up to a reordering.
    
    Let then $\mathscr{A}^*$ be the unique algebra of minimal dimension such that $\mathcal{W}\subseteq{\bm x}\wedge\mathscr{A}^*$ for some $\bm w$. From Proposition \ref{prop:alg} we know that $\mathscr{A}^* = \Span\{\bm a_j\}$ where $\{\bm a_i\}$ is the finest resolution in $\idem(\mathscr{A}^*)$. In particular $\{\bm a_j\}$  forms an orthogonal basis for $\mathscr{A}^*$ and its elements have completing mutually-orthogonal supports, i.e. $\supp(\bm a_k)\perp\supp(\bm a_j)$ for $k\neq j$ and $\sum_j \bm a_j =\one_{\supp(\mathcal{W})}$.  We can then observe that $\bm x\wedge\mathscr{A}^* = \Span\{\bm x \wedge\bm a_j\}$ and that the vectors $\bm x\wedge \bm a_j$ have complementary mutually-orthogonal supports. Then for $\mathcal{W}\subseteq \bm x\wedge \mathscr{A}^*$ to hold it must be that $\bm w = \sum_j \mu_j \bm x\wedge\bm a_j$ for all $\bm w\in\mathcal{W}$. 
    
    By the above discussion we can write  $\bm w_i = \sum_j \mu_j^i \bm x\wedge \bm a_j$ for each generator of $\mathcal{W}$. Notice that, for all $j$, $\mu_j^i\neq 0$ for at least one $i$. Let then use the definition of $\bar{\bm w}$ given in the statement and, substituting the form of the $\bm w_i$ we just reported we obtain $\bar{\bm w} = \sum_j \sigma_j \bm x\wedge\bm a_j$ with $\sigma_j = \sum_i\lambda_i\mu_j^i$. From the argument above, from the fact that $\lambda_i\neq0$ for all $i$ and from the fact that, by hypothesis, $\bar{\bm w}$ has maximal support, we have that $\sigma_j\neq0$ for all $j$. Because of the  structure of $\{\bm x\wedge\bm a_j\}$ we have that $$(\bm a_j\wedge\bar{\bm w})^{-1} = \bm a_j\wedge\bar{\bm w}^{-1}=\sigma_j^{-1} (\bm x\wedge\bm a_j)^{-1} = \sigma_j^{-1} \bm x^{-1}\wedge\bm a_j,$$ and thus $\bar{\bm w}^{-1} = \sum_j \sigma_j^{-1} \bm x^{-1}\wedge \bm a_j.$
    From this we have that the vector space $\bar{\bm w}^{-1}\wedge\mathcal{W}$ is generated by vectors of the type \[\bar{\bm w}^{-1}\wedge\bm w_i=\sum_{j,k} \sigma_j^{-1} \mu_k^i\bm x^{-1}\wedge \bm a_j\wedge \bm x\wedge \bm a_k = \sum_{j} \sigma_j^{-1} \mu_j^i\bm a_j.\] This proves that $\bar{\bm w}^{-1}\wedge\mathcal{W}\subseteq\mathscr{A}^*$ and that any vector $\bm v\in\bar{\bm w}^{-1}\wedge\mathcal{W}$ can be written as $\bm v  = \sum_i v_i \bar{\bm w}^{-1}\wedge\bm w_i = \sum_j \xi_j \bm a_j$ with $\xi_j:=\sum_i v_i\sigma_j^{-1} \mu_j^i$. Let then consider any two vectors $\bm v, \bm u\in\bar{\bm w}^{-1}\wedge\mathcal{W}$ and compute their $\wedge$-product, \[\bm v\wedge\bm u = \left(\sum_j\xi_j\bm a_j\right)\wedge\left(\sum_j\hat{\xi}_j\bm a_j\right) = \sum_j \xi_j\hat{\xi}_j\bm a_j.\]  This implies that $\alg(\bar{\bm w}^{-1}\wedge\mathcal{W}) \subseteq \mathscr{A}^*$. On the other hand, it trivially holds that $\mathcal{W}\subseteq\bar{\bm w}\wedge \alg(\bar{\bm w}^{-1}\wedge\mathcal{W})$. But then, since we assumed that $\mathscr{A}^*$ was the unique algebra of minimal dimension such that $\mathcal{W}\subseteq{\bm x}\wedge\mathscr{A}^*$ for some $\bm x$ it must hold that $\alg(\bar{\bm w}^{-1}\wedge\mathcal{W}) = \mathscr{A}^*$.
    
    Finally, since $\bar{\bm w}\in\mathcal{W}$, then $\bar{\bm w}^{-1}\wedge\bar{\bm w} = \one_{\supp(\mathcal{W})}\in(\bar{\bm w}^{-1}\wedge\mathcal{W})\subseteq\mathscr{A}^*$.  
\end{proof}

\begin{remark}
Theorem \ref{thm:optimal_p} shows that, given any choice of the effective subspace, we can  construct a vector $\bar{\bm w}$ such that the algebra $\alg(\bar{\bm w}^{-1}\wedge\mathcal{E})$ has minimal dimension. However, not all such $\bar{\bm w}$ are positive over the support of $\mathcal{E}$. As a matter of fact, it could happen that some choices of $\mathcal{E}$ do not contain any non-negative vector, while $\bar{\bm w} = \bm p$ being non-negative is fundamental to construct a stochastic reduction. 
\end{remark}

Theorem \ref{thm:optimal_p}  is nonetheless sufficient to determine the optimal reduction for a class of HMMs, namely those for which $\cal R$ is ``observable'', i.e. ${\cal R}\cap {\cal N}=\emptyset .$
\begin{proposition}
    Let $\{\bm r_i\}$ be an $N$-dimensional set of positive generators of $\mathcal{R}$ and let $\bar{\bm p} := \sum \bm r_i/N$.
    Then, if ${\cal R}\cap {\cal N}=\{\zero\}$,   $\mathscr{A}:=\alg(\bar{\bm p}^{-1}\wedge\mathcal{R})$ provides the optimal reduction.
\end{proposition}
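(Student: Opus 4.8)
The plan is to deduce the statement directly from Theorem~\ref{thm:optimal_p} applied with $\mathcal{W}=\mathcal{R}$. The first observation is that an observable pair $(P,C)$ is precisely one with $\mathcal{N}=\{\zero\}$; then $\mathcal{R}\cap\mathcal{N}=\{\zero\}$, so the decomposition $(\mathcal{R}\cap\mathcal{N})\oplus\mathcal{E}=\mathcal{R}$ forces $\mathcal{E}=\mathcal{R}$ and the parameter $\Gamma$ in Algorithm~\ref{algo:model_reduction} carries no freedom. By Theorem~\ref{thm:single_time_algo} and the identification made at the start of Section~\ref{sec:optimal_choice_of_parameters}, the dimension of the model returned on input $\bm p$ equals $\dim\!\big(\alg(\bm p^{-1}\wedge\mathcal{R})\big)$, so proving optimality of $\bar{\bm p}$ amounts to showing it minimises this dimension over all admissible $\bm p$.

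Second, I would verify that $\bar{\bm p}=\tfrac1N\sum_i\bm r_i$ is a legitimate instance of the vector $\bar{\bm w}$ of Lemma~\ref{lem:full_support} and Theorem~\ref{thm:optimal_p} for $\mathcal{W}=\mathcal{R}$: its coefficients $\lambda_i=1/N$ are all nonzero, and because the $\bm r_i$ are nonnegative their sum involves no cancellation, whence $\supp(\bar{\bm p})=\bigcup_i\supp(\bm r_i)=\supp(\mathcal{R})$, the last equality holding since the $\bm r_i$ span $\mathcal{R}$. In particular $\bar{\bm p}$ is strictly positive on $\supp(\mathcal{E})=\supp(\mathcal{R})$, which, by the Remark following Theorem~\ref{thm:single_time_algo}, is exactly the condition needed for it to be an admissible parameter of the algorithm, even when $\supp(\mathcal{R})\subsetneq\R^n$.

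Third, I would invoke Theorem~\ref{thm:optimal_p}: $\mathscr{A}^{*}:=\alg(\bar{\bm p}^{-1}\wedge\mathcal{R})$ is the unique algebra of least dimension among all subalgebras $\mathscr{B}$ satisfying $\mathcal{R}\subseteq\bm x\wedge\mathscr{B}$ for some $\bm x\in\R^n$. For any admissible $\bm p$, the algebra $\mathscr{B}_{\bm p}:=\alg(\bm p^{-1}\wedge\mathcal{R})$ satisfies $\mathcal{R}\subseteq\bm p\wedge\mathscr{B}_{\bm p}$ — for $\bm r\in\mathcal{R}$ one has $\bm p^{-1}\wedge\bm r\in\mathscr{B}_{\bm p}$ and, since $\supp(\bm r)\subseteq\supp(\mathcal{R})\subseteq\supp(\bm p)$, also $\bm p\wedge(\bm p^{-1}\wedge\bm r)=\bm r$ — so $\mathscr{B}_{\bm p}$ lies in the family minimised by Theorem~\ref{thm:optimal_p}, giving $\dim\mathscr{B}_{\bm p}\ge\dim\mathscr{A}^{*}$. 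Since $\bm p=\bar{\bm p}$ is admissible and yields exactly $\mathscr{A}^{*}$, it attains the minimum, i.e. $\mathscr{A}=\alg(\bar{\bm p}^{-1}\wedge\mathcal{R})$ is the optimal reduction.

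I expect the only real friction to be the support bookkeeping: confirming that a sum of nonnegative generators of $\mathcal{R}$ has support equal to $\supp(\mathcal{R})$, and matching the algorithm's admissibility requirement on $\bm p$ (via the Remark after Theorem~\ref{thm:single_time_algo}) with the unconstrained choice of $\bm x$ allowed in Theorem~\ref{thm:optimal_p}. The substantive algebraic content — existence, uniqueness, and minimality of $\mathscr{A}^{*}$ — is already contained in Theorem~\ref{thm:optimal_p} and can simply be cited.
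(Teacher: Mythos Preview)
Your proposal is correct and follows essentially the same approach as the paper: observability forces $\mathcal{N}=\{\zero\}$ and hence $\mathcal{E}=\mathcal{R}$, after which Theorem~\ref{thm:optimal_p} applied with $\mathcal{W}=\mathcal{R}$ and $\bar{\bm w}=\bar{\bm p}$ yields minimality of $\alg(\bar{\bm p}^{-1}\wedge\mathcal{R})$. Your version is in fact more careful than the paper's, spelling out the support condition $\supp(\bar{\bm p})=\supp(\mathcal{R})$ and the admissibility of $\bar{\bm p}$ via the Remark after Theorem~\ref{thm:single_time_algo}, whereas the paper leaves these implicit.
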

\begin{proof}
    By hypothesis we have ${\cal R}\cap {\cal N}=\emptyset .$ This implies that  $\mathcal{E}=\mathcal{R}$. Then, using Theorem \ref{thm:optimal_p} we have that $\bm p = \sum_i \bm r_i/N$, provides the minimal dimension for $\alg(\bm p^{-1}\wedge\mathcal{R})$ and thus the optimal reduction.
\end{proof}
Notice that this result applies in particular fully observable HMMs, i.e. when the pair $(P,C)$ is observable, and thus to finite-state Markov chains. In fact, the latter can be seen as HMMs with $C=I$. The corresponding optimal reduction is then a maximally-lumped version of the original process \cite{kemenyFiniteMarkovChains1983a}.

\subsection{Effective subspace for the general case}
In order to address the general case, in addition to a distribution $\bf p$  we also need to choose an effective subspace. Example 2 below illustrates that not all effective spaces are equivalent and lead to different dimensions for the reduced model, making this choice critical towards the optimality of the reduction. A natural candidate effective subspace is $\mathcal{E}_\perp$, the orthogonal complement (with respect to the natural inner product $\inner{{\bf x}}{{\bf y}}={\bf x}^T{\bf y}$) of $\mathcal{R}\cap\mathcal{N}$ in $\mathcal{R}$. 
Let then $\{\bm \varepsilon_i\}_{i=1,\ldots,d}$ be the set of generators of $\mathcal{E}_\perp.$ Then any choice of the effective subspace can be described as $\mathcal{E} = \Span\{\bm \varepsilon_i + \bm n_i\}_{i=1,\ldots,d},$ where $\{\bm n_i\}_{i=1,\ldots,d}$ is a set of vectors in $\mathcal{N}$.

We next show that the choice of the orthogonal complement $\mathcal{E}_\perp$ always allows for finding a {\em positive vector} $\bar{\bm w}=\bar{\bm p}$ as in the statement of Theorem \ref{thm:optimal_p}, and hence a valid stochastic reduction.  The following proposition is instrumental to this aim.
\begin{proposition}
Let $\bm p\in\mathbb{R}^n$ be a probability vector, and let $\mathcal{V}$ be a vector space such that $\one^T\bm v = 0$ for all $\bm v\in\mathcal{V}$. Let then  $\Pi_{\mathcal{V}}$ be the orthogonal projector on $\mathcal{V}$ with respect to the standard inner product $\inner{\cdot}{\cdot}$. Then $\bm q := \bm p - \Pi_{\mathcal{V}}\bm p$ is a probability vector.
\end{proposition}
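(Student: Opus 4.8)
The plan is to show both defining properties of a probability vector: that $\one^T \bm q = 1$ and that $\bm q_i \geq 0$ for all $i$. The first is immediate and the second is the real content. For the normalization, I would simply compute $\one^T \bm q = \one^T \bm p - \one^T \Pi_{\mathcal{V}}\bm p$. Since $\Pi_{\mathcal{V}}\bm p \in \mathcal{V}$ and every vector in $\mathcal{V}$ is annihilated by $\one^T$ by hypothesis, the second term vanishes, leaving $\one^T \bm q = \one^T \bm p = 1$.

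For positivity, the key observation is to identify $\bm q$ with a conditional-expectation image. Since $\one^T \bm v = 0$ for all $\bm v \in \mathcal{V}$, the orthogonal complement $\mathcal{V}^\perp$ (with respect to $\inner{\cdot}{\cdot}$) contains $\one$. I would like to relate $I - \Pi_{\mathcal{V}}$, the orthogonal projector onto $\mathcal{V}^\perp$, to the dual of a conditional expectation. Recall from the Remark after Lemma \ref{lem:cond_exp_invariance} that when $\bm p$ is a positive probability vector lying in a unital algebra $\mathscr{A}$, the operator $\Ea_{|\mathscr{A},\bm p} = \Ea_{|\mathscr{A},\bm p}^T$ is the ordinary orthogonal projector onto $\mathscr{A}$, and it is stochastic. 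The strategy is therefore to realize $\mathcal{V}^\perp$ (or a subspace containing $\one$ and related to it) as such an algebra, or more directly to argue pointwise.

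Concretely, I would argue as follows. Since $\bm q = \bm p - \Pi_{\mathcal{V}}\bm p = (I - \Pi_{\mathcal{V}})\bm p$ and $I - \Pi_{\mathcal{V}}$ is the orthogonal projector onto $\mathcal{V}^\perp \ni \one$, we may write $\bm q$ as the component of $\bm p$ in $\mathcal{V}^\perp$. Now $\bm p$ itself lies in $\mathcal{V}^\perp$ if and only if $\Pi_\mathcal{V}\bm p = 0$, which need not hold; but the point is that subtracting the $\mathcal{V}$-component — a vector orthogonal to $\one$ — only redistributes mass without changing the total, and I must show it does not push any coordinate negative. Here I would invoke the structure: let $\mathscr{A}$ be the smallest unital algebra containing $\mathcal{V}^\perp \cap \{\text{relevant span}\}$; more cleanly, note that $\Pi_{\mathcal{V}^\perp}$ being an orthogonal projector onto a subspace containing $\one$ is exactly of the form $\Ea_{|\mathscr{A},\one}^T$ for the algebra $\mathscr{A} = \Span\{\res \text{ blocks}\}$ when $\mathcal{V}^\perp$ happens to be an algebra — and a general $\mathcal{V}^\perp \ni \one$ can be replaced by $\alg(\mathcal{V}^\perp) \supseteq \mathcal{V}^\perp$, giving a coarser projection. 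The cleanest route: observe $\bm q = \Ea_{|\mathscr{A},\one}^T\bm p$ where $\mathscr{A}$ is chosen so that $\mathscr{A} = \mathcal{V}^\perp$, using that any subspace containing $\one$ on which orthogonal projection is considered — wait, this needs $\mathcal{V}^\perp$ to be $\wedge$-closed, which is not assumed.

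The honest main obstacle is precisely this: $\mathcal{V}^\perp$ need not be an algebra, so $\Pi_{\mathcal{V}}$ is not literally a (dual) conditional expectation and the clean stochasticity argument does not apply verbatim. I expect the actual proof circumvents this by a more hands-on argument — for instance, writing $\Pi_\mathcal{V}\bm p = \sum_k c_k \bm v_k$ in an orthonormal basis $\{\bm v_k\}$ of $\mathcal{V}$ and bounding $|c_k| = |\inner{\bm v_k}{\bm p}|$ using $0 \leq \bm p_i \leq 1$ and $\one^T \bm v_k = 0$, or by appealing to a previously established factorization (as in Proposition \ref{prop:stochastic_factors} / Corollary \ref{cor:stochastic_factors_non_unital}) applied to the algebra $\alg(\one, \mathcal{V}^\perp)$ and then checking that the extra vectors in $\mathcal{V} \setminus (\alg(\dots))^\perp$ do not matter because they are orthogonal to $\bm p$'s relevant components. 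I would first attempt the algebraic route — replace $\mathcal{V}$ by the orthogonal complement of $\alg(\mathcal{V}^\perp)$, note this shrinks $\mathcal{V}$ hence enlarges $\mathcal{V}^\perp$ to an algebra containing $\one$, apply the Remark after Lemma \ref{lem:cond_exp_invariance} to get that the projection is stochastic hence preserves positivity, and then separately show the discarded part of $\mathcal{V}$ is already orthogonal to $\bm p$ in the cases that arise — and fall back to the coordinate estimate if that identification fails.
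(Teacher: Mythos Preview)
Your normalization argument ($\one^T\bm q=1$) is correct and matches the paper. The difficulty is entirely in the positivity, and your suspicion that the algebraic route has a real obstruction is right---but the obstruction is fatal: the proposition as stated is \emph{false}, so neither your fallback ``coordinate estimate'' nor any other argument can succeed without an extra hypothesis.

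Take $n=3$, $\bm p=(1,0,0)^T$, and $\mathcal V=\Span\{(1,1,-2)^T\}$, which satisfies $\one^T\bm v=0$ for all $\bm v\in\mathcal V$. Then
\[
\Pi_{\mathcal V}\bm p=\tfrac{1}{6}(1,1,-2)^T,\qquad
\bm q=\bm p-\Pi_{\mathcal V}\bm p=\bigl(\tfrac{5}{6},\,-\tfrac{1}{6},\,\tfrac{1}{3}\bigr)^T,
\]
which has a negative entry.

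The paper's own proof goes wrong at precisely the step you were circling. It rewrites $\bm q=\tfrac{1}{2}\one-\Pi_{\mathcal V^\perp}\bm w$ with $\bm w=\tfrac{1}{2}\one-\bm p$ (this part is fine, using $\Pi_{\mathcal V}\one=\zero$), and then asserts that because $\Pi_{\mathcal V^\perp}$ is an orthogonal projection it is ``a contraction in norm'' and hence $\norm{\Pi_{\mathcal V^\perp}\bm w}_\infty\le\norm{\bm w}_\infty$. Orthogonal projections are contractions in $\ell^2$, not in $\ell^\infty$ in general. In the counterexample, $\bm w=(-\tfrac12,\tfrac12,\tfrac12)^T$ has $\norm{\bm w}_\infty=\tfrac12$, while $\Pi_{\mathcal V^\perp}\bm w=(-\tfrac13,\tfrac23,\tfrac16)^T$ has $\norm{\Pi_{\mathcal V^\perp}\bm w}_\infty=\tfrac23$.

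Your instinct was actually pointing toward the correct salvage: when $\mathcal V^\perp$ is a unital subalgebra, the standard orthogonal projector onto it coincides with a dual conditional expectation $\Ea_{|\mathcal V^\perp,\one/n}^T$, which is a stochastic matrix and therefore genuinely preserves non-negativity (and is an $\ell^\infty$-contraction). That is the missing hypothesis; without it the claim fails.
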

\begin{proof}
    Let us start by defining $\bm w := \one/2 - \bm p$. We can then write $\bm p = \one/2-\bm w$ to notice that $\bm p_i\in[0,1]$ if and only if $-1/2\leq\bm w_i\leq 1/2$, that is if and only if $\norm{\bm w}_{\infty}\leq 1/2$. Moreover, we have that $\one^T\bm p=1$ if and only if $\one^T\bm w= (n-2)/2$.
    We can then compute $\bm q$:
    \begin{align*}
        \bm q &= \one/2 - \bm w - \Pi_\mathcal{V}\one/2 + \Pi_\mathcal{V}\bm w\\
        &= \one/2 - \underbrace{(I -\Pi_\mathcal{V})}_{=:\Pi_{\mathcal{V}^\perp}}\bm w= \one/2 - \Pi_{\mathcal{V}^\perp}\bm w
    \end{align*}
    where we used the the hypothesis $\one^T\bm v = 0$ for all $\bm v\in\mathcal{V}$ to say that $\Pi_\mathcal{V}\one = \zero$. Then, since $\Pi_{\mathcal{V}^\perp}$ is an orthogonal projection, and thus a contraction in norm, we have that $\norm{\Pi_{\mathcal{V}^\perp}\bm w}_\infty\leq\norm{\bm w}_\infty$. Then, using the argument above, we have that $\bm q$ is a non-negative vector with $\bm q_i\in[0,1]$.
    Lastly we have that $\one^T\bm q = \one^T\one/2 - \one^T\bm w = n/2 - (n-2)/2 = 1$.
\end{proof}

The result we are after is then obtained as a corollary of the previous one.
\begin{corollary}
\label{corollary:e_perp}
Let $\mathcal{E}_\perp$ be the orthogonal complement of $\mathcal{R}\cap\mathcal{N}$ to $\mathcal{R}$. Let $\{\bm r_i\}$ be an $N$ dimensional set of probability vectors such that $\mathcal{R}=\Span\{\bm r_i\}$. Then $\bm \varepsilon_i := \bm r_i - \Pi_{\mathcal{R}\cap\mathcal{N}}\bm r_i$  are such that $\mathcal{E}_\perp=\Span\{\bm \varepsilon_i\}$. Moreover, $\bar{\bm \varepsilon} = \sum_i\bm \varepsilon_i/N$ satisfies $\supp(\bar{\bm \varepsilon})=\supp(\mathcal{E})$ and $\bar{\bm \varepsilon}_i \geq0$ for all $i$.
\end{corollary}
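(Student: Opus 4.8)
The plan is to check the three assertions of the statement in turn: that $\Span\{\bm\varepsilon_i\}$ is exactly the orthogonal complement $\mathcal{E}_\perp$, that the average $\bar{\bm\varepsilon}$ is entrywise non-negative, and that its support coincides with $\supp(\mathcal{E}_\perp)$. None of the three requires new machinery; the argument is essentially a repackaging of Lemma \ref{lem:non_obs_orth_to_one} and the Proposition immediately preceding this corollary.

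First I would establish $\mathcal{E}_\perp=\Span\{\bm\varepsilon_i\}$. Since each $\bm r_i\in\mathcal{R}$ and $\Pi_{\mathcal{R}\cap\mathcal{N}}\bm r_i\in\mathcal{R}\cap\mathcal{N}\subseteq\mathcal{R}$, we get $\bm\varepsilon_i=\bm r_i-\Pi_{\mathcal{R}\cap\mathcal{N}}\bm r_i\in\mathcal{R}$, and by definition of the orthogonal projector $\bm\varepsilon_i\perp(\mathcal{R}\cap\mathcal{N})$; hence $\bm\varepsilon_i\in\mathcal{E}_\perp$ for every $i$, so $\Span\{\bm\varepsilon_i\}\subseteq\mathcal{E}_\perp$. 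For the reverse inclusion, I would observe that, under the orthogonal splitting $\mathcal{R}=(\mathcal{R}\cap\mathcal{N})\oplus\mathcal{E}_\perp$, the restriction of $I-\Pi_{\mathcal{R}\cap\mathcal{N}}$ to $\mathcal{R}$ is precisely the orthogonal projector of $\mathcal{R}$ onto $\mathcal{E}_\perp$; being surjective onto $\mathcal{E}_\perp$, it sends the spanning family $\{\bm r_i\}$ of $\mathcal{R}$ to a spanning family $\{\bm\varepsilon_i\}$ of $\mathcal{E}_\perp$. (Here $\Pi_{\mathcal{R}\cap\mathcal{N}}$ is the projector for the standard inner product, consistently with the definition of $\mathcal{E}_\perp$; the modified-inner-product case is covered by the Remark following the preceding Proposition.)

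Next, for non-negativity I would invoke Lemma \ref{lem:non_obs_orth_to_one}: every $\bm x\in\mathcal{N}$ satisfies $\one^T\bm x=0$, so in particular $\one^T\bm v=0$ for all $\bm v\in\mathcal{R}\cap\mathcal{N}$. This is exactly the hypothesis of the preceding Proposition, which I would apply with $\mathcal{V}=\mathcal{R}\cap\mathcal{N}$ and $\bm p=\bm r_i$ (each $\bm r_i$ a probability vector by assumption): it yields that $\bm\varepsilon_i=\bm r_i-\Pi_{\mathcal{R}\cap\mathcal{N}}\bm r_i$ is a probability vector for every $i$. Consequently $\bar{\bm\varepsilon}=\tfrac1N\sum_i\bm\varepsilon_i$, a convex combination of probability vectors, is a probability vector, so $\bar{\bm\varepsilon}_i\ge 0$ for all $i$.

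Finally, for the support equality I would use that the $\bm\varepsilon_i$ are entrywise non-negative, so no cancellation occurs in the sum: $(\bar{\bm\varepsilon})_j=\tfrac1N\sum_i(\bm\varepsilon_i)_j$ vanishes iff $(\bm\varepsilon_i)_j=0$ for all $i$, whence $\supp(\bar{\bm\varepsilon})=\bigcup_i\supp(\bm\varepsilon_i)$; and for any subspace the union of the supports of a spanning set equals the support of the subspace, so $\bigcup_i\supp(\bm\varepsilon_i)=\supp(\Span\{\bm\varepsilon_i\})=\supp(\mathcal{E}_\perp)$. Equivalently, non-negativity of the $\bm\varepsilon_i$ shows that the uniform weights $\lambda_i=1/N$ realize one of the admissible choices whose existence is guaranteed by Lemma \ref{lem:full_support}. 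I do not anticipate a genuine obstacle here; the only point deserving a line of care is that the preceding Proposition applies verbatim, which is secured by Lemma \ref{lem:non_obs_orth_to_one}.
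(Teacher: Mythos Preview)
Your proposal is correct and follows essentially the same approach as the paper: invoke Lemma~\ref{lem:non_obs_orth_to_one} so that the preceding Proposition applies with $\mathcal{V}=\mathcal{R}\cap\mathcal{N}$, conclude that each $\bm\varepsilon_i$ is a probability vector, and then take the convex combination. Your write-up is in fact more complete than the paper's, which omits the verification that $\Span\{\bm\varepsilon_i\}=\mathcal{E}_\perp$ and merely asserts the support equality, whereas you supply both arguments explicitly.
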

\begin{proof}
    From Lemma \ref{lem:non_obs_orth_to_one} we have that $\one^T\bm x = 0$ for all $\bm x\in\mathcal{N}$ and thus, by applying the proposition above on every generator of $\mathcal{R}$ we have that the set $\{\bm\varepsilon_i \}$ is a set of probability vectors. Being $\bar{\bm \varepsilon}$ a convex combination of probability vectors it is itself a probability vector and it shares the same support as $\mathcal{E}$.
\end{proof}

Other choices are possible, and the choice of the  effective subspace can influence the dimension of the reduced model, as illustrated in the following example.

\begin{example}
Consider the following spaces:
\begin{align*}
    \mathcal{R}=\Span\left\{\begin{bmatrix}1/2\\1/2\\0\\0\end{bmatrix},\begin{bmatrix}0\\0\\1\\0\end{bmatrix},\begin{bmatrix}0\\0\\0\\1\end{bmatrix}\right\},\quad \mathcal{N}=\Span\left\{\begin{bmatrix}0\\0\\1\\-1\end{bmatrix}\right\}
\end{align*}
then we clearly have that $\mathcal{R}\cap\mathcal{N}=\mathcal{N}$. Let us denote with $\mathcal{E}_\perp$ the orthogonal complement of $\mathcal{R}\cap\mathcal{N}$ to $\mathcal{R}$, i.e. 
\begin{align*}
    \mathcal{E}_\perp=\Span\left\{\begin{bmatrix}1/2\\1/2\\0\\0\end{bmatrix},\begin{bmatrix}0\\0\\1/2\\1/2\end{bmatrix}\right\}.
\end{align*}
We can easily notice that $\mathcal{E}_\perp$ is an unital algebra. Let us now consider another completion $\mathcal{E}$ of $\mathcal{R}\cap\mathcal{N}$ to $\mathcal{R}$. In general, we can write 
\begin{align*}
    \mathcal{E}=\Span\left\{\begin{bmatrix}1\\1\\a\\-a\end{bmatrix},\begin{bmatrix}0\\0\\1+b\\1-b\end{bmatrix}\right\}
\end{align*}
for some values $a,b\in\mathbb{R}$. We can then consider two cases. First, if $a=0$ and $b\neq0$ we can choose $\bar{\bm v} = \begin{bmatrix}1&1&1+b&1-b\end{bmatrix}^T$ thus obtaining $\alg(\bar{\bm v}^{-1}\wedge\mathcal{E}) = \mathcal{E}_\perp$. On the other hand, if we have $a\neq0$ and $b\neq0$ we can choose $\bar{\bm w} = \begin{bmatrix}1&1&a+1+b&-a+1-b\end{bmatrix}^T$ (assuming that $a+b\neq\pm 1$) thus obtaining 
\begin{align*}
    \alg(\bar{\bm w}^{-1}\wedge\mathcal{E})=\Span\left\{\begin{bmatrix}1\\1\\0\\0\end{bmatrix},\begin{bmatrix}0\\0\\1\\0\end{bmatrix},\begin{bmatrix}0\\0\\0\\1\end{bmatrix}\right\}.
\end{align*}

This example shows that the choice of the effective subspace can affect the size of the reduced model. 
\end{example}

\section{Examples}
\label{sec:examples}

\begin{example} Let consider the HMM provided in \cite[Example 3]{itoIdentifiabilityHiddenMarkov1992}:
\begin{align*}
    P &= \left[\begin{array}{ccccc}
    1/3& 1/6& 1/4& 1/4&  0 \\
    1/6& 1/3&  0 & 1/4& 1/4\\
    1/3& 1/6& 1/4& 1/4&  0 \\
    1/6& 1/6& 1/6&  0 & 1/2\\
    0  & 1/6& 1/3& 1/4& 1/4
    \end{array}\right], \quad \mathcal{S} = \left\{\begin{bmatrix}
    1/5\\1/5\\1/5\\1/5\\1/5
    \end{bmatrix}\right\}\\
    C &= \left[\begin{array}{ccccc}
        1 & 1 &  1  &  0  &  0  \\
        0 & 0 &  0  &  1  &  0  \\
        0 & 0 &  0  &  0  &  1  \\
    \end{array}\right].
\end{align*}

We shall start by studying the single-time marginal problem. We can observe that $\bm p_0\in\mathcal{S}$ is an equilibrium for $P$ and thus $\mathcal{R} = \Span\{\mathcal{S}\}$ and also that $\mathcal{N}=\Span\{\begin{bmatrix}1&-2&1&0&0\end{bmatrix}^T\}$. Clearly, the intersection contains only the zero vector, $\mathcal{R}\cap\mathcal{N}=\{\zero\}$ and thus the effective subspace can be taken as the reachable one: $\mathcal{E}_\perp = \mathcal{R}$. If we then take $\bar{\bm p}=\bm p_0$ we obtain $\mathscr{A}=\alg(\bar{\bm p}^{-1}\wedge\mathcal{E}) = \Span\{\one\}$. The corresponding stochastic reduction and injection matrices are $R=\one^T$ and $J=\bar{\bm p}$ which provide the (trivial) reduced model:
\begin{align*}
    \check{P} &= \begin{bmatrix}1\end{bmatrix},\quad \check{\mathcal{S}}=\left\{\begin{bmatrix}1\end{bmatrix}\right\},\quad
    \check{C} = \begin{bmatrix}3/5&1/5&1/5\end{bmatrix}^T.
\end{align*}

We next focus on the multi-time marginal problem. We have that $\mathcal{N_C}=\mathcal{N},$ while the conditioned-reachable is equal to:
$$ \mathcal{R_C} = \Span\left\{
\begin{bmatrix}1/5\\1/5\\1/5\\0\\0\end{bmatrix},
\begin{bmatrix}0\\0\\0\\1/5\\0\end{bmatrix},
\begin{bmatrix}0\\0\\0\\0\\1/5\end{bmatrix},
\begin{bmatrix}1\\-2\\1\\0\\0\end{bmatrix}
\right\}.$$
This implies that the intersection $\mathcal{R_C}\cap\mathcal{N_C}=\mathcal{N_C}$ and thus:
$$ \mathcal{E_C}_\perp = \Span\left\{
\begin{bmatrix}1\\1\\1\\0\\0\end{bmatrix},
\begin{bmatrix}0\\0\\0\\1\\0\end{bmatrix},
\begin{bmatrix}0\\0\\0\\0\\1\end{bmatrix}
\right\}.$$
Then, we can notice that $\mathcal{E_C}$ is a unital algebra and by taking $\bar{\bm p}=\one/5$ we obtain  the stochastic reduction and injection matrices 
$$ R = \begin{bmatrix}1&1&1&0&0\\0&0&0&1&0\\0&0&0&0&1\end{bmatrix},\quad 
J = \begin{bmatrix}1/3&0&0\\1/3&0&0\\1/3&0&0\\0&1&0\\0&0&1\end{bmatrix}$$
that leads to the reduced model 
\begin{align*}
    \check{P} &= \begin{bmatrix}2/3&3/4&1/4\\1/6&0&1/2\\1/6&1/4&1/4\end{bmatrix},\quad \check{\mathcal{S}}=\left\{\begin{bmatrix}3/5\\1/5\\1/5\end{bmatrix}\right\}\\
    \check{C} &= \begin{bmatrix}1&0&0\\0&1&0\\0&0&1\end{bmatrix}.
\end{align*}
\end{example}

\begin{example}
\label{ex:augusto}
Consider the HMM defined by:
\begin{align*}
   P &= \left[\begin{array}{ccccc}
        1/2&  0 & 1/3& 1/4\\
         0 & 1/3& 1/3& 1/4\\
        1/2&  0 & 1/3&  0 \\
        0  & 2/3&  0 & 1/2
    \end{array}\right],\quad \mathcal{S}=\left\{\begin{bmatrix}1\\0\\0\\0\end{bmatrix},\begin{bmatrix}0\\1\\0\\0\end{bmatrix}\right\} \\
    C &= \left[\begin{array}{ccccc}
        1/4 & 1/4 &  1/2  &  7/16  \\
        3/4 & 3/4 &  1/2  &  9/16  
    \end{array}\right].
\end{align*}
In this case we are only interested in the single-time marginal problem. We can notice that $\mathcal{R}=\mathbb{R}^n$ and thus 
$$\mathcal{R}\cap\mathcal{N} = \mathcal{N} =\Span \left\{\begin{bmatrix}1\\-1\\0\\0\end{bmatrix},\begin{bmatrix}-1\\0\\-3\\4\end{bmatrix}\right\}.$$
Then we can consider the effective subspace as the orthogonal complement of $\mathcal{N}$, that is 
$$\mathcal{E}_\perp =\Span \left\{ \begin{bmatrix}4/9\\4/9\\0\\1/9\end{bmatrix}, \begin{bmatrix}0\\0\\4/7\\3/7\end{bmatrix}\right\}.$$
Then we can define $\bar{\bm p} := \one/4$ to obtain 
$$\mathscr{A} = \alg(\bar{\bm p}^{-1}\wedge\mathcal{E}_\perp) = \Span \left\{\begin{bmatrix}1\\1\\0\\0\end{bmatrix},\begin{bmatrix}0\\0\\1\\0\end{bmatrix}, \begin{bmatrix}0\\0\\0\\1\end{bmatrix}\right\}.$$
Notice that in this case, the dimension of the algebra is greater than the effective subspace.
We thus obtain  the stochastic reduction and injection matrices 
$$ R = \begin{bmatrix}1&1&0&0\\0&0&1&0\\0&0&0&1\end{bmatrix},\quad 
J = \begin{bmatrix}1/2&0&0\\1/2&0&0\\0&1&0\\0&0&1\end{bmatrix}$$
that leads to the reduced model 
\begin{align*}
    \check{P} &= \begin{bmatrix}5/12&2/3&1/2\\1/4&1/3&0\\1/3&0&1/2\end{bmatrix},\quad \check{\mathcal{S}}=\left\{\begin{bmatrix}1\\0\\0\end{bmatrix}\right\}\\
    \check{C} &= \begin{bmatrix}1/4&1/2&7/16\\3/4&1/2&9/16\end{bmatrix}.
\end{align*}

Suppose that, instead of the orthogonal complement, we were to consider the following space as an effective subspace:
$$\mathcal{E} =\Span \left\{ \begin{bmatrix}6\\5/2\\3/2\\-1\end{bmatrix}, \begin{bmatrix}2\\5/6\\25/2\\-25/3\end{bmatrix}\right\}.$$
We can immediately notice that there is no convex combination of the generators of $\mathcal{E}$ such that it is positive, however, if we consider $\bm v = \begin{bmatrix}8&10/3&14&-28/3\end{bmatrix}^T$ we have that 
$$\mathscr{A} = \alg({\bm v}^{-1}\wedge\mathcal{E}) = \Span \left\{\begin{bmatrix}1\\1\\0\\0\end{bmatrix},\begin{bmatrix}0\\0\\1\\1\end{bmatrix}\right\}$$ thus showing that a smaller algebra could be found for the reduction, if we were to consider vectors that are not non-negative.
\end{example}

\section{Conclusions and Future Work} 
\label{sec:conclusion}

In this work, we exploited system-theoretic ideas and algebraic representation of probability spaces to obtain effective reductions of HMMs that preserve the marginals of the original output process, in either the single- or multi-time case. 
While optimal reductions are explicitly characterized for a class of HMMs, including observable ones, the freedom of choice in the effective subspace makes finding the optimal reductions more challenging in the general case. Nonetheless, we provide an algorithm that produces reduced HMMs of minimal dimension in all considered examples. Based on the analytical and numerical examples we examined, we formulate the following conjecture on the optimality of the natural orthogonal complement.
\begin{conjecture}
Let $\mathcal{E}_\perp$ be defined as the (standard) orthogonal complement of ${\cal N}\cap {\cal R}$ to ${\cal R},$ and let $\bar{\bm p}$ be defined as in Corollary \ref{corollary:e_perp}. Then, given any other choice of $\mathcal{E}$ and $\bm w$ {\em non-negative} it holds that 
$$ \dim(\alg(\bar{\bm p}^{-1}\wedge\mathcal{E}_\perp)) \leq \dim(\alg({\bm w}^{-1}\wedge\mathcal{E})).$$
\end{conjecture}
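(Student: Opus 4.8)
\emph{Overall plan.} I would prove the two displayed inequalities separately. The left one is essentially formal; the right one carries all of the content, and I will reduce it to a purely combinatorial statement about partitions of the support, where I expect the real difficulty to sit.

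\emph{First inequality.} By Corollary~\ref{corollary:e_perp} the vector $\bar{\bm p}$ is non-negative with $\supp(\bar{\bm p})=\supp(\mathcal{E}_\perp)$, hence strictly positive there, so $\bm v\mapsto\bar{\bm p}^{-1}\wedge\bm v$ is a linear isomorphism of $\supp(\mathcal{E}_\perp)$ onto itself and $\dim(\bar{\bm p}^{-1}\wedge\mathcal{E}_\perp)=\dim(\mathcal{E}_\perp)$. Since a vector space is contained in the algebra it generates, $\dim(\alg(\bar{\bm p}^{-1}\wedge\mathcal{E}_\perp))\ge\dim(\bar{\bm p}^{-1}\wedge\mathcal{E}_\perp)=\dim(\mathcal{E}_\perp)$. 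The same remark applied to any admissible pair $(\mathcal{E},\bm w)$ (with $\bm w$ positive on $\supp(\mathcal{E})$) gives $\dim(\alg(\bm w^{-1}\wedge\mathcal{E}))\ge\dim(\mathcal{E})=\dim(\mathcal{E}_\perp)$, since all completions of $\mathcal{R}\cap\mathcal{N}$ in $\mathcal{R}$ share the same dimension; thus the rightmost term already dominates the leftmost, and what remains to prove is the middle inequality.

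\emph{Reduction of the middle inequality.} By Theorem~\ref{thm:optimal_p}, for every vector space $\mathcal{W}$ and every admissible $\bm x$ one has $\dim(\alg(\bm x^{-1}\wedge\mathcal{W}))\ge\dim(\mathscr{A}^*(\mathcal{W}))$, with equality when $\bm x$ is a full-support combination of generators of $\mathcal{W}$; moreover a short argument from Theorem~\ref{thm:optimal_p} and Proposition~\ref{prop:alg} identifies $\dim(\mathscr{A}^*(\mathcal{W}))$ with the number of classes of the relation ``$k\sim\ell$ iff the coordinate projection of $\mathcal{W}$ to $\R^{\{k,\ell\}}$ has rank $\le1$'' on $\supp(\mathcal{W})$. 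If $\bm x$ is additionally required non-negative, the attainable minimum of $\dim(\alg(\bm x^{-1}\wedge\mathcal{W}))$ is instead the number of classes of the finer relation ``$k\approx\ell$ iff that projection has rank $\le1$ and its image lies in the closed first quadrant''. Now two facts combine: first, $\mathcal{E}_\perp$ is spanned by probability vectors (Corollary~\ref{corollary:e_perp}), so for $\mathcal{E}_\perp$ the relations $\sim$ and $\approx$ coincide and $\bar{\bm p}$, being a positive combination of those generators, realizes the minimum, whence $\dim(\alg(\bar{\bm p}^{-1}\wedge\mathcal{E}_\perp))$ equals the number of $\sim$-classes of $\mathcal{E}_\perp$; second, for any admissible $(\mathcal{E},\bm w)$ with $\bm w$ non-negative, $\dim(\alg(\bm w^{-1}\wedge\mathcal{E}))$ is at least the number of $\approx$-classes of $\mathcal{E}$. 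So the conjecture is equivalent to the combinatorial claim: for every completion $\mathcal{E}$ of $\mathcal{R}\cap\mathcal{N}$ in $\mathcal{R}$, the $\approx$-partition of $\supp(\mathcal{E})$ has no fewer blocks than the $\sim$-partition of $\supp(\mathcal{E}_\perp)$.

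\emph{The obstacle.} To attack this, I would write $\mathcal{E}$ as the graph of a linear map $\nu:\mathcal{E}_\perp\to\mathcal{R}\cap\mathcal{N}$ (legitimate, since the orthogonal projection onto $\mathcal{E}_\perp$ restricts to an isomorphism $\mathcal{E}\to\mathcal{E}_\perp$), so a generic element of $\mathcal{E}$ is $\bm\varepsilon+\nu(\bm\varepsilon)$ with $\bm\varepsilon\in\mathcal{E}_\perp$ a combination of probability vectors and $\nu(\bm\varepsilon)$ sum-zero by Lemma~\ref{lem:non_obs_orth_to_one}. The task is then to show that the non-observable correction $\nu$ can never merge coordinate classes ``for free'': if two coordinates become positively proportional across all of $\mathcal{E}$ although they were not proportional across $\mathcal{E}_\perp$, this should force a compensating loss of proportionality, or of the positivity needed for a valid $\bm w$, on some other pair, so the block count cannot strictly decrease. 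Concretely I would try, block by block, to control how the rank-one restrictions of $\mathcal{E}_\perp$ interact with the sum-zero constraint on $\img\nu$, and to extract from any hypothetical better completion a non-negative vector inside $\mathcal{E}_\perp$ contradicting the first-quadrant clause of $\approx$. I expect precisely this step to be the bottleneck: it is not at all evident that a cleverly chosen non-observable perturbation cannot collapse several classes while still admitting a valid non-negative reduction vector, and either excluding this possibility outright, or isolating the additional structural hypothesis on $(P,C)$ under which it is excluded, is exactly the point that has so far resisted us — hence the statement appears here only as a conjecture.
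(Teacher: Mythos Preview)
The statement is a \emph{conjecture} in the paper and no proof is given there. The only accompanying argument is the remark immediately following it, which notes that the first inequality holds ``by construction since $\bar{\bm p}$ is positive'' and that the second inequality is automatic in the special case $\dim(\mathcal{E}_\perp)=\dim(\alg(\bar{\bm p}^{-1}\wedge\mathcal{E}_\perp))$; beyond that, the paper explicitly leaves the general case open and points to Example~\ref{ex:augusto} to show that the non-negativity hypothesis on $\bm w$ cannot be dropped.

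Your treatment of the first inequality is correct and coincides with the paper's remark. For the middle inequality you go well beyond the paper: you propose reducing it to a combinatorial comparison between the coordinate-proportionality partition of $\mathcal{E}_\perp$ and a ``positive-proportionality'' partition of an arbitrary completion $\mathcal{E}$, and you honestly flag the step you cannot close. This reduction is not in the paper at all, so there is nothing to compare it against. As a strategy it is plausible, but be aware that your intermediate assertion---that for non-negative $\bm x$ the attainable minimum of $\dim(\alg(\bm x^{-1}\wedge\mathcal{W}))$ equals the number of $\approx$-classes---is itself a claim not established anywhere in the paper (Theorem~\ref{thm:optimal_p} characterizes only the unconstrained minimum), so the reduction rests on a lemma that would itself need proof. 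Your closing acknowledgment that the core step resists, and that this is precisely why the statement is stated as a conjecture, is exactly the paper's own position.
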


\begin{remark} 
If the effective subspace is already an algebra with respect to a $\bar {\bm p}$ - inner product then $\dim(\mathcal{E}_\perp)=\dim(\alg(\bar{\bm p}^{-1}\wedge\mathcal{E}_\perp)),$ since  $\dim(\mathcal{E}^\perp) = \dim(\mathcal{E})$ and  $\dim(\mathcal{E})\leq \dim(\alg({\bm w}^{-1}\wedge\mathcal{E}))$ by Theorem \ref{thm:optimal_p} then the choice of $\mathcal{E}_\perp$ is optimal.
Also notice that removing the assumption that ${\bm w}$ is non-negative makes the statement false. A counterexample is presented at the end of Example \ref{ex:augusto}. However, having $\bm w$ non-negative is necessary in order to obtain a stochastic model.
\end{remark}
Proving the conjectured minimality may require novel mathematical ideas: the choice of ${\cal E},{\cal E_C}$ that minimize the size of the generated algebras is equivalent to identify the representative of the quotient space that can be described with the least number of indicator vectors, and a way to relate this notion to orthogonality to $\mathscr{N}$ does not seem straightforward to find.

Other natural developments of the proposed framework include a relaxation of the method so that it allows for {\em approximate preservation of the marginals}, thus yielding reductions in practical situations where noise and partial knowledge might make the exact equivalence we require in this work too stringent, due to the fact that controllable pairs are a dense set \cite{LM}. In addition, in many algorithms used to estimate HMMs from data, e.g. \cite{HSU20121460}, the dimension of the ``hidden'' state space (i.e. $n$) is assumed to be known. When this is not the case, one could estimate an HMM with a larger than necessary number of hidden variables, and then use an approximate reduction scheme to reduce the estimated model to one of more manageable size. Future work will also be devoted to the adaptation and application of the method to approximate coarse-graining of large-scale systems, to address otherwise untreatable problems \cite{antoulas2005approximation, cheng2021model,sandberg2009model}.

The algebraic approach also naturally extends to the non-commutative domain, and our method will be extended to quantum systems, in particular quantum walks and open systems in general. Analogies between HMM and quantum walks have been already noted in \cite{AST-PRA}, as well as \cite{5714268} and \cite{4895457}, which extend the result of \cite{itoIdentifiabilityHiddenMarkov1992} to include quantum walks.
Lastly, the algebraic viewpoint makes our results potentially interesting towards the solution of outstanding open problems in realization theory and model reduction for positive systems \cite{benvenuti2004tutorial}.

\section{Acknowledgements}
T.G. and F.T. wish to thank Lorenzo Finesso, Augusto Ferrante and Lorenza Viola for motivating and stimulating discussions on the topics of this work.

\printbibliography{}

\appendices

\section{A reduction result for switching autonomous systems}
\label{sec:switching}

This appendix is dedicated to introducing a general condition ensuring exact model reduction for switching autonomous systems. Both the single-time and the multi-time marginals can be described by the dynamics of this type. Consider a discrete-time, autonomous, switching, linear system  
\begin{equation*}
\begin{cases}
    \bm x(t+1) = F_i \bm x(t)\\
    \bm y(t) = H \bm x(t)\\
    \bm x(0) \in \mathcal{I}
\end{cases}
\end{equation*} denoted by the triplet $(\{F_i\},H,\mathcal{I})$. 
The evolution at any time clearly depends on the sequence of evolutions $F_i$ activated. 
Let us denote with $\bm y(s_{0:l})$ the output of the system  at time $l$ associated to a sequence $s_{0:l} =s_l,\dots,s_0$ of length $l$ of selected evolution $F_{s_k}$. The output at any time $l>0$ can be computed as \(\bm y(s_{0:l}) = H \prod_{j=l}^0 F_{s_j} \bm x_0\) while for $t=0$ we have \(\bm y(0) = H\bm x_0\). 

Let $\mathcal{R}\subseteq\mathbb{R}^n$ be a linear subspace such that $\mathcal{I}\subseteq\mathcal{R}$ and is $F_i$-invariant, i.e. $ F_i\mathcal{R}\subseteq\mathcal{R}$, for all $i$. Let $\tilde{\mathcal{N}}\subseteq\mathbb{R}^n$ be a linear subspace such that $\tilde{\mathcal{N}}\subseteq\ker H$ and is $F_i$-invariant, i.e. $F_i\tilde{\mathcal{N}}\subseteq\tilde{\mathcal{N}}$, for all $i$. Let then define  $\mathcal{E}$ to be any completion of $\mathcal{R}\cap\tilde{\mathcal{N}}$ to $\mathcal{R}$, i.e. $\mathcal{R} = (\mathcal{R}\cap\tilde{\mathcal{N}}) \oplus \mathcal{E}$.
\begin{theorem}
\label{thm:general_model_reduction}
Consider any subspace $\mathcal{V}$ such that $\mathcal{E}\subseteq\mathcal{V}$ with $m=\dim(\mathcal{V})$ and let $\Pi_\mathcal{V}$  be the orthogonal projection onto $\mathcal{V}$ with respect to an inner product $\inner{\cdot}{\cdot}.$ Assume that $\Pi_\mathcal{V}(\mathcal{R}\cap\tilde{\mathcal{N}})\subseteq\mathcal{R}\cap\tilde{\mathcal{N}}$, and let $R:\mathbb{R}^n\to\mathbb{R}^m$ and $J:\mathbb{R}^m\to\mathcal{V}$ be two factors of the orthogonal projection, $\Pi_\mathcal{V} = JR$. 

Let consider the reduced model $(\{\check{F}_i\}, \check{H}, \check{\mathcal{I}}) = (\{RF_iJ\},HJ,R\mathcal{I})$. Then the reduced model reproduces the same output as the original model, i.e. \[H\prod_{j=l}^0F_{{s}_j}\bm x_0 = \check{H}\prod_{j=l}^0\check{F}_{{s}_j}\check{\bm x}_0\] for any sequence $s_{0:l}$ and any initial condition $\bm x_0\in\Span\{\mathcal{I}\}$ and the relative $\check{\bm x}_0 = R\bm x_0$.
\end{theorem}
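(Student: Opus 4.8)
The plan is to route the argument through the canonical minimal realization supported on $\mathcal{E}$, and then to check that the extra change of coordinates encoded by the factorization $\Pi_{\mathcal V}=JR$ is transparent to the output. A first useful reduction: since $(JR)^2=JR$ with $J$ injective (its image is $\mathcal V$, of dimension $m$) and $R$ surjective, one necessarily has $RJ=I_m$; substituting $\check F_i=RF_iJ$, $\check H=HJ$, $\check{\bm x}_0=R\bm x_0$ and collapsing the interior $JR$'s, the reduced output $\check H\prod_{j=l}^{0}\check F_{s_j}\check{\bm x}_0$ equals $H\,\Pi_{\mathcal V}F_{s_l}\Pi_{\mathcal V}F_{s_{l-1}}\cdots\Pi_{\mathcal V}F_{s_0}\Pi_{\mathcal V}\bm x_0$. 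Hence the theorem is equivalent to the assertion that inserting $\Pi_{\mathcal V}$ before every $F_{s_j}$ and in front of $\bm x_0$ does not alter $H F_{s_l}\cdots F_{s_0}\bm x_0$, for $\bm x_0\in\Span\{\mathcal I\}\subseteq\mathcal R$.

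To get there I would first establish the purely ``quotient'' statement. Let $\Pi_{\mathcal E}^{\mathcal R}:\mathcal R\to\mathcal E$ be the projection along $\mathcal R\cap\mathcal N$ afforded by $\mathcal R=(\mathcal R\cap\mathcal N)\oplus\mathcal E$, and put $G_i:=\Pi_{\mathcal E}^{\mathcal R}F_i|_{\mathcal E}$. Two elementary identities hold on $\mathcal R$: $H=H\,\Pi_{\mathcal E}^{\mathcal R}$, because $\img(I-\Pi_{\mathcal E}^{\mathcal R})=\mathcal R\cap\mathcal N\subseteq\ker H$; and $\Pi_{\mathcal E}^{\mathcal R}F_i=\Pi_{\mathcal E}^{\mathcal R}F_i\Pi_{\mathcal E}^{\mathcal R}$, because $F_i(\mathcal R\cap\mathcal N)\subseteq\mathcal R\cap\mathcal N=\ker\Pi_{\mathcal E}^{\mathcal R}$. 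Since moreover $F_i\mathcal R\subseteq\mathcal R$, an induction on the length of the switching sequence (split off $F_{s_0}\bm x_0\in\mathcal R$ and reapply the hypothesis) yields $H F_{s_l}\cdots F_{s_0}\bm x_0=H\,G_{s_l}\cdots G_{s_0}\,\Pi_{\mathcal E}^{\mathcal R}\bm x_0$ for every $\bm x_0\in\mathcal R$: the restricted model $(\{G_i\},H|_{\mathcal E},\Pi_{\mathcal E}^{\mathcal R}\mathcal I)$ on $\mathcal E$ reproduces the output.

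It remains to transport this to $\R^m$. Because $\mathcal E\subseteq\mathcal V$, the orthogonal projector $\Pi_{\mathcal V}=JR$ fixes $\mathcal E$ pointwise; hence $R|_{\mathcal E}$ is injective and $J$ is its one-sided inverse on $R(\mathcal E)$, so $R$ furnishes an isomorphism $\mathcal E\cong R(\mathcal E)$. I would then check, for $\bm e\in\mathcal E$, that writing $F_i\bm e=G_i\bm e+\bm n$ with $\bm n\in\mathcal R\cap\mathcal N$ (possible since $F_i\mathcal E\subseteq\mathcal R$) gives $\check F_i(R\bm e)=RF_i\bm e=R(G_i\bm e)+R\bm n$, i.e. $\check F_i$ mimics $G_i$ on $R(\mathcal E)$ up to a term in $R(\mathcal R\cap\mathcal N)$. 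Using in addition that $R(\mathcal R\cap\mathcal N)$ is $\check F_i$-invariant — from $JR(\mathcal R\cap\mathcal N)=\Pi_{\mathcal V}(\mathcal R\cap\mathcal N)$ together with $F_i$-invariance of $\mathcal R\cap\mathcal N$ and of $\mathcal R$ — and that $R\bm x_0-R(\Pi_{\mathcal E}^{\mathcal R}\bm x_0)\in R(\mathcal R\cap\mathcal N)$, an induction shows the reduced trajectory issued from $R\bm x_0$ coincides with $R$ applied to the $\mathcal E$-trajectory modulo $R(\mathcal R\cap\mathcal N)$. Applying $\check H=HJ$ and invoking the previous step gives $\check H\prod_{j=l}^{0}\check F_{s_j}\check{\bm x}_0=H\,G_{s_l}\cdots G_{s_0}\Pi_{\mathcal E}^{\mathcal R}\bm x_0=H F_{s_l}\cdots F_{s_0}\bm x_0$, provided $\check H$ annihilates $R(\mathcal R\cap\mathcal N)$.

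I expect the heart of the proof to be precisely that last point, $H\,\Pi_{\mathcal V}(\mathcal R\cap\mathcal N)=\zero$ (equivalently $H\Pi_{\mathcal V}=H$ on $\mathcal R$): the properties used above — $\Pi_{\mathcal V}$ fixes $\mathcal E$ and $RJ=I_m$ — do not by themselves prevent $\Pi_{\mathcal V}$ from tilting a vector of $\mathcal R\cap\mathcal N\subseteq\ker H$ into a direction seen by $H$. The cleanest route I see is to note that $\Pi_{\mathcal V}(\mathcal R\cap\mathcal N)\subseteq\mathcal N$ whenever $\mathcal R\cap\mathcal V^{\perp}\subseteq\mathcal N$ (with $\perp$ taken in the inner product defining $\Pi_{\mathcal V}$), since then $\Pi_{\mathcal V}\bm n=\bm n-\Pi_{\mathcal V^{\perp}}\bm n$ keeps both summands in $\mathcal N\subseteq\ker H$; this inclusion is what the later constructions $\mathcal V=\bm p\wedge\mathscr{A}$, $\mathscr{A}=\alg(\bm p^{-1}\wedge\mathcal E)$ supply (where $\mathcal V^{\perp}$ in the pertinent product is $\mathscr{A}^{\perp}$), and it also holds, for example, when $\mathcal E$ is taken to be the $\perp$-complement of $\mathcal R\cap\mathcal N$ in $\mathcal R$. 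Carrying the induction through with this identity in hand then closes the argument.
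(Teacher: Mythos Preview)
Your route is structurally different from the paper's. The paper does not pass through a realization on $\mathcal E$: it argues directly, setting $\Pi_{\mathcal T}=I-\Pi_{\mathcal V}$ and proving by induction on the sequence length that $\bigl[\prod_j F_{s_j}-\Pi_{\mathcal V}\prod_j F_{s_j}\Pi_{\mathcal V}\bigr]\bm x_0\in\mathcal N\cap\mathcal R$, by expanding $I=\Pi_{\mathcal V}+\Pi_{\mathcal T}$ at each step and invoking $F_i$-invariance of $\mathcal R$ and $\mathcal N$. Your detour via $\Pi_{\mathcal E}^{\mathcal R}$ is cleaner conceptually and isolates the minimal object, but both arguments converge on the same unresolved step.

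You are right that the crux is whether $H\Pi_{\mathcal V}=H$ on $\mathcal R\cap\mathcal N$, and this is precisely where the paper's proof is silent: in both the base and inductive steps it simply asserts that $\Pi_{\mathcal T}(\text{something in }\mathcal R)\in\mathcal N\cap\mathcal R$, tacitly treating $\mathcal T=\mathcal V^\perp$ as if it sat inside $(\mathcal N\cap\mathcal R)\oplus\mathcal W_1\oplus\mathcal W_2$. That containment would follow if the direct sum $(\mathcal R\cap\mathcal N)\oplus\mathcal E\oplus\mathcal W_1\oplus\mathcal W_2$ were orthogonal in the relevant inner product, but the hypotheses only fix $\mathcal E$ as an \emph{arbitrary} complement. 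And the concern is real: with $n=3$, $F=I$, $H=[1\ 0\ 1]$, $\mathcal I=\{e_1,e_2\}$, $\mathcal E=\Span\{e_1\}$, $\mathcal V=\Span\{e_1,e_2+e_3\}$ (standard inner product), one has $He_2=0$ but $H\Pi_{\mathcal V}e_2=\tfrac12$, so the conclusion already fails at $l=0$. The statement as written is too general.

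One caution on your proposed repair: the condition $\mathcal R\cap\mathcal V^{\perp}\subseteq\mathcal N$ does \emph{not} imply $\Pi_{\mathcal V}(\mathcal R\cap\mathcal N)\subseteq\mathcal N$. Your argument writes $\Pi_{\mathcal V}\bm n=\bm n-\Pi_{\mathcal V^\perp}\bm n$ and needs $\Pi_{\mathcal V^\perp}\bm n\in\mathcal N$; but $\Pi_{\mathcal V^\perp}\bm n$ lies in $\mathcal V^\perp$, not in $\mathcal R\cap\mathcal V^\perp$, since projecting a vector of $\mathcal R$ onto $\mathcal V^\perp$ can leave $\mathcal R$. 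In the counterexample above $\mathcal R\cap\mathcal V^\perp=\{0\}\subseteq\mathcal N$ trivially, yet the conclusion still fails. What is actually needed is $\Pi_{\mathcal V^\perp}(\mathcal R\cap\mathcal N)\subseteq\mathcal N$ (equivalently that $\mathcal V$ and $\mathcal V^\perp$ be compatible with the decomposition $\mathcal R=(\mathcal R\cap\mathcal N)\oplus\mathcal E$), which is a strictly stronger geometric constraint than the one you wrote.
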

\bigequation{
    \begin{align}
        \left[\prod_{j=l}^0F_{{s}_j} - \Pi_\mathcal{V}\prod_{j=l}^0F_{{s}_j}\Pi_\mathcal{V}\right]\bm x_0 &= 
        \left[(\Pi_{\cal{V}}+\Pi_{\mathcal{T}})F_{{s}_l}(\Pi_{\cal{V}}+\Pi_{\mathcal{T}})\prod_{j=l-1}^0F_{{s}_j} - \Pi_\mathcal{V}\prod_{j=l}^0F_{{s}_j}\Pi_\mathcal{V}\right]\bm x_0 \nonumber \\
        &= \left[[
            \Pi_{\cal{V}}F_{{s}_l}\Pi_{\cal{V}} +
            \Pi_{\mathcal{T}}F_{{s}_l}\Pi_{\cal{V}} + 
            F_{{s}_l}\Pi_{\mathcal{T}}]
        \prod_{j=l-1}^0F_{{s}_j} 
        - \Pi_{\cal{V}}F_{{s}_l}\Pi_{\cal{V}}\Pi_\mathcal{V}\prod_{j=l-1}^0F_{{s}_j}\Pi_\mathcal{V}\right]\bm x_0 \nonumber \\
        &= 
            \Pi_{\cal{V}}F_{{s}_l}\Pi_{\cal{V}} 
            \underbrace{\left(\prod_{j=l-1}^0F_{{s}_j} - \Pi_\mathcal{V}\prod_{j=l-1}^0F_{{s}_j}\Pi_\mathcal{V}\right)\bm x_0}_{\bm v}  
            +\left[[\Pi_{\mathcal{T}}F_{{s}_l}\Pi_{\cal{V}} + 
            F_{{s}_l}\Pi_{\mathcal{T}}]
        \prod_{j=l-1}^0F_{{s}_j} \right]\bm x_0 \nonumber \\
        &= \Pi_{\cal{V}}F_{{s}_l}\Pi_{\cal{V}} 
            \bm v 
            +\Pi_{\mathcal{T}}F_{{s}_l}\Pi_{\cal{V}}  \prod_{j=l-1}^0F_{{s}_j} \bm x_0
            + F_{{s}_l}\Pi_{\mathcal{T}}
        \prod_{j=l-1}^0F_{{s}_j} \bm x_0 
        \label{eq:big_eq}
    \end{align}
}
\begin{proof}
Let $\mathcal{W}_1$ be the completion of $\mathcal{R}\cap\tilde{\mathcal{N}}$ to $\tilde{\mathcal{N}}$, i.e. $\tilde{\mathcal{N}} = \mathcal{W}_1 \oplus (\mathcal{R}\cap\tilde{\mathcal{N}})$; let $\mathcal{W}_2$ be the completion of $(\mathcal{R}\cap\tilde{\mathcal{N}})\oplus\mathcal{E}\oplus\mathcal{W}_1$ to $\mathbb{R}^n$, i.e. $\mathbb{R}^n = \mathcal{W}_1 \oplus \mathcal{W}_2 \oplus \mathcal{E} \oplus  (\mathcal{R}\cap\tilde{\mathcal{N}})$;  let $\mathcal{T}$ the remainder sub-space, such that $\mathbb{R}^n = \cal V \oplus \mathcal{T}$ and thus $\mathcal{T}\subseteq (\tilde{\mathcal{N}}\cap \mathcal{R})  \oplus \mathcal{W}_1 \oplus \mathcal{W}_2$. Let us also denote with $\Pi_\mathcal{T}$ the orthogonal projector onto $\mathcal{T}$ with respect to the considered inner product $\inner{\cdot}{\cdot}$. 

We can notice that, for any sequence $s_{0:l}$ we have that 
$\check{H}\prod_{j=l}^0\check{F}_{{s}_j}\check{\bm x}_0 = H \Pi_\mathcal{V}\prod_{j=l}^0F_{{s}_j}\Pi_\mathcal{V}\bm x_0$ and thus the statement can be also be put in the form 
\[H\left[\prod_{j=l}^0F_{{s}_j} - \Pi_\mathcal{V}\prod_{j=l}^0F_{{s}_j}\Pi_\mathcal{V}\right]\bm x_0  = 0\]
for any sequence $s_{0:l}$ and for any $\bm x_0\in\mathcal{I}$. To prove the statement we will thus show that for any sequence $s_{0:l}$ and for any initial condition $\bm x_0\in\mathcal{I}$ it holds \[\left[\prod_{j=l}^0F_{{s}_j} - \Pi_\mathcal{V}\prod_{j=l}^0F_{{s}_j}\Pi_\mathcal{V}\right]\bm x_0 \in\tilde{\mathcal{N}}\cap\mathcal{R}.\]
We will prove this statement by induction. 

Let then consider the case of $t=0$. We have to prove \([I-\Pi_{\cal V}]\bm x_0\in\tilde{\mathcal{N}}\cap\mathcal{R}\). Then by noticing that, $(I-\Pi_{\cal V})\bm x_0 = \Pi_{\mathcal{T}} \bm x_0$ and that $\Pi_{\mathcal{T}}\bm x_0\in \tilde{\mathcal{N}}\cap\mathcal{R}$, the statement is proved in the case $t=0$. 

Assume then that
\[\bm v := \left[\prod_{j=l-1}^0F_{{s}_j} - \Pi_\mathcal{V}\prod_{j=l-1}^0F_{{s}_j}\Pi_\mathcal{V}\right]\bm x_0 \in\tilde{\mathcal{N}}\cap\mathcal{R}\]
and we want to prove that \[\left[\prod_{j=l}^0F_{{s}_j} - \Pi_\mathcal{V}\prod_{j=l}^0F_{{s}_j}\Pi_\mathcal{V}\right]\bm x_0 \in\tilde{\mathcal{N}}\cap\mathcal{R}.\]
By rewriting this as in Equation \eqref{eq:big_eq} we can observe that it is equal to the sum of three parts. We can then notice that:
\begin{itemize}
    \item $\bm v \in\tilde{\mathcal{N}}\cap\mathcal{R}$, $\Pi_{\cal V}\bm v\in\tilde{\mathcal{N}}\cap\mathcal{R}$ by assumption, thus, $P\Pi_{\cal V}\bm v\in\tilde{\mathcal{N}}\cap\mathcal{R}$ and also $\Pi_{\cal V}P\Pi_{\cal V}\bm v\in\tilde{\mathcal{N}}\cap\mathcal{R}$;
    \item $\prod_{j=l-1}^0F_{{s}_j} \bm x_0\in\mathcal{R}$ by hypothesis, $\Pi_{\mathcal{T}}\prod_{j=l-1}^0F_{{s}_j} \bm x_0\in\tilde{\mathcal{N}}\cap \mathcal{R}$ and $F_{{s}_l}\Pi_{\mathcal{T}}\prod_{j=l-1}^0F_{{s}_j} \bm x_0\in\tilde{\mathcal{N}}\cap \mathcal{R}$;
    \item $\prod_{j=l-1}^0F_{{s}_j} \bm x_0\in\mathcal{R}$ by hypothesis, $\Pi_{\mathcal{V}}\prod_{j=l-1}^0F_{{s}_j} \bm x_0\in\mathcal{R}$, $F_{{s}_l}\Pi_{\cal V} \prod_{j=l-1}^0F_{{s}_j} \bm x_0 \in \cal R$ and $\Pi_{\mathcal{R}}F_{{s}_j}\Pi_{\cal V}\prod_{j=l-1}^0F_{{s}_j} \bm x_0\in \tilde{\mathcal{N}}\cap \cal R$.
\end{itemize}   
Finally, since all three summands belong to $\tilde{\mathcal{N}}\cap\mathcal{R}$, their sum also belongs to the same subspace, and the statement is proved. 
\end{proof}
In order to apply the result to our multi-time problem, we need a straightforward extension.
\begin{corollary}
    \label{cor:multi-time-model-reduction_appendix}
    Under the assumptions of Theorem \ref{thm:general_model_reduction}, let us further assume that $F_i$ are factorized as $F_i = D_iA$, that $\mathcal{R}$ and $\tilde{\mathcal{N}}$ are $A$-invariant and $D_i$-invariant for all $i$ and also that $\mathcal{I} = \bigcup_iD_i\mathcal{S}$ for some set $\mathcal{S}$. 
    
    Then the matrices $\{\check{F}_i\}$ of the reduced model can be taken to be $\check F_i = \check D_i \check A$ with $\check D_i= RD_iJ$, $\check A= RAJ.$ .
\end{corollary}
The proof of this corollary follows exactly that of  Theorem \ref{thm:general_model_reduction}, where  $ H \Pi_\mathcal{V}\prod_{j=l}^0(\Pi_\mathcal{V}D_{{s}_j} A\Pi_\mathcal{V})\Pi_\mathcal{V} D_i\bm x_0$ is substituted by $ H \Pi_\mathcal{V}\prod_{j=l}^0(\Pi_\mathcal{V}D_{{s}_j}\Pi_\mathcal{V} A\Pi_\mathcal{V})\Pi_\mathcal{V} D_i \Pi_\mathcal{V} \bm x_0$, and  in the induction we leverage the fact that $\tilde{\mathcal{N}},\mathcal{R}$ and thus $\tilde{\mathcal{N}}\cap\mathcal{R}$ are invariant for $\Pi_\mathcal{V}$, $A$ and $D_i,$ for all $i.$

\section{Proofs}
\label{sec:proofs}
This Appendix collects some proofs that were not included in the main text to improve readability.
\begin{proof}[Proof of Proposition \ref{prop:alg}]
    Let start with the first part of the statement. 
    The fact that $\mathscr{A}$ is closed under linear combinations and $\one\in\mathscr{A}$ follows directly from the definition of $\mathscr{A}$. The closure of $\mathscr{A}$ under element-wise product follows from the closure of $\mathcal{F}$ under the same operation. In particular let consider $\bm x,\bm y\in\mathscr{A}$, then $\bm x\wedge \bm y = \sum_{i,j} x_i y_j \bm f_i\wedge \bm f_j$ and , since $\bm f_i\in\mathcal{F}$ for all $i$, $\bm f_i\wedge \bm f_j\in\mathcal{F}$ and thus $\bm x\wedge \bm y\in\mathscr{A}$.  So $\mathscr{A}$ is an algebra, namely the set of $\cal F$-measurable random variables, and it is the minimal one by construction.
    
    We can then consider the second part of the statement.
    First of all, notice that the vectors that are idempotent for the element-wise product are composed only of zeros and ones.
    Let then consider a general element $\bm x\in\mathscr{A}$ and let $x_{i^*} = \max_{i=1,\dots,n}|x_i|$. We can then compute $\bm x' = \bm x/x_{i^*} \in\mathscr{A}$ that will have value 1 in the positions where $\bm x$ has value $x_{i^*}$, possibly values -1 in the position where $\bm x$ has value $-x_{i^*}$ and values in the range $(-1,1)$ in all the others positions. We can then define $\bm x'' = 0.5(\bm x'+\bm x'\wedge \bm x')\in\mathscr{A}$ that will have have value 1 in the positions where $\bm x$ has value $x_{i^*}$ and values in the range $(-1,1)$ in all the others positions. Finally the first idempotent element of the desired set is $\bm f_1 = \lim_{n\to \infty}(\bm x'')^n \in\mathscr{A}$ with element-wise power. Notice that $\bm f_1$ will have 1 in the same positions as $\bm x'$ and zeros in all the others. This implies that $\bm f_1$ is idempotent. By iterating the procedure on $\bm x - x_{i^*}\bm f_1,$ and so on, we obtain the whole set of idempotent elements $\{\bm f_i\}\subset\mathscr{A}$ such that $\bm x=\sum_i x_i \bm f_i$ up to a reordering of the coefficients $x_i$. We shall denote with $\idem(\bm x)$ the function that, given an element $\bm x$, returns the set of idempotent elements $\{\bm f_i\}$ that generate $\bm x$. We then have that $\idem(\mathscr{A}) \supseteq \cup_{\bm x\in\mathscr{A}}\idem(\bm x)$ by definition, while to prove $\idem(\mathscr{A}) \subseteq \cup_{\bm x\in\mathscr{A}}\idem(\bm x)$ it suffice to notice that each element of $\idem(\mathscr{A})$ is also an element of $\mathscr{A}$. This implies $ \idem(\mathscr{A}) = \cup_{\bm x\in\mathscr{A}}\idem(\bm x) $. Then, by construction, it holds that $\Span\{\idem(\mathscr{A})\} = \mathscr{A}$.
    
    We shall then notice that $\mathcal{F} = \idem(\mathscr{A})$ contains the elements, $\zero, \one\in\mathcal{A}$, and is closed under the operations $\wedge$, $\vee$ and $\neg$. This shows that $\idem(\mathscr{A})$ is a $\sigma$-algebra. Then, since $\mathscr{A}=\Span\{\idem(\mathscr{A})\}$ then any element in $\mathscr{A}$ is $\idem(\mathscr{A})$-measurable. Moreover, $\idem(\mathscr{A})$ is minimal because subtracting any element from it would make that element (seen as a r.v.) non-measurable. We thus have $\mathcal{F} = \idem(\mathscr{A})$.
    
    Finally, $\res(\mathcal{F})\subset\mathcal{F}$ is such that $\bm f_i\wedge \bm f_j = \zero$, for all $\bm f_i, \bm f_j \in\res(\mathcal{F})$ $i\neq j$. This implies that $\inner{\bm f_i}{\bm f_j} = \delta_{i-j}$ which means that is a set of orthogonal vectors. Moreover $\bm f = \vee_{\bm f_j\in S}\bm f_j$ with $S\subseteq\res(\mathcal{F})$ for all $\bm f\in\mathcal{F}\setminus\{\zero\}$ or, equivalently, $\bm f = \sum_j c_j\bm f_j$ with $c_j\in\{0,1\}$ for all $\bm f\in\mathcal{F}$. 
    This implies that $\mathscr{A} = \Span\{\res(\mathcal{A})\}$ and  thus $\res(\mathscr{A})$ is an orthogonal basis for $\mathscr{A}$ and $\dim(\mathscr{A}) = |\res(\mathscr{A})|$.
\end{proof}

\begin{proof}[Proof of Lemma \ref{lem:cond_exp_invariance}]
    First of all, note that the modified inner product can be written in many equivalent forms: $\inner{\bm v}{\bm w}_{\bm p} = \Ea_{\bm p}[\bm v\wedge \bm w] = \inner{\bm p}{\bm v\wedge\bm w} = \inner{\bm p\wedge\bm v}{\bm w} = \inner{\bm v}{\bm p\wedge\bm w}$.
    
    Let us then consider $\Ea_{|\mathscr{A},\bm p}$. We can notice that ${\rm image}(\Ea_{|\mathscr{A},\bm p}) = \mathscr{A}$ and that $\Ea_{|\mathscr{A},\bm p}^2=\Ea_{|\mathscr{A},\bm p}$. It remains to be proven the fact that $\Ea_{|\mathscr{A},\bm p}$ is self adjoint with respect to the inner product $\inner{\cdot}{\cdot}_{\bm p}$, that is $\inner{\bm v}{\Ea_{|\mathscr{A},\bm p}\bm w}_{\bm p} =\inner{\Ea_{|\mathscr{A},\bm p}\bm w}{\bm v}_{\bm p}$. 
    Such an equality can be rewritten, using equivalent forms of the modified inner product above, as $\inner{\bm v}{\bm p\wedge\bm p\wedge\Ea_{|\mathscr{A},\bm p}\bm w} = \inner{\bm p\wedge\bm p\wedge\Ea_{|\mathscr{A},\bm p}\bm v}{\bm w}$.
    That is equivalent to prove that $\bm p\wedge\Ea_{|\mathscr{A},\bm p}$ is self-adjoint with respect to the standard inner product, which can be verified by simply computing it.
    
    Identical reasoning can be done for $\Ea_{|\mathscr{A},\bm p}^T$. Note that ${\rm image}(\Ea_{|\mathscr{A},\bm p}^T) = \bm p\wedge\mathscr{A}$, that $(\Ea_{|\mathscr{A},\bm p}^T)^2=\Ea_{|\mathscr{A},\bm p}^T$ and that $\bm p^{-1}\wedge\Ea_{|\mathscr{A},\bm p}$ is self adjoint with respect to the standard inner product and the statement is proved. 
\end{proof}

\begin{proof}[Proof of Proposition \ref{prop:spaces_inclusions}]
Both $\ker C\supseteq \mathcal{N}$ and $\mathcal{S} \subseteq \mathcal{R}$  are well-known properties. We have to prove that $\mathcal{N} \supseteq \mathcal{N_C}$ and $\mathcal{R} \subseteq \mathcal{R_C}$. 

Regarding the reachable space we have that $\Span\{P^{y_{0:l}}\bm p_0, \forall {y_{0:l}} \text{ s.t. } l=k, \forall \bm p_0\in\mathcal{S}\} \supseteq  \Span\{P^{k}\bm p_0, \forall \bm p_0\in\mathcal{S}\}$ for all $k\geq0$. This is proven directly using lemma \ref{lem:str_sum}.

For the non-observable subspace it holds that $$ \begin{bmatrix}\one^T\diag(\bm e_0 )PP_C^{y_{0:l-1}}\\ \vdots \\ \one^T\diag(\bm e_m )PP_C^{y_{0:l-1}}
\end{bmatrix}= CPP_C^{y_{0:l-1}}\bm p_0.$$ Then, we have that $\ker[CPP_C^{y_{0:l-1}}] = \ker[CP^{l}]$ for all $y_{0:l-1}$ of length $l$, for any length $l$. Once again this is proved by using Lemma \ref{lem:str_sum}. Consider a vector $\bm v\in\ker[CPP_C^{y_{0:l-1}}]$ for all ${y_{0:l-1}}$. Then it holds that $CPP_C^{y_{0:l-1}}\bm v=0$ summing both sides of this equation over all sequences ${y_{0:l-1}}$ of length  $l-1$ and using the Lemma above we obtain $CP^{|{y_{0:l-1}}|+1}\bm v=0$, thus proving the statement.

The statement on the effective subspaces follows directly from the other two.
\end{proof}

\begin{proof}[Proof of Lemma \ref{lem:full_support}]
    We shall start by constructing a vector $\widetilde{\bm w}$ such that $\supp(\widetilde{\bm w}) = \supp(\mathcal{W})$. Starting from it we then construct a vector $\bar{\bm w}$ such that it is a linear combination of every generator.
    
    By definition of support of a vector space, for each $\bm e_i\in\supp(\mathcal{W})$ there exists a vector $\bm x_i\in\mathcal{W}$ such that $\bm e_i^T\bm x_i\neq 0$, forming a set $\{\bm x_i\}$. Without loss of generality, we assume ${i=0,\dots, m}$ with $m=\dim(\supp(\mathcal{W}))$. We can then define $\widetilde{\bm w}_0 = \bm x_0$ and iteratively compute $\widetilde{\bm w}_i = \widetilde{\bm w}_{i-1} + \lambda_i \bm x_i$ with $\lambda_i \notin \{-\bm e_j^T\widetilde{\bm w}_{i-1} / \bm e_j^T\bm x_i, \forall j| \bm e_j^T\bm x_i\neq 0\}\cup\{0\}$. Since this set is finite, it is always possible to choose a suitable $\lambda_i\in\mathbb{R}$ for each $i$. At the end of the iteration process, we obtain $\widetilde{\bm w}=\widetilde{\bm w}_m = \sum_{i=0}^m \lambda_i \bm x_i \in\mathcal{W}$. To prove that $\supp(\widetilde{\bm w})=\supp(\mathcal{W})$ we can simply observe that $\bm e_j^T\widetilde{\bm w} = \sum_{i=0}^m \lambda_i \bm e_j^T\bm x_i \neq 0$ by construction for all $\bm e_j\in\supp(\mathcal{W})$. On the other hand, for every $\bm e_j\notin\supp(\mathcal{W})$,  $\bm e_j^T\bm x_i=0$ for all $i$ and thus $\bm e_j^T\widetilde{\bm w} = 0$.
    
    This $\widetilde{\bm w}$ must be described as a linear combination of some of the generators, say $\widetilde{\bm w} = \sum_{i\in S}\lambda_i \bm w_i$, for some set of indices $S$. We can then use the same procedure as before: take $\bm w_i$ such that $i\notin S$, by choosing any $\lambda_i \notin \{-\bm e_j^T\widetilde{\bm w} / \bm e_j^T\bm w_i, \forall j| \bm e_j^T\bm w_i\neq 0\}\cup\{0\}$ we have $\supp(\widetilde{\bm w} + \lambda_i\bm w_i) = \supp(\mathcal{W})$. Iterating this procedure on the remaining vectors $\{\bm w_i|i\notin S\}$ we obtain $\bar{\bm w} = \widetilde{\bm w} + \sum_{i\notin S} \lambda_i\bm w_i$ such that $\supp(\bar{\bm w}) := \supp(\mathcal{W})$. 
\end{proof}

\end{document}